\documentclass{article}
\usepackage{CJKutf8}
\usepackage{tabularx}
\usepackage{microtype}
\usepackage{graphicx}
\usepackage{subcaption}
\usepackage{booktabs} % for professional tables
\usepackage{marvosym}

\usepackage{minitoc}

\usepackage{hyperref}

\usepackage{microtype} %
\usepackage[accepted]{icml2026}

\usepackage{amsmath}
\usepackage{amssymb}
\usepackage{mathtools}
\usepackage{amsthm}
\usepackage[most]{tcolorbox}

\usepackage{booktabs}
\usepackage{multirow}
\usepackage{pifont} 
\usepackage{xcolor} 
\usepackage{colortbl} 
\usepackage{graphicx} % for resizebox

\definecolor{myteal}{HTML}{62D4C5}
\definecolor{myblue}{HTML}{3266E3}
\usepackage{booktabs, enumitem, natbib}

\usepackage[capitalize,noabbrev]{cleveref}

\theoremstyle{plain}
\newtheorem{theorem}{Theorem}[section]

\newtheorem{lemma}[theorem]{Lemma}

\theoremstyle{definition}

\theoremstyle{remark}

\definecolor{myblue}{HTML}{3266E3}

\definecolor{colorFull}{HTML}{EAF2FF}    % 淡蓝: Full-Parameter
\definecolor{colorLoRA}{HTML}{FFF4E5}    % 淡橙: LoRA-style
\definecolor{colorDistill}{HTML}{E8F5E9} % 淡绿: EDistill
\colorlet{colorDistill}{colorDistill!90}

\usepackage[textsize=tiny]{todonotes}

\newif\ifshowtodo
\showtodotrue   % 改成 \showtodofalse 可关闭
\ifshowtodo
  \newcommand{\TODO}[1]{%
    \textcolor{red}{\textbf{TODO:} #1}%
  }
  \newcommand{\CTODO}[1]{%
    \begin{CJK}{UTF8}{gbsn}%
    \TODO{#1}%
    \end{CJK}%
  }
\else
  \newcommand{\TODO}[1]{}
  \newcommand{\CTODO}[1]{}
\fi

\icmltitlerunning{Reasoning-preserved Efficient Distillation of Large Language Models via Activation-aware Initialization}

\begin{document}

\twocolumn[
  \icmltitle{Reasoning-preserved Efficient Distillation of Large Language Models via Activation-aware Initialization}

  \icmlsetsymbol{equal}{\Cancer}

  \begin{icmlauthorlist}
    \icmlauthor{Junlin He}{equal,polyu}
    \icmlauthor{Yihong Tang}{equal,mcgill}
    \icmlauthor{Tong Nie}{polyu}
    \icmlauthor{Guilong Li}{polyu}
    \icmlauthor{Binyu Yang}{polyu}
    \icmlauthor{Jinxiao Du}{polyu}
    \icmlauthor{Lijun Sun}{mcgill}
    \icmlauthor{Wei Ma \textsuperscript{\Letter}}{polyu}
  \end{icmlauthorlist}

  \icmlaffiliation{polyu}{The Hong Kong Polytechnic University, Hong Kong SAR, China}
  \icmlaffiliation{mcgill}{McGill University, Montreal, QC, Canada. \textsuperscript{\Cancer}Equal Contribution}

  \icmlcorrespondingauthor{Wei Ma}{wei.w.ma@polyu.edu.hk}

  \icmlkeywords{Machine Learning, ICML}

  \vskip 0.3in
]

% this must go after the closing bracket ] following \twocolumn[ ...

% This command actually creates the footnote in the first column listing the
% affiliations and the copyright notice. The command takes one argument, which
% is text to display at the start of the footnote. The \icmlEqualContribution
% command is standard text for equal contribution. Remove it (just {}) if you
% do not need this facility.

% Use ONE of the following lines. DO NOT remove the command.
% If you have no special notice, KEEP empty braces:
\printAffiliationsAndNotice{}  % no special notice (required even if empty)
% Or, if applicable, use the standard equal contribution text:
% \printAffiliationsAndNotice{\icmlEqualContribution}

\doparttoc % 初始化局部目录
\faketableofcontents % 伪造全局目录（minitoc 需要读取它）

\begin{abstract}

Efficient Distillation (EDistill) compresses large language models (LLMs) by structured pruning parameters and tuning lightweight modules with high training efficiency. Although these EDistilled LLMs achieve state-of-the-art (SOTA) performance on general ability benchmarks relative to similarly sized LLMs, we identify a severe degradation in their multi-step reasoning ability, which we term \emph{reasoning collapse}.
We systematically analyze the geometric origins of \emph{reasoning collapse} and show that the SOTA  EDistill method based on width-reducing projection matrices suffers from \emph{eRank collapse}, in which the effective rank (eRank) of hidden representations drops. We theoretically explain how singular values of randomly initialized projection matrices become unevenly distributed, leading to \emph{eRank collapse} and thus token indistinguishability. 
To address this issue, we propose \textbf{RED} (\textbf{R}easoning-preserved \textbf{E}fficient \textbf{D}istillation) for LLMs, which introduces \emph{activation-aware initialization} to initialize projection matrices as channel-selection matrices, thus theoretically mitigating \emph{eRank collapse}. Experiments on Llama and Qwen series demonstrate that RED substantially recovers reasoning while maintaining high training efficiency and SOTA general ability.

\end{abstract}

% \CTODO{Appendix 参考 LRC 写具体实现}

% \CTODO{AI review，多次迭代}

% \CTODO{Appendix 写具体实现}

% \CTODO{Impact Statement}

% % \CTODO{Figure 1}

% \CTODO{缩掉一些行尾}

\section{Introduction}

Large Language Models (LLMs) have achieved impressive performance across a wide spectrum of Natural Language Processing (NLP) tasks~\cite{achiam2023gpt, dubey2024llama, guo2025deepseek, team2024qwen2, yang2025qwen3}. However, their deployment in real-world applications is often constrained by prohibitive memory and computational costs~\cite{thompson2020computational}. This has catalyzed the development of Small Language Models (SLMs) that aim to retain strong ability while operating within significantly reduced resource budgets.

\begin{figure}[t]
    \centering
    \includegraphics[width=1.0\columnwidth]{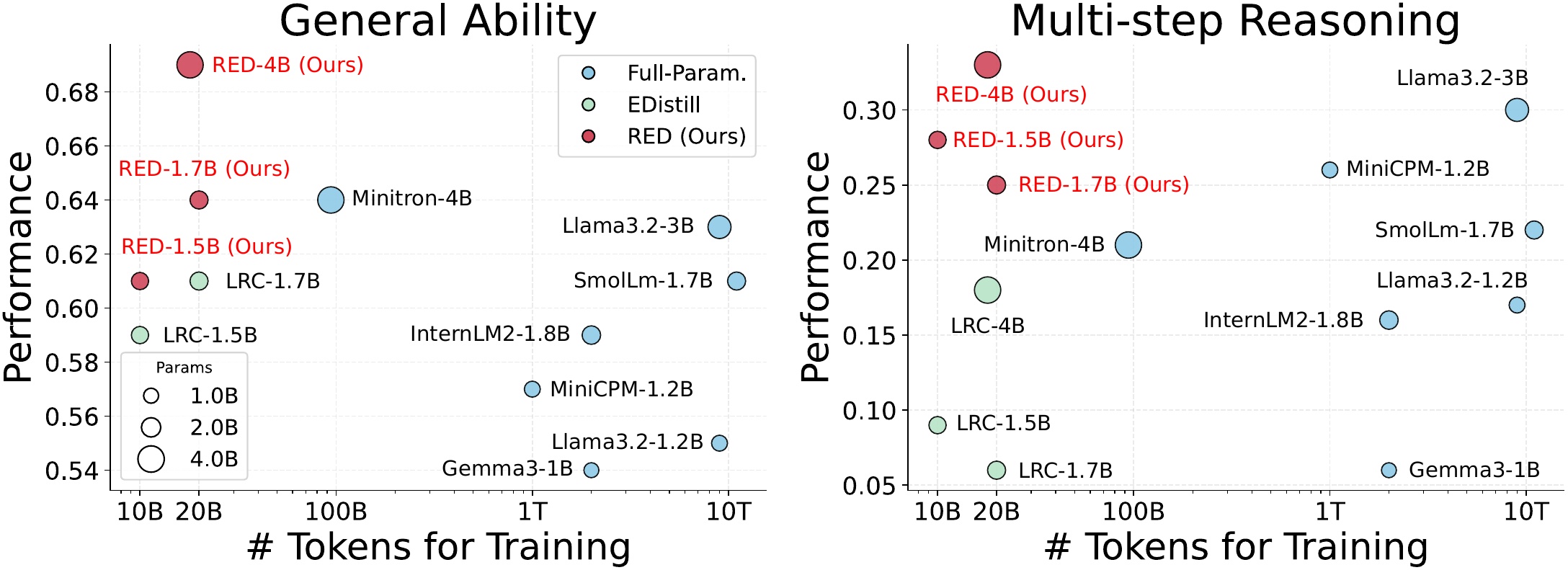}
    \vspace{-14pt}
    \caption{Training efficiency vs.\ performance. General ability (left) and multi-step reasoning (right) performance versus training tokens for Full-parameter training SLMs (Base versions), EDistill baselines, and our method, RED. While existing EDistill methods excel on general tasks, they exhibit a sharp performance drop on reasoning tasks, which our method successfully mitigates.}
    \vspace{-20pt}
    \label{fig:fig1}
\end{figure}

Despite their compact size, training high-performing SLMs from scratch remains an extremely resource-intensive endeavor. For instance, training SmolLM2-1.7B requires 11 trillion tokens and takes several weeks on hundreds of A100 GPUs, costing approximately \$250,000~\cite{allal2025smollm2}. To mitigate these costs, structured pruning followed by post-pruning recovery has emerged as a promising paradigm~\cite{zhu2024survey, wang2024model, van2023llm, hu2024sp3, ramesh2023comparative}. It typically involves compressing a large teacher model into a hardware-friendly student model and subsequently retraining it to recover performance.
Within this paradigm, recovery strategies can be broadly categorized into: full-parameter retraining~\cite{xia2023sheared, muralidharan2024compact,dubey2024llama}, which demands substantial computational budgets; LoRA-style retraining~\cite{ma2023llm, ashkboos2024slicegpt}, which offers parameter efficiency but often fails to close the fidelity gap with the teacher~\cite{chen2024streamlining}; and efficient distillation (EDistill) via lightweight modules. 

Such EDistill methods either merge multiple Transformer layers into one learnable layer to reduce depth (\emph{depth-reduced}) \cite{chen2024streamlining,shopkhoev2025replaceme}, or insert learnable projection matrices \cite{hao2025token} into the teacher model to reduce the hidden size (\emph{width-reduced}). During training of these methods, all parameters of teacher models are frozen, achieving state-of-the-art (SOTA) results on general ability benchmarks relative to similarly sized LLMs with minimal data and computational resources.

However, we observe that EDistilled LLMs suffer from a severe performance degradation in multi-step reasoning tasks, such as \emph{Mathematical Reasoning} and \emph{Code Generation} (see Figure~\ref{fig:fig1}, right). We term this phenomenon \emph{reasoning collapse} (Section \ref{sec:Reasoning Collapse in EDistill LLMs}). Moreover, we show that the SOTA \emph{width-reduced} EDistill method (i.e., LRC), using projection matrices, suffers from
\emph{eRank collapse} (a sharp decline in \emph{effective rank} in hidden representations). Through theoretical analysis, we show that when employing learnable projection matrices for dimension alignment, the gradients of the projection matrices' singular values are proportional to those singular values. Consequently, certain singular values exhibit exponential growth, leading to the space of projection matrices and representations being dominated by a few large directions. Therefore, tokens become increasingly indistinguishable, which fundamentally undermines multi-step reasoning~\cite{fu2026attention}.

%\cite{mamidanna2025all,fu2026attention}.

To address this issue, we propose \textbf{R}easoning-preserved \textbf{E}fficient \textbf{D}istillation (\textbf{RED}). RED introduces  \textbf{activation-aware initialization} for projection matrices, initializing projection matrices as channel-selection matrices. At initialization, each singular value of projection matrices is set to 1, and gradients become 0, thereby avoiding the uneven singular value distribution of projection matrices caused by random initialization and thus mitigating \emph{eRank collapse}. To select important channels, we collect activations from the teacher model on a small calibration dataset and estimate the importance of each channel based on these activations. This approach enables the student model to preserve the most critical subspaces during initialization, rather than obscuring them with random matrices. Under identical training conditions as LRC, RED achieves SOTA performance in both general ability and multi-step reasoning benchmarks compared to full-parameter, LoRA-style retraining and EDistill methods. 
Our contributions are as follows:
\vspace{-10pt}
\begin{itemize}[leftmargin=*]
    \item We are the first to identify and systematically investigate \emph{reasoning collapse} in EDistilled LLMs through the lens of representation geometry. 
    \vspace{-4pt}
    \item We provide a theoretical explanation for how randomly initialized projection matrices of the \emph{width-reduced} EDistill method lead to \emph{eRank collapse} and thus token indistinguishability, undermining the multi-step reasoning ability. 
    \vspace{-14pt}
    \item We propose RED, which utilizes activation-aware initialization to theoretically mitigate the eRank collapse of hidden representations. We empirically demonstrate that projection matrices of RED show an even distribution of singular values and RED recovers reasoning performance while maintaining high efficiency and superior general ability on both the Llama and Qwen series.
\end{itemize}

\section{Related Work}
\vspace{-4pt}
\textbf{Pruning and Recovery.}
Structured pruning aims to derive hardware-efficient student models by removing entire groups of parameters from teacher LLMs~\cite{zhu2024survey, hu2024sp3}. To mitigate the resulting performance loss, a recovery stage is introduced. Full-parameter retraining methods (Sheared-LLaMA~\cite{xia2023sheared}, Minitron~\cite{muralidharan2024compact}, Llama 3.2~\cite{dubey2024llama}) recover ability via retraining on trillions of tokens. The computational overhead is often prohibitive. Conversely,  LoRA-style retraining methods~\cite{ma2023llm, ashkboos2024slicegpt} offer lower costs but struggle to recover the teacher's performance~\cite{chen2024streamlining}.

\vspace{-3pt}
\textbf{EDistill via Lightweight Modules.}
A more recent paradigm focuses on EDistill by training only a fraction of newly inserted lightweight modules while freezing other inherited parameters. This includes \emph{depth-reduced} EDistill methods (e.g., LLMStreamline~\cite{chen2024streamlining}, ReplaceMe~\cite{shopkhoev2025replaceme}) that utilize a learnable single layer to replace multiple layers, and \emph{width-reduced} EDistill methods (e.g., LRC~\cite{hao2025token}) that employ projection matrices for dimension alignment. While these methods achieve SOTA results on general ability benchmarks with minimal training resources, we uncover a critical flaw: \emph{reasoning collapse} of EDistilled LLMs.

\vspace{-3pt}
\textbf{Reasoning Collapse in LLMs.}
\emph{Reasoning collapse} has been recognized in scenarios such as model quantization~\cite{li2025quantization,liu2025quantization} and training-free structured pruning of LLMs~\cite{lelerethinking}.
However, the phenomenon of reasoning collapse in EDistilled LLMs remains under-explored, with prior work focusing solely on their general ability. Moreover, we utilize the geometry of representations~\cite{park2023linear,wei2024diff,skean2025layer} to investigate why reasoning collapse occurs in EDistilled LLMs. We further conduct a theoretical analysis of the behavior of width-reduced EDistilled LLMs and propose an activation-aware initialization strategy.

% Our work is grounded in the study of LLM internal representation geometry to explain performance shifts. (1) \emph{For depth-reduced models}, we analyze logit evolution and the concept of refinement distance. Recent studies suggest that Transformer blocks perform iterative refinement of latent states \cite{gupta2025llms, tan2025bottom}; our analysis extends this by showing how layer dropping stunts this process, leading to a failure in bridging context and prediction. (2) \emph{For width-reduced models}, we focus on the eRank and the phenomenon of dimensional collapse. While dimensional collapse is a known issue in self-supervised learning \cite{jing2021understanding, he2024preventing, wei2024diff}, we are the first to identify its presence in projection-based efficient distillation. We demonstrate that the \emph{reasoning collapse} in width-reduced LLMs is a direct consequence of gradient-driven eRank decay in projection matrices. By synthesizing these two perspectives, we provide a comprehensive diagnosis of efficient distillation, eventually focusing our RED framework on resolving the theoretically tractable eRank collapse in width-reduced architectures.

\vspace{-8pt}
\section{Preliminaries}
\label{sec:preliminaries}
\vspace{-4pt}

% \TODO{Add Figures, Citations, and Corresponding Appendices}

In this section, we introduce the architectural formulation of LLMs and establish the notation used throughout the paper.

\vspace{-9pt}
\subsection{Architecture of LLMs}
\vspace{-5pt}

Given an input token sequence of length $L$, an LLM first maps discrete tokens into continuous representations using an embedding matrix $W_{\mathrm{e}} \in \mathbb{R}^{\mathrm{Voc} \times D},$ resulting in the initial hidden representations $X_0 \in \mathbb{R}^{L \times D},$ where $\mathrm{Voc}$ is the vocabulary size and $D$ is the hidden dimension. The representations are then processed by a stack of $N$ Transformer blocks. Each layer follows a Pre-Norm architecture and consists of a multi-head self-attention ($\mathrm{MHSA}$)  and a feed-forward network ($\mathrm{FFN}$) modules, both equipped with residual connections~\cite{dubey2024llama}. Let $X_{i-1}$ and $X_i$ denote the input and output of the $i$-th Transformer layer (Trans.). The forward computation is given by:
\vspace{-5pt}
{
\begin{equation}
\label{eq:transformer_block_main}
\begin{aligned}
H_i &= X_{i-1} + \mathrm{MHSA}_i(\mathrm{Norm}_{a,i}(X_{i-1})), \\
X_i &= H_i + \mathrm{FFN}_i(\mathrm{Norm}_{f,i}(H_i)),
\end{aligned}
\end{equation}
}
\vspace{-15pt}

\noindent where $\mathrm{Norm}_{a,i}$ and $\mathrm{Norm}_{f,i}$ denote the normalization layers applied before the $\mathrm{MHSA}_i$ and $\mathrm{FFN}_i$ modules respectively. In modern LLMs, these normalization layers are typically implemented using $\mathrm{RMSNorm}$~\cite{zhang2019root} (see Appendix~\ref{Appendix:RMSNorm}). 
After $N$ Transformer blocks, the final hidden representations $X_N \in \mathbb{R}^{L \times D}$ are normalized and projected to vocabulary logits $Z_N \in \mathbb{R}^{L \times \mathrm{Voc}}$ through the unembedding layer $W_{\mathrm{u}} \in \mathbb{R}^{D \times \mathrm{Voc}}$:
\vspace{-4pt}
{
\begin{equation}
\label{eq:unembedding}
Z_N = \mathrm{Norm}_{\mathrm{final}}(X_N)\, W_{\mathrm{u}}.
\end{equation}
}
\vspace{-22pt}

\noindent Applying the $\mathrm{softmax}$ function to $Z_N$ yields the next-token probability distribution over the vocabulary.

\vspace{-6pt}
\subsection{EDistill of LLMs}
\vspace{-2pt}

EDistill of LLMs restricts the optimization to the newly inserted lightweight modules while reducing the depth or width of the teacher’s architecture. To optimize these lightweight modules, EDistill typically employs a combination of training objectives (see Appendix \ref{appendix: Distillation Objectives}). 

In \emph{depth-reduced} EDistill, the student contains fewer Transformer blocks. Consecutive teacher blocks $\{i, \dots, i+R-1\}$ are identified as redundant using a small calibration dataset $\mathcal{D}_{\mathrm{pre}}$ and a predefined target number of layers $R$ to remove. The optimal start index $i^\star \in \{1, \dots, N-R+1\}$ is selected based on minimal degradation in perplexity or maximal similarity between the input $X_{i^\star}$ and output $X_{i^\star+R-1}$ representations. Efficiency is typically achieved by replacing them with a single learnable Transformer layer~\cite{chen2024streamlining} or linear layer~\cite{shopkhoev2025replaceme}. 

In \emph{width-reduced} EDistill, the student model operates on a reduced residual dimension $D^\prime < D$ while preserving the internal structures of $\mathrm{MHSA}$ and $\mathrm{FFN}$ modules. This is achieved by introducing learnable projection matrices that map representations between the reduced and original dimensions while keeping all parameters of the teacher model frozen: for the $i$-th Transformer layer, up-projection matrices $O_i^{a}, O_i^{f} \in \mathbb{R}^{D^\prime \times D}$ are applied after normalization, and down-projection matrices $Q_i^a, Q_i^f \in \mathbb{R}^{D \times D^\prime}$ are applied after $\mathrm{MHSA}_i$ and $\mathrm{FFN}_i$ modules. Let $X_{i-1}^\prime$ and $X_i^\prime$ denote the input and output of the student model's $i$-th Transformer layer. The forward computation is given by
\vspace{-4pt}
{
\begin{equation}
\label{eq:width_reduced_forward}
\begin{aligned}
H_{i}^\prime &= X_{i-1}^\prime  + \mathrm{MHSA}_{i}\left( \mathrm{Norm}_{a, i}^\prime(X_{i-1}^\prime) O_i^a \right)Q_i^a,  \\
X_{i}^\prime &= H_{i}^\prime + \mathrm{FFN}_{i}\left( \mathrm{Norm}_{f, i}^\prime(H_{i}^\prime) O_i^f\right)Q_i^f,
\end{aligned}
\end{equation}
}
\vspace{-16pt}

\noindent where $\mathrm{Norm}_{a, i}^\prime$ and $\mathrm{Norm}_{f, i}^\prime$ are the corresponding re-initialized normalization layers of student models, and the embedding and unembedding layers are similarly adapted using projection matrices to ensure compatibility with the teacher’s vocabulary space.
After training these learnable projection matrices, they are merged with the frozen parameters to obtain the final model. 

% representation alignment between intermediate hidden representations ($\mathcal{L}_{\mathrm{REP}}$), output distribution alignment via logit distillation ($\mathcal{L}_{\mathrm{KL}}$), and standard language modeling supervision on ground-truth data ($\mathcal{L}_{\mathrm{LM}}$), all optimized on a retraining dataset $\mathcal{D}_{\mathrm{post}}$. Details are provided in Appendix~\ref{appendix: Distillation Objectives}.

\vspace{-9.5pt}
\section{Reasoning Collapse in EDistilled LLMs}
\label{sec:Reasoning Collapse in EDistill LLMs}
\vspace{-3.5pt}
EDistilled LLMs suffer from severe performance degradation in multi-step reasoning tasks, a phenomenon we term \emph{reasoning collapse}. This degradation persists even under standard retraining, severely hindering the application of distilled LLMs in broader scenarios.

\vspace{-3pt}
\begin{table}[ht]
\centering
\small
\setlength{ \tabcolsep}{4.2pt}
\renewcommand{\arraystretch}{1}
\caption{Comparison of depth reduction on GSM8K. ``Layers (Red.)'' denotes the remaining layers and the reduction ratio.}
\vspace{-4pt}
\label{tab:Comparison of depth reduction}
\resizebox{\columnwidth}{!}{
% \begin{tabular}{llccc}
% \toprule
% \textbf{Method} & \textbf{Teacher (layers)} & \textbf{\shortstack{Objective /\\ Module}} & \textbf{\shortstack{Remaining\\ Layers (Red.)}} & \textbf{GSM8K} \\
% \midrule
% \multirow{6}{*}{ReplaceMe} & \multirow{3}{*}{\shortstack[l]{Qwen2.5-7B-Ins.\\(28 layers)}} & \multirow{6}{*}{\shortstack[c]{$\mathcal{L}_{\mathrm{REP}}$ /\\ Linear}} & 26 (7.1\%) & 0.59 \\
%  &  &  & 24 (14.3\%) & \textbf{0.29} \\
%  &  &  & 22 (21.4\%) & \textbf{0.04} \\
% \cmidrule{2-2} \cmidrule{4-5}
%  & \multirow{3}{*}{\shortstack[l]{Llama3.1-8B-Ins.\\(32 layers)}} &  & 28 (12.5\%) & 0.63 \\
%  &  &  & 26 (18.8\%) & \textbf{0.49} \\
%  &  &  & 24 (25.0\%) & \textbf{0.14} \\
% \addlinespace[2pt]
% \midrule
% \multirow{3}{*}{\shortstack[l]{LLM\\Streamline}} & \multirow{3}{*}{\shortstack[l]{Llama3.1-8B-Base\\(32 layers)}} & \multirow{3}{*}{\shortstack[c]{$\mathcal{L}_{\mathrm{LM}}$ /\\ Trans.}} & 26 (18.8\%) & 0.41 \\
%  &  &  & 20 (37.5\%) & \textbf{0.24} \\
%  &  &  & 18 (43.8\%) & \textbf{0.09} \\
% \bottomrule
% \end{tabular}
\begin{tabular}{llccc}
\toprule
\textbf{Method} & \textbf{Teacher (layers)} & \textbf{Module} & \textbf{Layers (Red.)} & \textbf{GSM8K} \\
\midrule
\multirow{6}{*}{ReplaceMe} & \multirow{3}{*}{\shortstack[l]{Qwen2.5-7B-Ins.\\(28 layers)}} & \multirow{6}{*}{Linear} & 26 (7.1\%) & 0.59 \\
 &  &  & 24 (14.3\%) & \textbf{0.29} \\
 &  &  & 22 (21.4\%) & \textbf{0.04} \\
\cmidrule{2-2} \cmidrule{4-5}
 & \multirow{3}{*}{\shortstack[l]{Llama3.1-8B-Ins.\\(32 layers)}} &  & 28 (12.5\%) & 0.63 \\
 &  &  & 26 (18.8\%) & \textbf{0.49} \\
 &  &  & 24 (25.0\%) & \textbf{0.14} \\
\addlinespace[2pt]
\midrule
\multirow{3}{*}{\shortstack[l]{LLM\\Streamline}} & \multirow{3}{*}{\shortstack[l]{Llama3.1-8B-Base\\(32 layers)}} & \multirow{3}{*}{Trans.} & 26 (18.8\%) & 0.41 \\
 &  &  & 20 (37.5\%) & \textbf{0.24} \\
 &  &  & 18 (43.8\%) & \textbf{0.09} \\
\bottomrule
\end{tabular}
}
\vspace{-9pt}
\end{table}

% \TODO{This section requires discussion. I have sent you the complete diagram of the teacher model. The teacher model's discrimination does not gradually decrease; in fact, it maintains a relatively high level. It is only the student model that experiences a sharp drop.}

\textbf{Reasoning Collapse in Depth-reduced EDistilled LLMs.} As shown in Table \ref{tab:Comparison of depth reduction}, regardless of the training method, replacement modules, or teacher model employed, once the depth is reduced beyond a certain threshold, multi-step reasoning ability drops precipitously. This is primarily attributed to the depth dependency of multi-step reasoning, which limits the number of layers that LLMs can discard~\cite{Liu2022TransformersLS, gupta2025llms,tan2025bottom, heakl2025dr}.
By further analyzing the co-evolution of maximum token representation similarity and predictive uncertainty, we demonstrate that the student model is forced to compress the entire transition from exploration to commitment into a narrow computational window, and the student's representations become markedly less separable precisely at the point of decision-making (Appendix \ref{Appendix: Representation Geometry of Depth-reduced LLMs}).

\vspace{-1pt}
\textbf{Reasoning Collapse in Width-reduced EDistilled LLMs.}
Similar \emph{reasoning collapse} also occurs in width-reduced EDistilled LLMs. Unlike depth reduction, which compresses reasoning in \emph{time}, width reduction constrains computation in \emph{space} by forcing hidden representations to lie in a lower-dimensional subspace. 
To further examine the geometry of compressed representations, given the hidden representations $X \in \mathbb{R}^{L \times D}$, following \citet{wei2024diff}, the representation matrix is zero-centered ($\sum_{l=1}^L X^l = \mathbf{0}$) and row-normalized ($\|X^l\|_2 = 1$), where $X^l$ is the feature of the $l$-th token.
Let $\Sigma = \frac{1}{L} X^\top X$ denote the covariance matrix with eigenvalues $\{\lambda_j\}_{j=1}^{D}$. We define eRank as:
\vspace{-6pt}
{
\begin{equation}
\label{def:erank}
\mathrm{eRank}(X) = \exp \bigl( -\sum_{j=1}^{D} p_j \log p_j \bigr),
\end{equation}
}
\vspace{-14pt}

\noindent where $p_j = \frac{\lambda_j}{\sum_{k=1}^{D} \lambda_k}$. A smaller eRank indicates that representations are dominated by a few principal directions, reflecting reduced expressivity~\cite{he2024preventing}.

\begin{figure}[t]
    \centering
    % 使用 \columnwidth 确保图片填满单栏宽度，不溢出到另一栏
    \includegraphics[width=1.0\columnwidth]{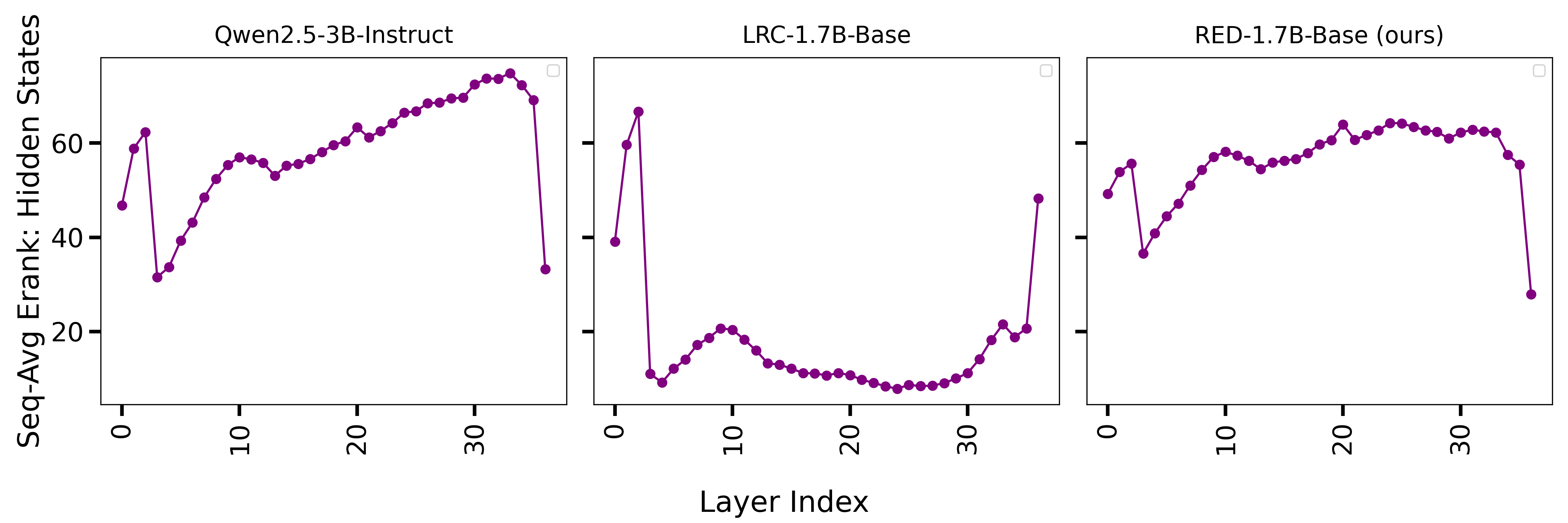}  
    \vspace{-17.5pt}
    \caption{Comparison of eRank across layers for the teacher model (left), the LRC (middle), and our  RED (right). Sample sequences are from Nemotron-Pretraining-Dataset.}
    \vspace{-18.5pt}
    \label{fig:erank_analysis}
\end{figure}

As shown in Figure~\ref{fig:erank_analysis},  compared to the teacher model (Qwen2.5-3B-Instruct, left), the width-reduced student LRC-1.7B-Base (middle), which represents a SOTA approach to width-reduced EDistill, exhibits a substantially smaller eRank across most layers. This reduction in intrinsic dimensionality coincides with a sharp degradation in reasoning performance, with GSM8K accuracy dropping to \textbf{0.09} (see Table \ref{tab:2b_comparison}). We thus hypothesize that \emph{eRank collapse} is a primary mechanism underlying \emph{reasoning collapse} of width-reduced EDistilled LLMs, which necessitates a theoretical investigation into the optimization dynamics of projection-based width reduction.

\vspace{-6pt}
\subsection{Gradient-driven ERank Collapse}
\label{sec:gradient_flow}
\vspace{-2pt}

To understand the origin of \emph{eRank collapse} under width reduction, we analyze the optimization dynamics induced by projection-based compression using a linear autoencoder model~\cite{wang2016auto} as a theoretical proxy for the teacher-student representation alignment process.

\textbf{Problem Setup.} Let $X \in \mathbb{R}^{L \times D}$ denote a zero-centered and row-normalized representation matrix. Width-reduced EDistill introduces a down-projection matrix $Q \in \mathbb{R}^{D \times D^\prime}$ and an up-projection matrix $O \in \mathbb{R}^{D^\prime \times D}$ ($D^\prime < D$), optimized to reconstruct $X$ via $\mathcal{L} = \frac{1}{2L} \| X - XQO \|_F^2$.

Following~\citet{saxe2013exact,arora2019implicit,jing2021understanding}, we model the training process with a small learning rate as continuous-time gradient flow:

\begin{tcolorbox}[colback=colorDistill, colframe=myblue!8, boxrule=0pt, sharp corners=all, boxsep=0pt, left=5pt, right=5pt, top=5pt, bottom=5pt, before skip=2pt, after skip=2pt]
\begin{lemma}[Evolution of Projection Matrices]
\label{lemma:grad_flow}
Under the reconstruction loss $\mathcal{L}$, the gradient flow dynamics for the projection matrices $O$ and $Q$ are governed by the following system of Ordinary Differential Equations:
\vspace{-4pt}
{\begin{align}
    \dot{O} &= - \nabla_{O} \mathcal{L} = - Q^\top \Sigma (QO - I),  \\
    \dot{Q} &= - \nabla_{Q} \mathcal{L} = - \Sigma (QO - I) O^\top, 
\end{align}}
\vspace{-16pt}

\noindent where $\dot{O}$ and $\dot{Q}$ denote the time-derivatives $\frac{dO}{dt}$ and $\frac{dQ}{dt}$ respectively, $\Sigma = \frac{1}{L} X^\top X$ is the covariance matrix and $I$ is the identity matrix of the corresponding dimensions.
\end{lemma}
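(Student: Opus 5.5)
The plan is a direct matrix-calculus computation: rewrite the reconstruction loss as a trace quadratic form in the product $QO$, take its first-order differential, and read off the two gradients. Setting $\dot{O} = -\nabla_O\mathcal{L}$ and $\dot{Q} = -\nabla_Q\mathcal{L}$, which is the definition of continuous-time gradient flow adopted above, then yields the stated system. No earlier result is needed beyond the definition $\Sigma = \frac{1}{L}X^\top X$. (The symbol $L$ is overloaded: $\mathcal{L}$ is the loss and $L$ is the sequence length; only the latter appears as a normalizer.)

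First, write $X - XQO = X(I - QO)$ with $I$ the $D\times D$ identity. Using $\|A\|_F^2 = \operatorname{tr}(A^\top A)$,
\begin{equation*}
\mathcal{L} = \frac{1}{2L}\operatorname{tr}\bigl((I-QO)^\top X^\top X (I-QO)\bigr) = \frac{1}{2}\operatorname{tr}\bigl((QO-I)^\top \Sigma (QO-I)\bigr).
\end{equation*}
So $\mathcal{L}$ depends on $(Q,O)$ only through $P = QO$, and $\Sigma$ is symmetric positive semidefinite.

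Second, differentiate. Since $\Sigma = \Sigma^\top$, we get $d\mathcal{L} = \operatorname{tr}\bigl((QO-I)^\top \Sigma\, dP\bigr)$. Write $M = \Sigma(QO-I)$ and use $dP = dQ\,O + Q\,dO$. Cyclic invariance of the trace gives
\begin{equation*}
\operatorname{tr}(M^\top dQ\, O) = \operatorname{tr}(O M^\top dQ) = \langle M O^\top, dQ\rangle_F, \qquad \operatorname{tr}(M^\top Q\, dO) = \langle Q^\top M, dO\rangle_F.
\end{equation*}
Hence $\nabla_Q\mathcal{L} = \Sigma(QO-I)O^\top \in \mathbb{R}^{D\times D'}$ and $\nabla_O\mathcal{L} = Q^\top\Sigma(QO-I) \in \mathbb{R}^{D'\times D}$. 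A quick dimension check confirms these shapes match those of $Q$ and $O$.

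There is no real obstacle, only bookkeeping. The one step to handle carefully is the symmetry of $\Sigma$: it is what collapses the two terms arising from differentiating the quadratic form into a single term, and this is where the factor $\tfrac12$ cancels. I would verify it explicitly rather than assume it. It is also worth remarking that the right-hand sides are polynomial, hence locally Lipschitz, in $(Q,O)$. Therefore the ODE system has a unique solution locally, so the gradient flow is well defined.
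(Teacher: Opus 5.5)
Your proposal is correct and is essentially the paper's argument: a direct trace computation of $\nabla_O\mathcal{L}$ and $\nabla_Q\mathcal{L}$, in which the symmetry of $\Sigma$ merges the two terms from the quadratic part and cancels the $\tfrac12$. The difference is only bookkeeping: the paper expands $\mathcal{L}$ into $\tfrac12\mathrm{Tr}(\Sigma)-\mathrm{Tr}(\Sigma QO)+\tfrac12\mathrm{Tr}(O^\top Q^\top\Sigma QO)$ and differentiates term by term using tabulated trace-derivative identities, whereas you differentiate once with respect to the product $QO$ and push the result through $d(QO)=dQ\,O+Q\,dO$, which is shorter and shows the shared factor $\Sigma(QO-I)$ directly (rename your auxiliary $M$, though, since the paper reserves $M=QO$ for the later singular-value analysis).
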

\end{tcolorbox}

A key property of this system is that the difference between the Gram matrices of $Q^\top$ and $O$ remains invariant over time, which we formalize in Lemma \ref{lemma:balancedness}. Consequently, if the matrices are initialized such that $Q(0)^\top Q(0) = O(0)O(0)^\top$, they remain ``balanced'' throughout the entire optimization process, i.e., $Q(t)^\top Q(t) = O(t)O(t)^\top$ for all $t$.

% \begin{lemma}[Balancedness Invariant]
% \label{lemma:balancedness}
% Under the reconstruction loss $\mathcal{L}$, for the gradient flow dynamics of $Q$ and $O$, the following conservation law holds for all $t \ge 0$:
% \begin{equation}
%     \frac{d}{dt} \left( Q(t)^\top Q(t) - O(t)O(t)^\top \right) = 0.
% \end{equation}
% \end{lemma}

In practice, existing width-reduced EDistill methods employ small random initialization (a normal distribution with mean = 0.0, std = 0.02). Under such conditions, $Q(0)^\top Q(0) \approx O(0)O(0)^\top \approx 0$. 
Then we define the product matrix $M = QO \in \mathbb{R}^{D \times D}$ and the Singular Value Decomposition (SVD) of $M$ as $M = U_M E_M V_M^\top$, where $E_M = \mathrm{diag}(\sigma_M^1, \dots, \sigma_M^{D'})$. It holds that:

\vspace{2pt}
\begin{tcolorbox}[colback=myblue!8, colframe=myblue!8, boxrule=0pt, sharp corners=all, boxsep=0pt, left=5pt, right=5pt, top=2pt, bottom=2pt, before skip=2pt, after skip=2pt]
\begin{theorem}[Singular Value Dynamics]
\label{thm:singular_dynamics}
Given the balanced condition $Q^\top Q = OO^\top$, the evolution of the $r$-th singular value $\sigma_M^r$ of the product matrix $M=QO$ is:
\vspace{-6.5pt}
\begin{equation}
    \dot{\sigma}_M^r = 2 \sigma_M^r (u_M^r)^\top \Sigma \left( v_M^r - \sigma_M^r u_M^r \right).
\end{equation}
\vspace{-18.5pt}

\noindent where $u_M^r$ and $v_M^r$ are the $r$-th left and right singular vectors of $M$, respectively.
\end{theorem}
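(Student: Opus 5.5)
The plan is the standard route for deep linear networks in the style of \citet{arora2019implicit}. First I derive an ODE for the product $M=QO$ from Lemma~\ref{lemma:grad_flow}. Next I use the balanced condition to write the dynamics of $M$ purely in terms of $M$. Finally I project onto a singular pair $(u_M^r,v_M^r)$ to read off $\dot\sigma_M^r$.

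\textbf{Step 1 (product dynamics).} By the product rule, $\dot M = \dot Q O + Q\dot O$. Substituting the two equations of Lemma~\ref{lemma:grad_flow} gives
\begin{equation*}
\dot M = -\Sigma(M-I)\,O^\top O \;-\; QQ^\top\,\Sigma(M-I).
\end{equation*}

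\textbf{Step 2 (eliminate $Q,O$ via balancedness).} I use $Q^\top Q = OO^\top$, which is preserved by Lemma~\ref{lemma:balancedness}. It gives
\begin{equation*}
MM^\top = Q(OO^\top)Q^\top = (QQ^\top)^2, \qquad M^\top M = O^\top(Q^\top Q)O = (O^\top O)^2.
\end{equation*}
Both $QQ^\top$ and $O^\top O$ are positive semidefinite, and the PSD square root is unique. Hence $QQ^\top = (MM^\top)^{1/2}$ and $O^\top O = (M^\top M)^{1/2}$. This yields the closed system
\begin{equation*}
\dot M = -(MM^\top)^{1/2}\Sigma(M-I) - \Sigma(M-I)(M^\top M)^{1/2}.
\end{equation*}

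\textbf{Step 3 (singular value derivative).} For a simple singular value I use the standard perturbation identity $\dot\sigma_M^r = (u_M^r)^\top \dot M\, v_M^r$. It follows by differentiating $\sigma_M^r = (u_M^r)^\top M v_M^r$ and noting that unit-norm constraints kill the terms from $\dot u_M^r$ and $\dot v_M^r$. Two eigen-relations then simplify the projection:
\begin{itemize}
\item $(u_M^r)^\top (MM^\top)^{1/2} = \sigma_M^r (u_M^r)^\top$,
\item $(M^\top M)^{1/2} v_M^r = \sigma_M^r v_M^r$.
\end{itemize}
With these, both terms of Step 2 collapse to $-\sigma_M^r (u_M^r)^\top\Sigma(M-I)v_M^r$. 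Since $Mv_M^r=\sigma_M^r u_M^r$, I get
\begin{equation*}
\dot\sigma_M^r = -2\sigma_M^r (u_M^r)^\top\Sigma(\sigma_M^r u_M^r - v_M^r) = 2\sigma_M^r (u_M^r)^\top \Sigma\,(v_M^r-\sigma_M^r u_M^r),
\end{equation*}
which is the stated formula.

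\textbf{Main obstacle.} The delicate point is rigor in Step 3. The singular values and vectors are differentiable in $t$ only where $\sigma_M^r$ is simple, and the positive ones among them are $\sigma_M^1,\dots,\sigma_M^{D'}$, since $\operatorname{rank} M\le D'$. I would state the result on such intervals, as \citet{arora2019implicit} do, and fix signs of $u_M^r,v_M^r$ continuously.

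At a zero singular value the formula holds trivially: both sides vanish, because $(u_M^r)^\top (MM^\top)^{1/2}=0$. For repeated singular values, I would argue via the analytic SVD (Kato--Rellich), which still gives $\dot\sigma_M^r=(u_M^r)^\top\dot M v_M^r$ for a suitable choice of singular bases.

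The square-root identification in Step 2 is the other point to check carefully. It relies only on the PSD property and uniqueness of the PSD square root, so I expect it to go through cleanly.
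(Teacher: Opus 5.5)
Your proposal is correct and follows the paper's proof step for step. You apply the product rule to $\dot M$, use balancedness to replace $O^\top O$ and $QQ^\top$ by $(M^\top M)^{1/2}$ and $(MM^\top)^{1/2}$, project with $\dot\sigma_M^r = (u_M^r)^\top \dot M v_M^r$, and finish with $M v_M^r = \sigma_M^r u_M^r$. Your PSD-square-root argument and your simple-singular-value caveats fill in details the paper only asserts (Lemma~\ref{lemma:balance_sqrt} and Lemma~\ref{lemma:svd_evo}).
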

\end{tcolorbox}
\vspace{2pt}

\textbf{Mechanism of ERank Collapse.} Theorem \ref{thm:singular_dynamics} reveals a critical ``winner-take-all'' dynamic. During the early stages of training with small random initialization ($\sigma_M^r \ll 1$), the term $\left( v_M^r - \sigma_M^r u_M^r \right)$ is approximately $v_M^r$. Consequently, the growth rate $\dot{\sigma}_M^r$ is proportional to $\sigma_M^r$ itself, leading to exponential divergence. 
This implies that singular values associated with the principal components of $\Sigma$ (where $(u_M^r)^\top \Sigma v_M^r$ is large) will escalate rapidly and dominate the representation space. Conversely, other singular values remain trapped near zero. According to Lemma \ref{lemma:spectral_coupling}, the singular values of the projection matrix $Q$ are the square roots of those of $M$. Therefore, this highly skewed distribution in $M$ directly propagates to $Q$, leading to the \emph{eRank collapse} of the compressed representation $XQ$, where the model over-focuses on a few dominant feature directions while discarding the rest. Detailed proofs of Lemma and Theorem are provided in Appendix \ref{appendix: Proof of lemma:grad_flow} to \ref{appendix: Proof of lemma:spectral_coupling}.

% \begin{lemma}[Spectral Coupling]
% \label{lemma:spectral_coupling}
% Under the balancedness condition $Q^\top Q = OO^\top$, let the singular values of $Q, O$, and $M=QO$ be $\{\sigma_Q^r\}, \{\sigma_O^r\}$, and $\{\sigma_M^r\}$ respectively. It holds that:
% \begin{equation}
%     \sigma_Q^r = \sigma_O^r = \sqrt{\sigma_M^r}, \quad \mathrm{for } r = 1, \dots, D^\prime.
% \end{equation}
% \end{lemma}

% Detailed proofs of Lemma \ref{lemma:grad_flow}-\ref{lemma:spectral_coupling} and Theorem \ref{thm:singular_dynamics} are provided in Appendix \ref{appendix: Proof of lemma:grad_flow} to \ref{appendix: Proof of lemma:spectral_coupling} and Appendix \ref{appendix: Proof of thm:singular_dynamics}.

\vspace{-5pt}
\subsection{Consequences of ERank Collapse}
\vspace{-2pt}

After explaining why projection-based width reduction yields \emph{eRank collapse}, we next formalize its consequences by bounding how low eRank collapses inter-token distinguishability in the output probability space, undermining multi-step reasoning ability. To be specific, we employ the Total Variation (TV) distance to quantify the distinguishability between token predictions:

\vspace{-9pt}
\begin{tcolorbox}[colback=myblue!8, colframe=myblue!8, boxrule=0pt, sharp corners=all, boxsep=0pt, left=5pt, right=5pt, top=3pt, bottom=3pt]
\begin{theorem}[Probability Distinguishability Bound]
Let $X^{l_1}, X^{l_2} \in \mathbb{R}^D$ be token features from a row-normalized, zero-centered matrix $X \in \mathbb{R}^{L \times D}$. The output distribution is $P^{l} = \mathrm{Softmax}(Z^l)$, where $Z^l = \mathrm{Norm}_{\mathrm{final}}(X^l) W_u$ are logits and $g_{\mathrm{final}}$ is the scaling parameter of $\mathrm{Norm}_{\mathrm{final}}$. The minimum TV distance is bounded by:
\label{thm:logit_distinguishability}
\vspace{-9pt}
\begin{equation}
\begin{split}
    \min_{l_1 \neq l_2} &d_{TV}(P^{l_1}, P^{l_2}) \le \frac{\sqrt{\mathrm{Voc}}}{2} \| \mathrm{diag}(g_\mathrm{final}) W_{u} \|_2 \\
    & \cdot \sqrt{2 - 2 \sqrt{\frac{1}{L-1} \left( \frac{L}{\mathrm{eRank}(X)} - 1 \right)}},
\end{split}
\end{equation}
\vspace{-15pt}

\noindent where $\| \mathrm{diag}(g_\mathrm{final}) W_{u} \|_2$ is the spectral norm of the scaled unembedding matrix.
\end{theorem}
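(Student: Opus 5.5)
The proof has three links: convert a TV distance into a Euclidean distance between logits, convert that into a distance between unit feature vectors, and bound the smallest such distance through the eRank via a second-moment argument.

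\textbf{Step 1 (TV to logit distance).} Since $\mathrm{Softmax}$ is $\tfrac12$-Lipschitz from $\ell_2$ to $\ell_2$ (its Jacobian $\mathrm{diag}(p)-pp^\top$ has spectral norm at most $\tfrac12$), Cauchy--Schwarz gives
\[
d_{TV}(P^{l_1},P^{l_2}) = \tfrac12\|P^{l_1}-P^{l_2}\|_1 \le \tfrac{\sqrt{\mathrm{Voc}}}{2}\,\|P^{l_1}-P^{l_2}\|_2 \le \tfrac{\sqrt{\mathrm{Voc}}}{2}\,\|Z^{l_1}-Z^{l_2}\|_2 .
\]
If I only use the crude Lipschitz constant $1$, the result still matches the stated constant $\sqrt{\mathrm{Voc}}/2$.

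\textbf{Step 2 (logits to features).} Because the rows are unit norm and have zero mean, $\mathrm{RMSNorm}$ acts on each $X^l$ as a fixed rescaling by $\sqrt{D}$ times $\mathrm{diag}(g_{\mathrm{final}})$. I will absorb the $\sqrt D$, or treat the normalization as returning the unit vector scaled by $g_{\mathrm{final}}$, so that
\[
\|Z^{l_1}-Z^{l_2}\|_2 \le \|\mathrm{diag}(g_{\mathrm{final}})W_u\|_2\,\|X^{l_1}-X^{l_2}\|_2 .
\]
For unit vectors, $\|X^{l_1}-X^{l_2}\|_2=\sqrt{2-2\cos\theta_{l_1l_2}}$. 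It therefore remains to show
\[
\max_{l_1\neq l_2}\cos\theta_{l_1l_2}\ \ge\ \sqrt{\tfrac{1}{L-1}\bigl(\tfrac{L}{\mathrm{eRank}(X)}-1\bigr)}.
\]

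\textbf{Step 3 (eRank to maximal cosine).} Let $G=XX^\top$ be the Gram matrix. It has unit diagonal and shares its nonzero eigenvalues with $L\Sigma$, so $\sum_j p_j^2=\|G\|_F^2/L^2$. Since Rényi entropy is non-increasing in its order, the order-2 entropy is at most the Shannon entropy, which gives $\exp(H)\le 1/\sum_j p_j^2$. Hence
\[
\|G\|_F^2 \ge \frac{L^2}{\mathrm{eRank}(X)} .
\]
Writing $\|G\|_F^2 = L + \sum_{l_1\neq l_2}\cos^2\theta_{l_1l_2}$, averaging over the $L(L-1)$ off-diagonal pairs gives
\[
\max_{l_1\neq l_2}\cos^2\theta_{l_1l_2} \ge \frac{1}{L-1}\Bigl(\frac{L}{\mathrm{eRank}(X)}-1\Bigr).
\]

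\textbf{Main obstacle.} This bound controls only $\cos^2$, so the pair maximizing $|\cos\theta|$ may have a negative cosine, which would make the two tokens far apart rather than close. I will first look for a way to use zero-centering to exclude this, but zero-centering alone does not force a large positive off-diagonal Gram entry. If it fails, I will read the bound in terms of $|\cos\theta|$, i.e.\ distinguishability up to sign, or work with the sign-symmetrized distance $\min(\|X^{l_1}-X^{l_2}\|,\|X^{l_1}+X^{l_2}\|)$. Alternatively I will argue that a negative cosine with magnitude $\ge c$ forces, via $\sum_{l_2} \cos\theta_{l_1l_2}=0$, a positive cosine of comparable size. That last argument needs care and may lose constants.

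Finally, the normalization scaling in Step 2 and the $\mathrm{Voc}$ factor in Step 1 are bookkeeping, and I expect them to be routine.
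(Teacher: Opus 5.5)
Your chain matches the paper's proof step for step: $\ell_1\to\ell_2$ with $\sqrt{\mathrm{Voc}}$, Lipschitzness of $\mathrm{Softmax}$, treating $\mathrm{Norm}_{\mathrm{final}}$ on unit rows as multiplication by $\mathrm{diag}(g_{\mathrm{final}})$, the spectral norm, $\|X^{l_1}-X^{l_2}\|_2=\sqrt{2-2\rho}$, and then the paper's Representation Distinguishability Bound. That bound's proof is exactly your Step 3; your R\'enyi-monotonicity is the paper's Jensen step $\sum_j p_j^2\ge 1/\mathrm{eRank}(X)$.

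The gap you flag is genuine. It cannot be closed from the stated hypotheses, because the statement is false without an extra one, already at $L=2$. There zero-centering forces $X^2=-X^1$, so $\mathrm{eRank}(X)=1$, the inner square root equals $1$, and the right-hand side is $0$. Yet $d_{TV}(\mathrm{Softmax}(z),\mathrm{Softmax}(-z))>0$ for $z=\mathrm{Norm}_{\mathrm{final}}(X^1)W_u$ unless $z$ is a constant vector. This is the paper's own Binary State Collapse with two distinct logit states.

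Your fallbacks do not rescue it. The third fails because $\sum_{l_2}\langle X^{l_1},X^{l_2}\rangle=0$ includes the diagonal term $1$, so the off-diagonal cosines in each row sum to $-1$. Zero-centering therefore pushes cosines negative rather than forcing a positive one. The regular simplex shows this sharply: all off-diagonal cosines are $-1/(L-1)$ and $\mathrm{eRank}(X)=L-1$, so your $|\cos|$ bound holds with equality while every pair is anti-aligned. The sign-symmetrized distance proves a different inequality: it bounds the TV distance to $\mathrm{Softmax}(\mathrm{Norm}_{\mathrm{final}}(-X^{l_2})W_u)$, which is generally not one of the $P^l$.

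The paper does not resolve this mathematically either. At exactly this step it assumes the most similar pair is co-directional, appealing to the empirical ``cone effect'' of LLM representations, and only then applies $\sqrt{2-2\rho}$. To reach the stated inequality you must add that hypothesis explicitly: the pair maximizing $|\langle X^{l_1},X^{l_2}\rangle|$ has nonnegative inner product. Alternatively, state the conclusion only up to sign.

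On Step 2, the paper uses the same convention you propose, $Z^l=X^l\,\mathrm{diag}(g_{\mathrm{final}})W_u$. This silently drops the factor $\sqrt{D}/\sqrt{1+D\epsilon}$ that RMSNorm applies to a unit vector. With your constants that factor cannot be absorbed once $D>4$, so it is a convention, not routine bookkeeping. To keep it honestly, use the $\mathrm{Voc}$-free estimate $\|(\mathrm{diag}(p)-pp^\top)v\|_1\le\sqrt{\mathrm{Var}_p(v)}\le\|v\|_2/\sqrt{2}$. This gives $d_{TV}(P^{l_1},P^{l_2})\le\|Z^{l_1}-Z^{l_2}\|_2/(2\sqrt{2})$, which absorbs $\sqrt{D}$ whenever $D\le 2\,\mathrm{Voc}$.
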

\end{tcolorbox}

Theorem \ref{thm:logit_distinguishability} reveals that when $\mathrm{eRank}(X)$ is low, the model tends to produce nearly identical output probability distributions for distinct tokens. This indistinguishability undermines reasoning by blurring (i) key input variables/conditions in the prompt, leading to errors and hallucinations~\cite{mamidanna2025all}, and (ii) successive reasoning steps in multi-step generation, causing repetitive or degenerate trajectories~\cite{ye2024physics,tan2025bottom,fu2026attention}. In the extreme case, \emph{eRank collapse} makes the model completely degenerate:
\begin{tcolorbox}[colback=myblue!8, colframe=myblue!8, boxrule=0pt, sharp corners=all, boxsep=0pt, left=5pt, right=5pt, top=5pt, bottom=5pt, before skip=2pt, after skip=2pt]
\begin{theorem}[Binary State Collapse]
\label{thm:limit}
Assume the representation matrix $X \in \mathbb{R}^{L \times D}$ is zero-centered and row-normalized. If $\mathrm{eRank}(X) = 1$, the rank of $X$ becomes 1. For any $l$-th token, its representation $X^l$ satisfies $X^l = X^1$ or $X^l = -X^1$. Consequently, the corresponding logits satisfy $Z^l = Z^1$ or $Z^l = \mathrm{Norm}_{\mathrm{final}}(-X^1) W_{\mathrm{u}}$.
\end{theorem}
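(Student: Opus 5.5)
Unpack the definition of eRank and reduce the statement to elementary linear algebra. By \eqref{def:erank}, $\mathrm{eRank}(X)=\exp(H(p))$, where $H(p)=-\sum_j p_j\log p_j$ is the Shannon entropy of the normalized spectrum $p_j=\lambda_j/\sum_k\lambda_k$ of $\Sigma=\frac1L X^\top X$. This is well defined because row-normalization gives $\mathrm{tr}(\Sigma)=\frac1L\sum_l\|X^l\|_2^2=1>0$. The entropy of a probability vector vanishes if and only if the vector is a point mass. So $\mathrm{eRank}(X)=1$ forces exactly one eigenvalue $\lambda_1=1$ with all others zero. Hence $\mathrm{rank}(\Sigma)=1$, and since $\mathrm{rank}(X)=\mathrm{rank}(X^\top X)$, also $\mathrm{rank}(X)=1$. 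This settles the first claim.

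For the second claim, rank one means $X=a w^\top$ for some unit vector $w\in\mathbb{R}^D$ and some $a\in\mathbb{R}^L$, so each row is $X^l=a_l w$. Row-normalization gives $|a_l|\,\|w\|_2=1$, hence $a_l\in\{+1,-1\}$ for every $l$. Then $X^l=a_l w$ and $X^1=a_1 w$ give $X^l=(a_l/a_1)X^1=\pm X^1$. The zero-centering assumption is not needed for this step. It only shows that both signs actually occur: $\sum_l a_l=0$ rules out all rows being equal, though the statement does not require this.

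For the logit claim, substitute directly into $Z^l=\mathrm{Norm}_{\mathrm{final}}(X^l)W_{\mathrm{u}}$. If $X^l=X^1$, then $Z^l=Z^1$ trivially. If $X^l=-X^1$, then $Z^l=\mathrm{Norm}_{\mathrm{final}}(-X^1)W_{\mathrm{u}}$, exactly as stated. One may also note the stronger fact that RMSNorm is odd, $\mathrm{Norm}(-x)=-\mathrm{Norm}(x)$, so $Z^l=-Z^1$. I would mention this only as a remark, since the statement is phrased without it.

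The only step needing care is the entropy characterization $H(p)=0\iff p$ is a one-hot vector. It follows because each term $-p_j\log p_j\ge 0$ on $[0,1]$, with equality only at $p_j\in\{0,1\}$, using the convention $0\log 0=0$. Together with $\sum_j p_j=1$, this forces exactly one $p_j=1$. Everything else is routine.
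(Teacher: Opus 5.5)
Your proof is correct and follows the paper's argument essentially step for step. Vanishing entropy of the normalized spectrum forces a one-hot spectrum, hence $\mathrm{rank}(X)=\mathrm{rank}(\Sigma)=1$; row-normalization pins the scalar multiples to $\pm 1$; and the logit claim is direct substitution. Your extra remarks are accurate refinements the paper does not make: $\mathrm{tr}(\Sigma)=1$ makes $p$ well defined, zero-centering is not needed for the $\pm X^1$ conclusion, and RMSNorm is odd so that $Z^l=-Z^1$.
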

\end{tcolorbox}
\vspace{2pt}
Proofs of Theorem~\ref{thm:rep_distinguishability}, \ref{thm:logit_distinguishability}, and \ref{thm:limit} are provided in Appendix~\ref{appendix: Proof of thm:rep_distinguishability},~\ref{appendix: Proof of thm:logit_distinguishability},~\ref{appendix: Proof of thm:limit}.

\vspace{-6pt}
\begin{figure}[h]
    \centering
    \includegraphics[width=1.0\columnwidth]{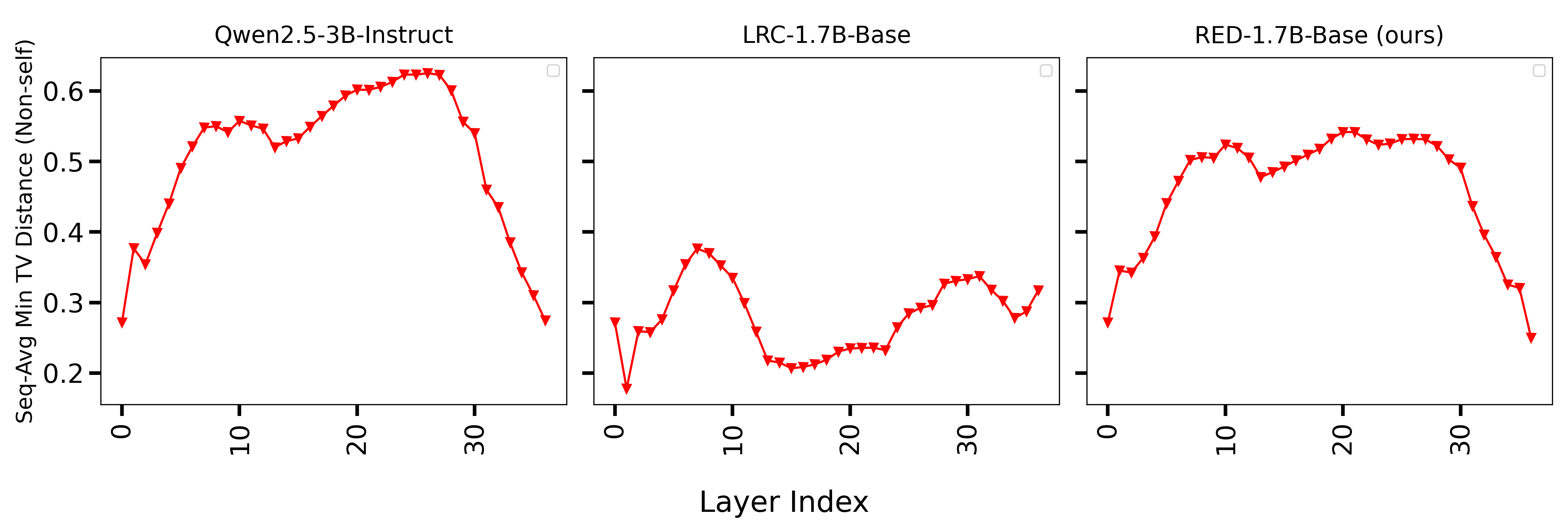}  
    \vspace{-13pt}
    \caption{Minimum TV Divergence (non-self) between token output probability distributions across layers.}
    \label{fig:min_kl_analysis}
\end{figure}
\vspace{-2pt}

\textbf{Experimental Validation.} To empirically validate our theoretical findings, we perform a layer-wise analysis on sequences sampled from the Nemotron-Pretraining-Dataset, extracting hidden representations to compute their corresponding output probability distributions. As illustrated in Figure~\ref{fig:min_kl_analysis}, LRC-1.7B-Base (middle) exhibits a significantly smaller minimum TV divergence than the teacher model (left). Furthermore, the Pearson correlation \cite{sedgwick2012pearson} between eRank and minimum TV distance across all layers of the three models is \textbf{0.74}. These results are consistent with Theorem \ref{appendix: Proof of thm:logit_distinguishability}, confirming that low eRank yields indistinguishable token output probability distributions.

\section{RED: Reasoning-preserved Efficient Distillation for LLMs}

% \TODO{Mention Figure 1 and 2 in this section, activation regularization is strange}

Building on previous analyses, we propose \textbf{R}easoning-preserved \textbf{E}fficient \textbf{D}istillation (\textbf{RED}) for LLMs. This simple method preserves the multi-step reasoning ability of LLMs via \emph{Activation-aware Initialization} that stabilizes early optimization and mitigates \emph{eRank collapse} of hidden representations by
initializing projection matrices as channel-selection matrices.

% \CTODO{我们需要写一下具体的工程上的实现，参考lrc，在appendix里，还有损失函数啥的}

\textbf{Activation-aware Initialization.}
Instead of small random values, we initialize the down-projection matrix $Q \in \mathbb{R}^{D \times D^\prime}$ as a channel-selection matrix $H$. Let $G = \{G_1, \dots, G_{D^\prime}\} \subset \{1, \dots, D\}$ denote the indices of the $D^\prime$ most important activation channels. We define $Q(0) = H$, where $H_{ij} = 1$ if $i = G_j $ and $0$ otherwise. For the up-projection matrix $O \in \mathbb{R}^{D^\prime \times D}$, we set $O(0) = H^\top$, such that $Q(0)^\top = O(0)$. This initialization yields two key advantages: (i) \textbf{Residual Preservation.}
By employing the same selection matrix $H$ for all projection matrices, the student preserves the selected activation subspace at initialization, resulting in a stable, teacher-aligned residual structure.
(ii) \textbf{Spectral Stability.}
The product matrix $M(0) = Q(0)O(0)$ is a symmetric projection matrix whose initial dynamics are characterized by:

\vspace{1pt}
\begin{tcolorbox}[colback=myblue!8, colframe=myblue!8, boxrule=0pt, sharp corners=all, boxsep=0pt, left=5pt, right=5pt, top=5pt, bottom=5pt, before skip=2pt, after skip=2pt]
\begin{theorem}[Vanishing Initial Dynamics] \label{thm:stability}
Under the activation-aware initialization, at $t=0$, the singular values of $M(0)$ satisfy $\sigma_M^r \in \{0, 1\}$, and their initial derivatives vanish, i.e., $\dot{\sigma}_M^r = 0$ for all $r \in \{1, \dots, D\}$.
\end{theorem}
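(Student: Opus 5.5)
The plan is to compute the initial state explicitly and then either invoke Theorem~\ref{thm:singular_dynamics} or, more robustly, differentiate $M=QO$ directly at $t=0$.

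\textbf{Step 1: the spectrum of $M(0)$.} Since $H$ has exactly one unit entry in each column, in distinct rows $G_1,\dots,G_{D'}$, we have $H^\top H = I_{D'}$. Also $M(0)=HH^\top$ is the diagonal matrix with ones exactly at the positions $G_1,\dots,G_{D'}$. So $M(0)$ is a symmetric orthogonal projection, and its singular values are $1$ with multiplicity $D'$ and $0$ with multiplicity $D-D'$. This gives the first claim.

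\textbf{Step 2: balancedness at $t=0$.} We have $Q(0)^\top Q(0)=H^\top H=I_{D'}=O(0)O(0)^\top$. So the balanced hypothesis of Theorem~\ref{thm:singular_dynamics} (and of Lemma~\ref{lemma:balancedness}) holds at $t=0$, and therefore for all $t$.

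\textbf{Step 3: the derivatives.} Naively, Theorem~\ref{thm:singular_dynamics} gives the result at once.
\begin{itemize}
\item For $\sigma_M^r=0$, the prefactor $\sigma_M^r$ kills the derivative.
\item For $\sigma_M^r=1$, we may take $u_M^r=v_M^r=e_{G_r}$, so $v_M^r-\sigma_M^r u_M^r=0$.
\end{itemize}

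\textbf{Main obstacle: repeated singular values.} Both singular values are highly repeated, so the singular vectors are not unique. The per-index formula is only meaningful if the conclusion holds for every admissible choice of basis. I would therefore argue directly from Lemma~\ref{lemma:grad_flow}. At $t=0$, $Q=H$, $O=H^\top$ and $O^\top O=M$. Writing the gradient in the $D\times D$ form used there,
\begin{equation}
\dot M = \dot Q O + Q\dot O = -\Sigma(M-I)M - M\Sigma(M-I).
\end{equation}
The first term vanishes because $(M-I)M=M^2-M=0$, so $\dot M(0)=-M\Sigma(M-I)$. This matrix is generally nonzero; only its compressions onto the singular subspaces matter.

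Let $U_1$ be an orthonormal basis of $\mathrm{range}(M)$ and $U_0$ one of $\ker M$. By standard degenerate first-order perturbation theory for singular values, the one-sided derivatives of a cluster of singular values equal to $\sigma$ are determined by the block $U_\sigma^\top \dot M\, U_\sigma$. For the cluster at $0$ they are the singular values of this block; for the cluster at $1$ they are the eigenvalues of its symmetric part. Both blocks vanish:
\begin{itemize}
\item $U_1^\top M\Sigma(M-I)U_1=0$, since $(M-I)U_1=0$;
\item $U_0^\top M\Sigma(M-I)U_0=0$, since $U_0^\top M=0$.
\end{itemize}
Hence every singular value of $M$ has zero initial derivative, independently of which singular vectors are chosen. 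This matches the pointwise computation above and proves $\dot\sigma_M^r=0$ for all $r\in\{1,\dots,D\}$.
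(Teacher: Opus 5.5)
Your proof is correct, and it differs from the paper's in one substantive way. Steps 1--2 and the ``naive'' half of Step 3 are exactly the paper's proof. The paper substitutes $M(0)=HH^\top$ into Theorem~\ref{thm:singular_dynamics}, disposes of $\sigma_M^r=0$ via the prefactor and of $\sigma_M^r=1$ by taking $u_M^r=v_M^r$ from Lemma~\ref{lemma:svd_symmetric}, and stops there.

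Your additional argument is a different and more careful route. The formula underlying Theorem~\ref{thm:singular_dynamics}, namely Lemma~\ref{lemma:svd_evo}, holds for simple singular values. At $t=0$, however, the singular values are highly repeated, and $D-D'$ of them are zero, where singular values are not differentiable in general. So a per-index substitution is not by itself a proof. You compute $\dot M(0)=-M\Sigma(M-I)$ directly from Lemma~\ref{lemma:grad_flow} and check that its compressions onto $\mathrm{range}(M)$ and $\ker M$ vanish. This gives the conclusion for every admissible choice of singular vectors, at the cost of citing degenerate perturbation theory instead of a two-line substitution. To be fair to the paper, its case analysis in fact goes through for every unit $u=v\in\mathrm{range}(M)$ and every pair $u,v\in\ker M$, which amounts to your block computation; it just never says so.

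Your route also uses only the explicit initial values, not the balanced-flow machinery. It makes visible something the paper leaves implicit: $\dot M(0)=M\Sigma(I-M)$ is generally nonzero. One checks that $\dot Q(0)=0$, while $\dot O(0)=H^\top\Sigma(I-HH^\top)$ is generally nonzero. So the projection matrices do move at $t=0$; only the spectrum is stationary to first order, and the off-diagonal block $M\Sigma(I-M)$ affects the singular values only at second order.
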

\end{tcolorbox}
\vspace{2pt}

Theorem ~\ref{thm:stability} (Proof in Appendix~\ref{appendix: Proof of thm:stability}) implies that the singular values remain stable at initialization. This mitigates the ``winner-take-all'' effect in random initializations, thereby safeguarding the model against \emph{eRank collapse}.

\textbf{Activation-Based Importance Estimation.}
To construct the index set $G$, we employ activation-based importance estimation commonly used in structured pruning~\cite{muralidharan2024compact}. We estimate the importance of each channel of activations using a small calibration dataset $\mathcal{D}_{\mathrm{pre}}$.
Let $X_i^{(k)} \in \mathbb{R}^{L \times D}$ denote the output hidden representation matrix of the $k$-th sequence in the $i$-th Transformer layer ($X_0^{(k)}$ is the output of the embedding layer). We represent the $l$-th token as $X_i^{l,(k)} \in \mathbb{R}^D$ and its $j$-th channel value as $(X_i^{l,(k)})_j$. First, we compute the mean absolute activation for the $j$-th channel of layer $i$ for this specific sequence:
\vspace{-3pt}
\begin{equation}
    s_{i,j}^{(k)} = \frac{1}{L} \sum_{l=1}^L \left| (X_i^{l,(k)})_j \right|.
\end{equation}
\vspace{-13pt}

\noindent To aggregate the importance across the entire dataset, the global importance score for the $j$-th channel, $\bar{s}_j$, is obtained by averaging the $L_2$ norm of $s_{i,j}^{(k)}$ over all sequences across $N$ Transformer blocks and one embedding layer:
\begin{equation}
    \bar{s}_j = \frac{1}{N+1} \sum_{i=0}^N \sqrt{ \sum_{k=1}^{|\mathcal{D}_{\mathrm{pre}}|} \left( s_{i,j}^{(k)} \right)^2 }.
\end{equation}
Finally, we obtain the set of important channel indices $G = \mathrm{Argmax}(\bar{\mathbf{s}}, D')$, which consists of the indices corresponding to the $D'$ largest values in the global importance vector $\bar{\mathbf{s}} = [\bar{s}_1, \dots, \bar{s}_D]$. Notably, we further introduce various activation-based importance estimation strategies. These strategies demonstrate consistent performance when employed in RED (Appendix \ref{Appendix: Effects of Different Importance Estimation Strategies}), highlighting the robustness of our framework. 
% \TODO{notation}

% \CTODO{ 考虑把这两个共识写到一起？现在太空了，内容 compact 些比较好}

\textbf{Experimental Validation.}
We validate the proposed initialization strategy using a linear autoencoder proxy trained on hidden representations
from the 26th layer of Qwen2.5-3B-Instruct.
We collect representations using 15 samples from SlimOrca, forming $X \in \mathbb{R}^{4169 \times 2048}$ (eRank $290$), and train an autoencoder with a bottleneck dimension
$D'=1200$ for 100 epochs using an Adam optimizer with a learning rate $1\times 10^{-4}$.
As shown in Table~\ref{tab:init_comparison}, random initialization leads to severe \emph{eRank collapse},
with low eRank for hidden ($81.3$) and reconstructed ($67.06$) representations.
Activation-aware initialization maintains substantially higher eRank and achieves
lower reconstruction loss, confirming its effectiveness in mitigating \emph{eRank collapse}. 
In practice, we further visualize the relative singular values (i.e., all singular values divided by the maximum one) of the learnable projection matrices during the training of LRC and RED. As shown in Figure \ref{fig:svd in training}, the singular value space of RED stabilizes in an even distribution, whereas LRC exhibits some relatively small singular values during training. Furthermore, Theorem \ref{thm:singular_dynamics} predicts a winner-take-all dynamic in the early stages of training, namely $\dot{\sigma}_{M}^{r} \propto \sigma_{M}^{r}$. While the exact divergence rate is difficult to predict in the full nonlinear network, we can directly test the core prediction of the theory: whether the empirical growth rate ($\Delta \sigma$) is positively correlated with the singular value ($\sigma$) itself in early training. As summarized in Table \ref{tab:pearson_correlation}, in the early phase (20k--40k), the correlation is strongly positive ($r=0.8948$ for the Up projection). This provides quantitative support for our theoretical prediction that larger singular values grow proportionally faster early on, driving rapid spectral imbalance from small random initialization. As training progresses and singular values grow large, architectural bounds (e.g., RMSNorm, SwiGLU) and the loss function limit further divergence, which eventually turns the correlation negative. However, much of the geometric damage to the representation space (\emph{eRank collapse}) has already been incurred during the early phase. 

\begin{table}[htbp]
\centering
\small
\caption{Pearson correlation ($r$) between empirical singular values ($\sigma$) and their growth ($\Delta \sigma$) across different training intervals for projection matrices (FFN module of the 20-th layer) in LRC-1.5B.}
\label{tab:pearson_correlation}
\begin{tabular}{@{}lccc@{}}
\toprule
\textbf{Interval} & \textbf{Up $r$} & \textbf{Down $r$} & \textbf{Phase} \\
\midrule
20k $\rightarrow$ 40k & \textbf{0.8948} & \textbf{0.5454} & Early \\
60k $\rightarrow$ 80k & 0.0057 & -0.6936 & Transition \\
180k $\rightarrow$ 200k & -0.6703 & -0.9388 & Late \\
\bottomrule
\end{tabular}
\end{table}

% 值得注意的是In Appendix \ref{Appendix: Effects of Different Importance Estimation Strategies}

\begin{table}[t]
\caption{Initialization impact on reconstruction Loss and eRank.}
\label{tab:init_comparison}
\centering
\begin{small}
\vspace{-3pt}
\renewcommand{\arraystretch}{1.25}
\resizebox{\columnwidth}{!}{
\begin{tabular}{lccc}
\toprule
\textbf{Initialization} & \textbf{Recon. Loss}$\downarrow$ & \textbf{Hidden eRank}$\uparrow$ & \textbf{Recon. eRank}$\uparrow$ \\
\midrule
Random & 1.24 & 81.30 & 67.06 \\
\textbf{Activation-aware (Ours)} & \textbf{0.08} & \textbf{263.28} & \textbf{249.44} \\
\bottomrule
\end{tabular}
}
\vspace{-13pt}
\end{small}
\end{table}

\vspace{-8pt}
\begin{figure}[h]
    \centering
    \includegraphics[width=1.0\columnwidth]{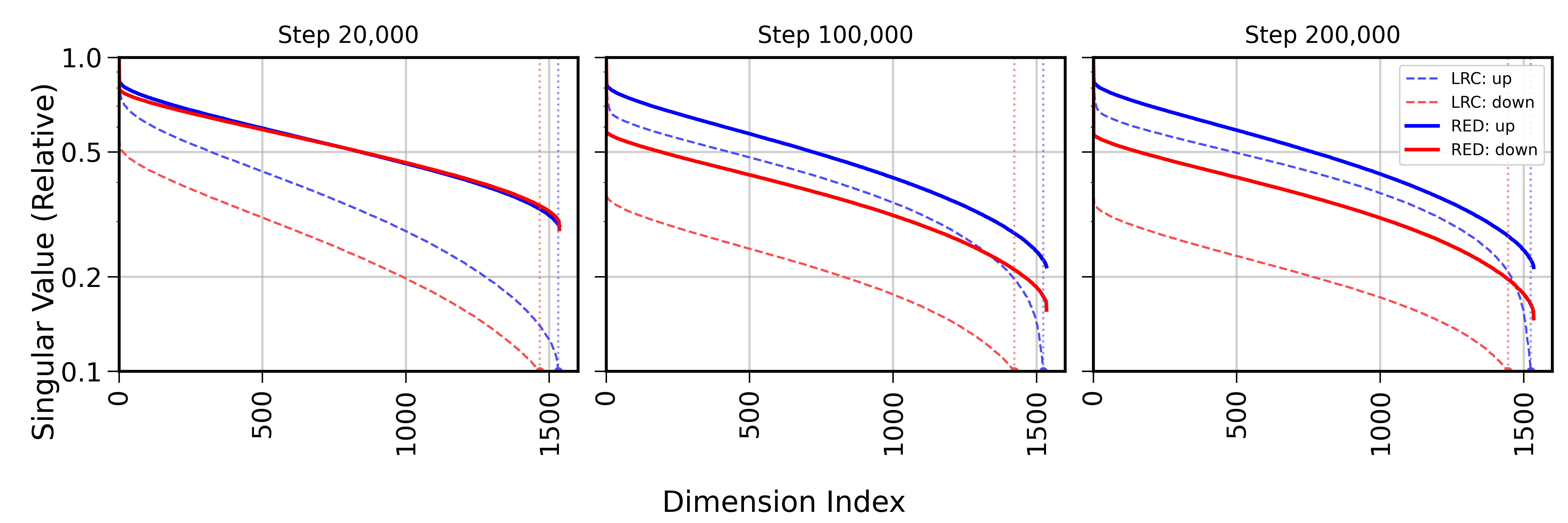}  
    \vspace{-14pt}
    \caption{Relative singular values of projection matrices, aligning the FFN module of the 20th layer of LRC and RED-1.5B.}
    \label{fig:svd in training}
\end{figure}
\vspace{-14pt}

\begin{table*}[htbp]
\centering
\caption{\textbf{Main Results on $\sim$2B Series.} Models are grouped by: \colorbox{myteal!15}{Full-parameter} and \colorbox{colorDistill}{EDistill}. RED series (Ours) achieves SOTA in both \textbf{Gen. Avg.} and \textbf{Reas. Avg.} Compared to the Full-parameter method, it requires far fewer tokens and computational resources.}
\label{tab:2b_comparison}
\scriptsize
\setlength{\tabcolsep}{2.5pt}
\renewcommand{\arraystretch}{1.15}
% RED列依然带有灰色阴影 [gray]{0.92}
\resizebox{\textwidth}{!}{
% \begin{tabular}{l | ccccc | c >{\columncolor[gray]{0.95}}c c >{\columncolor[gray]{0.95}}c c >{\columncolor[gray]{0.95}}c}
\begin{tabular}{l | ccccc | c >{\columncolor{colorFull}}c c >{\columncolor{colorFull}}c c >{\columncolor{colorFull}}c}
\toprule
\textbf{Paradigm} & \multicolumn{5}{c|}{\cellcolor{myteal!15}\textbf{Compute-Intensive Full-parameter}} & \multicolumn{6}{c}{\cellcolor{colorDistill}\textbf{EDistill}} \\
\textbf{Model} & \textbf{Llama3.2} & \textbf{InternLM2} & \textbf{SmolLM2} & \textbf{MiniCPM} & \textbf{Gemma3} & \textbf{LRC} & \textbf{RED (Ours)} & \textbf{LRC} & \textbf{RED (Ours)} & \textbf{LRC} & \textbf{RED (Ours)} \\
\textbf{(Size)} & \textbf{(1.2B)} & \textbf{(1.8B)} & \textbf{(1.7B)} & \textbf{(1.0B)} & \textbf{(1.0B)} & \textbf{(1.5B)} & \textbf{(1.5B)} & \textbf{(1.7B)} & \textbf{(1.7B)} & \textbf{(2.7B)} & \textbf{(2.7B)} \\
\midrule
Teacher & Llama3.1-8B & -- & -- & -- & -- & Llama3.2-3B & Llama3.2-3B & Qwen2.5-3B & Qwen2.5-3B & Llama2-7B & Llama2-7B \\
\# Tokens & 9T & 2T & 11T & 1T & 2T & 10B & 10B & 20B & 20B & 10B & 10B \\
Dataset & N/A & N/A & SmolLM & N/A & N/A & Mixed-1.1 & Mixed-1.1 & Mixed-1.1 & Mixed-1.1 & RedPajama & RedPajama \\
\midrule
\rowcolor[HTML]{F9F9F9} \multicolumn{12}{l}{\emph{\textbf{$\blacktriangledown$ General Ability}} $\uparrow$} \\
MMLU       & 0.37 & 0.41 & 0.48 & 0.44 & 0.24 & 0.50 & 0.51 & 0.55 & 0.55 & 0.32 & 0.32 \\
PIQA       & 0.78 & 0.80 & 0.77 & 0.76 & 0.75 & 0.72 & 0.74 & 0.73 & 0.74 & 0.66 & 0.75 \\
ARC-E      & 0.61 & 0.70 & 0.74 & 0.67 & 0.72 & 0.70 & 0.69 & 0.65 & 0.72 & 0.55 & 0.64 \\
ARC-C      & 0.36 & 0.38 & 0.47 & 0.40 & 0.38 & 0.42 & 0.44 & 0.43 & 0.45 & 0.29 & 0.33 \\
BoolQ      & 0.64 & 0.70 & 0.72 & 0.69 & 0.66 & 0.73 & 0.76 & 0.76 & 0.80 & 0.75 & 0.66 \\
WinoGrande & 0.60 & 0.70 & 0.65 & 0.65 & 0.59 & 0.62 & 0.64 & 0.63 & 0.65 & 0.63 & 0.65 \\
HellaSwag  & 0.64 & 0.65 & 0.71 & 0.61 & 0.62 & 0.56 & 0.62 & 0.59 & 0.65 & 0.65 & 0.69 \\
TruthfulQA & 0.38 & 0.40 & 0.37 & 0.37 & 0.37 & 0.48 & 0.47 & 0.53 & 0.53 & 0.42 & 0.40 \\
\cmidrule(lr){1-12} % <--- 添加横线区分汇总行
\textbf{Gen. Avg.} & 0.55 & 0.59 & 0.61 & 0.57 & 0.54 & 0.59 & 0.61 & 0.61 & \textbf{0.64} & 0.53 & 0.56 \\
\midrule
\rowcolor[HTML]{F9F9F9} \multicolumn{12}{l}{\emph{\textbf{$\blacktriangledown$ Multi-step Reasoning (Math \& Code)} $\uparrow$}} \\
GSM8K      & 0.07 & 0.23 & 0.30 & 0.39 & 0.03 & 0.21 & 0.44 & 0.09 & 0.42 & 0.09 & 0.15 \\
MBPP       & 0.27 & 0.25 & 0.34 & 0.32 & 0.09 & 0.07 & 0.21 & 0.09 & 0.16 & 0.07 & 0.14 \\
HumanEval  & 0.18 & 0.01 & 0.01 & 0.06 & 0.06 & 0.00 & 0.19 & 0.01 & 0.16 & 0.01 & 0.01 \\
\cmidrule(lr){1-12} % <--- 添加横线区分汇总行
\textbf{Reas. Avg.} & 0.17 & 0.16 & 0.22 & 0.26 & 0.06 & 0.09 & \textbf{0.28} & 0.06 & 0.25 & 0.06 & 0.10 \\
\bottomrule
\end{tabular}
}
\vspace{-5pt}
\end{table*}

\section{Numerical Experiments}

\textbf{Training Configuration.} Apart from adjusting the KL divergence temperature from 40 to 1 (following \citet{sreenivas2024llm,muralidharan2024compact}), our training configuration remains identical to the settings in~\citet{hao2025token}. Specifically, we train a series of RED models (RED-1.5/1.7/2.7/4B) using strong open-source Instruct versions of LLMs as teachers~\cite{dubey2024llama,team2024qwen2}.
All models are trained with packed sequences of length 2,048 for computational efficiency. We employ the Adam optimizer with $\beta_1 = 0.9$ and $\beta_2 = 0.999$.
Training runs on 8 NVIDIA H800 GPUs using PyTorch, the transformers package~\cite{wolf2019huggingface}, and deepspeed~\cite{aminabadi2022deepspeed} for distributed parallelism. It is worth noting that the learnable projection matrices inserted into each module across all layers in both RED and LRC are trained separately. The only difference is that in LRC, these matrices are randomly initialized, whereas in RED, they are initialized with specific values to preserve the global substructure. Details of implementation in Appendix \ref{appendix:implementation_details}.
The hyperparameter settings and model configurations are provided in Appendix \ref{Appendix:Training Hyperparameter} and \ref{Appendix:Model Configurations}. When employing the Base model as the teacher model, RED still demonstrated a significant advantage (Appendix \ref{Appendix:Using Base Model As the teacher}).

\textbf{Calibration Datasets.}
Consistent with~\citet{muralidharan2024compact}%minitron
, we require the calibration dataset $\mathcal{D}_{pre}$ to estimate the importance of each channel within the hidden representations, thereby enabling the initialization of projection matrices. To be specific, we sampled five examples from each of the three subsets ``Nemotron-CC-Diverse-QA'', ``Nemotron-SFT-Code'', and ``Nemotron-SFT-General'' within the Nemotron-Pretraining-Dataset\cite{adler2024nemotron} (15 samples in total) to estimate the importance. Compared to the training dataset, the calibration dataset is significantly smaller in volume and computationally less demanding (typically taking less than three minutes on a single GPU). In Appendix \ref{Appendix:The Influence of Calibration Datasets}, we demonstrate that the choice of dataset and the number of samples drawn have a negligible impact on the final selection of important channels, highlighting the robustness of our approach. 

% In Appendix \ref{Appendix: Effects of Different Importance Estimation Strategies}, we further demonstrate that RED is robust to different importance estimation strategies. 

\textbf{Training Datasets.}
% The data we employed for distilling the student model is identical to that of LRC. 
We construct the retraining corpus $\mathcal{D}_{\mathrm{post}}$ by blending data from Fineweb-Edu\cite{penedo2024fineweb}, DCLM\cite{li2024datacomp}, OpenHermes\cite{OpenHermes2.5} and CosmopediaV2\cite{allal2025smollm2}, consistent with ~\citet{hao2025token}. 
The combined pre-training dataset is randomly shuffled. Details are in Appendix \ref{Appendix:Datasets}. It is worth noting that for SliceGPT and LLMPruner, which lack open-source checkpoints, we utilized data from their official implementations. Similar to LRC, we extract 10B tokens from RedPajama to train RED-2.7B, employing Llama2-7B-Chat as the teacher model.

\textbf{Baselines.} To comprehensively evaluate the efficacy of our method, we compare it against a diverse set of representative baselines:
Compute-Intensive Full-parameter Models: models derived from massive retraining or pre-training, representing the performance upper bound under high-resource settings (trillions of tokens and extensive GPU hours). We include \textit{pruning-then-retraining} methods such as Minitron~\cite{muralidharan2024compact} and Llama-3.2-1B/3B~\cite{dubey2024llama}, as well as SOTA \textit{from-scratch} SLMs including MiniCPM~\cite{hu2024minicpm}, SmolLM2~\cite{allal2025smollm2}, Gemma3~\cite{team2025gemma}, and InternLM2~\cite{cai2024internlm2}. LoRA-style Recovery: SliceGPT~\cite{ashkboos2024slicegpt} and LLM-Pruner~\cite{ma2023llm}. EDistill: \textit{depth-reduced} LLMStreamline~\cite{chen2024streamlining}, and \emph{width-reduced} LRC~\cite{hao2025token}. It is worth noting that our baseline and RED series are all Base model. RED models also outperform the Instruct versions of many SLMs (Appendix \ref{Appendix:Benchmarking Base vs. Instruct Models}). As there are no checkpoints available, SliceGPT, LLMPruner, and LRC-2.7B are trained based on the official implementation. Reported Full-parameter SLMs, the LRC series, and LLMStreamline are from  HuggingFace (checkpoints in Appendix~\ref{Appendix:model checkpoint}).

\textbf{Evaluation protocols}. To ensure a fair and reproducible comparison, all evaluations are conducted using the lm-evaluation-harness framework~\cite{eval-harness}, leveraging the transformers package~\cite{wolf2019huggingface} as the inference backend. We evaluate our model across a suite of benchmarks, categorized by the core ability they assess:
(i) \textbf{General Ability}: Multiple-choice tasks targeting: (1) \emph{Scientific Understanding and Reading Comprehension}: ARC-E/C~\cite{clark2018think} and BoolQ~\cite{clark2019boolq}; (2) \emph{Commonsense Understanding}: PIQA~\cite{bisk2020piqa}, WinoGrande~\cite{sakaguchi2021winogrande}, and HellaSwag~\cite{zellers2019hellaswag}; and (3) \emph{World Knowledge \& Truthfulness}: MMLU~\cite{hendrycks2020measuring} and TruthfulQA~\cite{lin2022truthfulqa}. 
(ii) \textbf{Multi-step Reasoning}: For tasks requiring complex logical execution and generation, we evaluate: (1) \emph{Mathematical Reasoning}: GSM8K~\cite{cobbe2021training}, (2) \emph{Code Generation}: HumanEval~\cite{chen2021evaluating} and MBPP~\cite{austin2021program}. 
Detailed settings for each benchmark are provided in Appendix~\ref{appendix:benchmarks}.

\begin{table*}[htbp]
\centering
% \caption{\textbf{Main Results on $\sim$4B Series.} Paradigm classification: \colorbox{colorFull}{Full-Parameter}, \colorbox{colorLoRA}{LoRA-style}, and \colorbox{colorDistill}{EDistill}. RED demonstrates SOTA performance despite the 4B Fully-Parameter models utilizing greater computational resources and incorporating substantial amounts of code and mathematical corpora.}
\caption{\textbf{Main results on the $\sim$4B family.} Paradigm classification: \colorbox{myteal!15}{Full-Parameter}, \colorbox{colorLoRA}{LoRA-style}, and \colorbox{colorDistill}{EDistill}. All models are evaluated with the same setup (see Appendix~\ref{appendix:benchmarks}). Despite being trained with only 18B tokens on the Mixed-2.0 corpus, RED achieves state-of-the-art multi-step reasoning performance among EDistilled methods, and remains competitive with compute-intensive full-parameter baselines that use substantially larger data / compute budgets (often including large-scale code and math corpora).}
\label{tab:4b_comparison}
\scriptsize
\setlength{\tabcolsep}{3.8pt}
\renewcommand{\arraystretch}{1.12}
% 定义：最后一列 RED (Ours) 添加灰色背景 [gray]{0.92}
\resizebox{\textwidth}{!}{
% \begin{tabular}{l | ccc | cc | ccc >{\columncolor[gray]{0.92}}c}
% \begin{tabular}{l | ccc | cc | ccc >{\columncolor[gray]{0.95}}c}
\begin{tabular}{l | ccc | cc | ccc >{\columncolor{colorFull}}c}
\toprule
\textbf{Paradigm} & \multicolumn{3}{c|}{\cellcolor{myteal!15}\textbf{Full-Parameter}} & \multicolumn{2}{c|}{\cellcolor{colorLoRA}\textbf{LoRA-style}} & \multicolumn{4}{c}{\cellcolor{colorDistill}\textbf{EDistill}} \\
\textbf{Model} & \textbf{Llama3.2} & \textbf{Gemma3} & \textbf{Minitron} & \textbf{LLM-Pruner} & \textbf{SliceGPT} & \textbf{LLMStr.} & \textbf{LLMStr.} & \textbf{LRC} & \textbf{RED (Ours)} \\
\textbf{(Size)} & \textbf{(3.0B)} & \textbf{(4.0B)} & \textbf{(4.0B)} & \textbf{(4.7B)} & \textbf{(4.7B)} & \textbf{(4.7B)} & \textbf{(5.4B)} & \textbf{(4.0B)} & \textbf{(4.0B)} \\
\midrule
Teacher & Llama3.1-8/70B & -- & Minitron-15B & Llama2-7B & Llama2-7B & Llama2-7B & Llama3.1-8B & Qwen2.5-7B & Qwen2.5-7B \\
\# Tokens & 9T & 4T & 94B & 50M & 50M & 60M & 1.3B & 18B & 18B \\
Dataset & N/A & N/A & N/A & Alpaca & Alpaca & SlimPajama & SlimPajama & Mixed-2.0 & Mixed-2.0 \\
\midrule
% \rowcolor[HTML]{F9F9F9} \multicolumn{10}{l}{\emph{\textbf{General Ability}} $\uparrow$} \\
\rowcolor[HTML]{F9F9F9} \multicolumn{10}{l}{\emph{\textbf{$\blacktriangledown$ General Ability}} $\uparrow$} \\
MMLU       & 0.54 & 0.58 & 0.56 & 0.23 & 0.26 & 0.31 & 0.62 & 0.65 & 0.62 \\
PIQA       & 0.78 & 0.80 & 0.78 & 0.72 & 0.68 & 0.73 & 0.76 & 0.76 & 0.79 \\
ARC-E      & 0.72 & 0.82 & 0.76 & 0.51 & 0.53 & 0.63 & 0.71 & 0.75 & 0.77 \\
ARC-C      & 0.46 & 0.55 & 0.45 & 0.25 & 0.27 & 0.35 & 0.43 & 0.52 & 0.53 \\
BoolQ      & 0.73 & 0.79 & 0.72 & 0.65 & 0.66 & 0.75 & 0.78 & 0.85 & 0.84 \\
WinoGrande & 0.70 & 0.70 & 0.68 & 0.55 & 0.58 & 0.66 & 0.71 & 0.68 & 0.71 \\
HellaSwag  & 0.74 & 0.62 & 0.72 & 0.58 & 0.51 & 0.67 & 0.69 & 0.71 & 0.75 \\
TruthfulQA & 0.39 & 0.40 & 0.43 & 0.38 & 0.37 & 0.41 & 0.45 & 0.56 & 0.54 \\
\cmidrule(lr){1-10} % 区分 General Ability 的汇总行
\textbf{Gen. Avg.} & 0.63 & 0.66 & 0.64 & 0.48 & 0.48 & 0.56 & 0.64 & \textbf{0.69} & \textbf{0.69} \\
\midrule
\rowcolor[HTML]{F9F9F9} \multicolumn{10}{l}{\emph{\textbf{$\blacktriangledown$ Multi-step Reasoning (Math \& Code)} $\uparrow$}} \\
% \rowcolor{myteal!15} \multicolumn{10}{l}{\emph{\textbf{Multi-step Reasoning (Math \& Code)} $\uparrow$}} \\
GSM8K      & 0.25 & 0.38 & 0.25 & 0.01 & 0.03 & 0.03 & 0.22 & 0.34 & \textbf{0.49} \\
MBPP       & 0.38 & 0.47 & 0.35 & 0.02 & 0.04 & 0.11 & 0.23 & 0.14 & \textbf{0.28} \\
HumanEval  & 0.27 & 0.35 & 0.04 & 0.00 & 0.01 & 0.00 & 0.16 & 0.07 & \textbf{0.23} \\
\cmidrule(lr){1-10} % 区分 Reasoning 的汇总行
\textbf{Reas. Avg.} & 0.30 & 0.40 & 0.21 & 0.01 & 0.03 & 0.05 & 0.20 & 0.18 & \textbf{0.33} \\
\bottomrule
\end{tabular}
}
\vspace{-6pt}
\end{table*}

\textbf{Superiority at the 2B Scale.}
As shown in Table~\ref{tab:2b_comparison}, while the SOTA width-reduction Edistill method LRC  suffers from severe performance degradation in multi-step reasoning (e.g., dropping to 0.06--0.09 on Reas. Avg.), RED effectively recovers these capabilities to 0.10--0.28, representing a $\sim$2$\times$ to 4$\times$ improvement across all teacher models. When compared to Full-parameter pretraining or retraining SLMs such as Gemma3-1B, Llama3.2-1B, and SmolLM2-1.7B, RED achieves SOTA in both \emph{Gen. Avg.} and \emph{Reas. Avg.}. Crucially, RED achieves these results with a fraction of the data budget, using only 10B--20B tokens compared to the trillions of tokens required for Full-parameter training. This confirms that our method allows the model to inherit the ability from the teacher more efficiently.

\textbf{Scalability at the 4B Scale.}
The advantages of RED remain robust as the model scales to 4B parameters. Compared to Full-parameter \textit{pruning-then-retraining} methods like Minitron-4B and Llama-3.2-3B, which involve extensive high-resource retraining, RED maintains a significant edge. 
Notably, Gemma3-4B exhibits a significant leap in reasoning performance compared to its 2B version. This is likely attributed to the massive scale of its proprietary pre-training dataset (4T tokens), which is heavily enriched with math and code corpora. In contrast, RED-4B is distilled using only 18B tokens from educational datasets. Despite this 200$\times$ difference in data volume, RED-4B still achieves a competitive \emph{Reas. Avg.} of 0.33 and a superior \emph{Gen. Avg.} of 0.69. These results highlight that RED provides a data-efficient alternative to massive data scaling.

% \subsection{Analysis}
% \label{Sec:Model Analysis}
% In this section, we first analyze the high efficiency of our methods, including training parameters, memory overhead, and training time, and inference throughoutput. We then compare the impact of different channel importance score calculation methods. It is worth noting that in Appendix \ref{Appendix:Training Dynamics}, we demonstrate that the multi-step reasoning ability of RED-1.5B exhibits significantly faster recovery speed during training compared to LRC-1.5B. In Appendix \ref{Appendix: Effects of Different Importance Estimation Strategies}, we further demonstrate that RED is robust to different Importance Estimation strategies (Minitron style or QR-based).

\textbf{Efficiency.} To quantify the computational efficiency, we take RED-1.5B (comprising 0.94B trainable parameters) as a representative case. When training on a single node equipped with 8$\times$H800 GPUs, our method consumes a peak memory of 56~GB per GPU with a per-card batch size of 6,144 tokens. The entire training process converges within approximately 30 hours, utilizing a total data budget of only 10B tokens. In stark contrast, Full-parameter retraining baselines such as Minitron-4B and Llama-3.2-3B require significantly more computational resources, often involving 256 or even thousands of A100/H100 GPUs, and are trained on much larger datasets ranging from 94B to 9T tokens. This comparison highlights that our approach achieves a high level of performance with significantly less compute and data overhead. In Appendix \ref{Appendix:Training Dynamics}, we demonstrate that the multi-step reasoning ability of RED exhibits significantly faster recovery speed during training compared to LRC.

\begin{table}[htbp]
\centering
\caption{Impact of distillation temperature $\tau$ on general ability (MMLU) and multi-step reasoning (GSM8K). To ensure fairness, models are trained under identical settings. \textbf{Bold} indicates the best performance for each method.}
\vspace{-2pt}
\label{tab:temperature_impact}
\renewcommand{\arraystretch}{0.95}
\resizebox{\columnwidth}{!}{
\begin{tabular}{@{}llcc@{}}
\toprule
\textbf{Method} & \textbf{Temprature} & \textbf{General (MMLU)} $\uparrow$ & \textbf{Reasoning (GSM8K)} $\uparrow$ \\ \midrule
\multirow{2}{*}{LRC-1.5B} & $\tau=1$  & 0.25 & 0.05 \\
                                & $\tau=40$ & 0.50 & 0.04 \\ \midrule
\multirow{2}{*}{RED-1.5B (Ours)} & $\tau=1$  & 0.50 & \textbf{0.44} \\
                                & $\tau=40$ & \textbf{0.52} & 0.08 \\ \bottomrule
\end{tabular}
}
\vspace{-12pt}
\end{table}

% \textbf{Sensitivity to Distillation Temperature.}
% We evaluate the impact of distillation temperature $\tau$ on RED and LRC under identical experimental conditions with results shown in Table~\ref{tab:temperature_impact}), LRC fails to converge at $\tau=1$ and requires a high temperature ($\tau=40$) to smooth the the probability distribution, likely because its random projection matrices disrupt the student's internal structures, preventing alignment with the teacher under strict constraints. 
% In contrast, RED achieves superior performance and robustness at $\tau=1$. 
% Notably, RED reaches its peak reasoning performance (0.44 on GSM8K) at $\tau=1$, whereas $\tau=40$ leads to a catastrophic collapse in reasoning (0.08). This suggests that multi-step reasoning tasks rely on high-precision ``dark knowledge'' in the teacher's distribution. 
% By preserving the teacher's spectral stability and residual structure, RED uniquely captures this sharp information without requiring the over-smoothing effect of high temperatures.

\textbf{Sensitivity to Distillation Temperature.}
We study the effect of the distillation temperature $\tau$ on RED-1.5B and LRC-1.5B under identical training settings, with results summarized in Table~\ref{tab:temperature_impact}. LRC-1.5B fails to converge at a low temperature ($\tau=1$) and only becomes trainable at a high temperature ($\tau=40$), which smooths the teacher distribution. We attribute this behavior to its randomly initialized projection matrices, which disrupt the student’s internal structure and hinder alignment with the teacher under strict (low-temperature) supervision. In contrast, RED-1.5B achieves both its best general performance and its peak multi-step reasoning performance at $\tau=1$. Specifically, RED-1.5B attains a GSM8K accuracy of 0.44 at $\tau=1$, whereas increasing the temperature to $\tau=40$ leads to a severe degradation in reasoning performance (dropping to 0.08). This sharp contrast indicates that multi-step reasoning relies on high-precision ``dark knowledge'' encoded in the teacher’s distribution, which is diluted by high-temperature smoothing.
By preserving spectral stability and residual structure through activation-aware initialization, RED enables effective alignment at low temperature and captures this sharp reasoning-relevant information without relying on over-smoothed distillation targets.

\textbf{Comparison of Orthogonalization Techniques.}
Orthogonalization techniques are widely used in self-supervised learning and knowledge distillation to prevent representation collapse in parameter matrices \cite{he2024preventing,miles2024vkd}. In this section, we compare these generic techniques with our proposed method in both a linear autoencoder proxy and a full distillation setup. Specifically, we evaluate two common baselines: (1) \emph{Random Orthogonal Initialization} \cite{miles2024vkd}, which initializes the learnable projection matrix as a random orthogonal matrix, and (2) \emph{Orthogonal Regularization (OR)} \cite{he2024preventing}, which applies an $L_2$ penalty on projection matrices to encourage orthogonality during training. The results are summarized in Table \ref{tab:orthogonal_comparison}. While generic orthogonal initialization and regularization improve the eRank compared to standard random initialization, they still start from random singular directions. Consequently, they significantly underperform RED in both reconstruction loss and downstream reasoning tasks. This clarifies a critical distinction between RED and generic orthogonal methods: \textbf{RED not only encourages a healthier singular value spectrum but also explicitly preserves important teacher subspaces at initialization.}

\begin{table}[htbp]
\vspace{-8pt}
\centering
\small
\caption{Comparison of orthogonalization techniques. \textbf{Top:} Linear autoencoder proxy. \textbf{Bottom:} Edistill setup evaluating downstream performance on MMLU and GSM8K.}
\label{tab:orthogonal_comparison}
% First sub-table: Autoencoder
\begin{tabular}{@{}lcc@{}}
\toprule
\textbf{Method (Autoencoder)} & \textbf{Recon. $\downarrow$} & \textbf{eRank $\uparrow$} \\
\midrule
Random & 1.24 & 67.06 \\
Random + OR (0.001) & 1.24 & 111.82 \\
Random + OR (0.01) & 1.52 & 172.60 \\
Random Orth. Init & 0.50 & 149.97 \\
Ours (RED) & \textbf{0.08} & \textbf{263.28} \\
\bottomrule
\end{tabular}

\vspace{0.1cm} % Add some space between the two panels

% Second sub-table: Full Setup
\begin{tabular}{@{}lcc@{}}
\toprule
\textbf{Method (Edistll)} & \textbf{MMLU} & \textbf{GSM8K} \\
\midrule
LRC (Random) & 0.50 & 0.21 \\
LRC + Random Orth. Init & 0.50 & 0.30 \\
LRC + OR (1e-4) & 0.48 & 0.26 \\
RED (Ours) & \textbf{0.51} & \textbf{0.44} \\
\bottomrule
\end{tabular}
\vspace{-10pt}
\end{table}

\textbf{Impact of eRank Collapse on Multi-step Reasoning Reasoning.}
Classification tasks (e.g., MMLU) primarily require preserving the Top-1 ranking among few candidates, which is robust to blurry representations. In contrast, multi-step reasoning requires distinguishing and updating intermediate states over a long trajectory; collapsed eRank makes these states inseparable, causing cascading errors. We demonstrate this via a GSM8K case study.

\begin{tcolorbox}[colback=gray!5, colframe=gray!40, boxrule=0.5pt, arc=2pt, left=5pt, right=5pt, top=5pt, bottom=5pt]
\small
\textbf{Prompt:} A tree grows 4 inches every month. If the tree is currently 5 feet tall, how tall will it be in 2 years? (1 foot = 12 inches) \\
\rule{\linewidth}{0.2pt}
\textbf{RED (Ours, Succeeded):} The tree is currently $5 \times 12 = \mathbf{60}$ inches tall. The tree grows 4 inches every month. So, in 1 year, the tree will grow $4 \times 12 = 48$ inches. In 2 years, the tree will grow $48 \times 2 = \mathbf{96}$ inches. The tree will be $\mathbf{60 + 96 = 156}$ inches tall. \\[4pt]
\textbf{LRC (Failed):} The tree is currently $5 \times 12 = \mathbf{60}$ inches tall. In 1 year, the tree will grow $4 \times 12 = 48$ inches. In 2 years, the tree will grow $48 + 48 = \mathbf{96}$ inches. So, the tree will be $\mathbf{96 \times 12 = 1,192}$ inches tall.

\end{tcolorbox} 

As shown above, RED preserves key states ({$60$ and $96$), whereas LRC loses the initial height ($60$) and wrongly converts $96$ again. We trace this failure to three internal phenomena aligning with our theory:
\textbf{(1) Representation Indistinguishability:} The average cosine similarity of hidden representations across reasoning steps is \textbf{high for LRC (0.40)} but \textbf{low for RED (0.13)}. Collapsed eRank forces distinct reasoning steps into overlapping subspaces, making them hard to distinguish.
\textbf{(2) Loss of Crucial Context:} For the critical token ``\emph{be}'' (in ``\emph{the tree will be}''), LRC's Top-5 nearest tokens in representation space (\texttt{48}, \texttt{12}, \texttt{4}) \textbf{completely miss the crucial initial state \texttt{60}}. Conversely, \textbf{RED successfully retrieves \texttt{60}} in its Top-5.
\textbf{(3) High Output Entropy:} At the final operator decision (target: \texttt{+}), LRC yields a high entropy of \textbf{1.28} (Top-3 probs: \texttt{*} 50\%, \texttt{/} 20\%, \texttt{+} 20\%), confirming that eRank collapse blurs output token distinctions. In contrast, RED exhibits a sharp entropy of \textbf{0.18}, confidently predicting \textbf{\texttt{+} (97\%)}.
\vspace{-12pt}

\section{Conclusion and Discussion}
\vspace{-8pt}
In this work, we identify and systematically investigate \emph{reasoning collapse} in  EDistilled LLMs, where the multi-step reasoning ability severely degrades. Through theoretical analysis, we reveal how the singular values of randomly initialized projection matrices
become unevenly distributed, leading to eRank collapse and thus token indistinguishability, undermining the multi-step reasoning ability. 
To resolve this issue, we introduce \textbf{RED}, a reasoning-preserved efficient distillation framework utilizing \emph{activation-aware initialization}. Our theoretical analysis and extensive experiments on Llama and Qwen series demonstrate that our RED effectively mitigates eRank collapse and maintains strong multi-step reasoning ability while preserving SOTA general ability with high training efficiency.

\textbf{Future Work.} We aim to extend the RED framework to distilling strong reasoning models (e.g., DeepSeek-R1 series~\cite{guo2025deepseek}). Another compelling direction is the adaptation of RED to Multimodal Large Language Models and Vision-Language-Action models~\cite{wang2023efficientvlm,xu2025towards}. These architectures, often constrained by high inference latency in embodied AI and real-time vision tasks.

\clearpage

\section*{Acknowledgements}
The work described in this paper is supported by Research Grants Council of the Hong Kong Special Administrative Region, China (Project No. PolyU/15206322 and PolyU/15227424); Otto Poon Charitable Foundation Smart Cities Research Institute, The Hong Kong Polytechnic University (CD06); Otto Poon Research Institute for Climate-Resilient Infrastructure, The Hong Kong Polytechnic University (ZH8U); Research Centre for Digital Transformation of Tourism, The Hong Kong Polytechnic University (BBGU); and International Centre of Urban Energy Nexus (CE0G). The contents of this article reflect the views of the authors, who are responsible for the facts and accuracy of the information presented herein.

\section*{Impact Statement}
% \CTODO{ 感觉不需要说 small language model，把 impact 讲小了，可以说 beneficial to deployment of LLMs on edge devices 之类的}
This work aims to facilitate the seamless transition of frontier AI to the edge by enabling the efficient development of compact, high-performance language models, which require significantly fewer computational resources for deployment. By introducing activation-aware initialization for projection matrices within the Efficient Distillation (EDistill) framework, we enable the compression of large-scale teacher models into compact student models using standard distillation objectives,  which not only maintains high training efficiency but also mitigates the ``reasoning collapse'' prevalent in existing methods, thus providing the community with more robust and capable Base models. We believe this research contributes to the democratization of AI by making high-performing models accessible on resource-constrained devices. Our study utilizes exclusively publicly available models and educational datasets, and we do not foresee any direct negative societal impacts arising from this work.

\bibliography{example_paper}
\bibliographystyle{icml2026}

%%%%%%%%%%%%%%%%%%%%%%%%%%%%%%%%%%%%%%%%%%%%%%%%%%%%%%%%%%%%%%%%%%%%%%%%%%%%%%%
%%%%%%%%%%%%%%%%%%%%%%%%%%%%%%%%%%%%%%%%%%%%%%%%%%%%%%%%%%%%%%%%%%%%%%%%%%%%%%%
% APPENDIX
%%%%%%%%%%%%%%%%%%%%%%%%%%%%%%%%%%%%%%%%%%%%%%%%%%%%%%%%%%%%%%%%%%%%%%%%%%%%%%%
%%%%%%%%%%%%%%%%%%%%%%%%%%%%%%%%%%%%%%%%%%%%%%%%%%%%%%%%%%%%%%%%%%%%%%%%%%%%%%%

\onecolumn
\newpage
\appendix

\addcontentsline{toc}{section}{Appendix} % 手动把 Appendix 加入侧边栏

\part{Appendix} % 这里是锚点！虽然显示出来只是 "Appendix"，但它是核心
\parttoc % 生成局部目录

\textbf{Table of Contents}\\
\noindent\makebox[\textwidth]{\rule{\textwidth}{0.4pt}}

\textbf{\hyperref[appendix:limitations]{A Limitations and Overview}} \dotfill \pageref{appendix:limitations}

\vspace{3pt}

\textbf{\hyperref[appendix: Distillation Objectives]{B Distillation Objectives}} \dotfill \pageref{appendix: Distillation Objectives}

\textbf{\hyperref[appendix:implementation_details]{C Implementation Details of RED}} \dotfill \pageref{appendix:implementation_details}

\vspace{3pt}

\textbf{\hyperref[appendix:proof]{D Theoretical Analysis}} \dotfill \pageref{appendix:proof}

\hspace{0.5cm} \hyperref[appendix: Proof of lemma:grad_flow]{D.1 Evolution of Projection Matrices} \dotfill \pageref{appendix: Proof of lemma:grad_flow}

\hspace{0.5cm} \hyperref[appendix: Proof of lemma:balancedness]{D.2 Balancedness Invariant} \dotfill \pageref{appendix: Proof of lemma:balancedness}

\hspace{0.5cm} \hyperref[appendix: Proof of thm:singular_dynamics]{D.3 Singular Value Dynamics} \dotfill \pageref{appendix: Proof of thm:singular_dynamics}

\hspace{0.5cm} \hyperref[appendix: Proof of lemma:spectral_coupling]{D.4 Spectral Coupling} \dotfill \pageref{appendix: Proof of lemma:spectral_coupling}

\hspace{0.5cm} \hyperref[appendix: Proof of thm:rep_distinguishability]{D.5 Representation Distinguishability Bound} \dotfill \pageref{appendix: Proof of thm:rep_distinguishability}

\hspace{0.5cm} \hyperref[appendix: Proof of thm:logit_distinguishability]{D.6 Probability Distinguishability Bound} \dotfill \pageref{appendix: Proof of thm:logit_distinguishability}

\hspace{0.5cm} \hyperref[appendix: Proof of thm:limit]{D.7 Binary State Collapse} \dotfill \pageref{appendix: Proof of thm:limit}

\hspace{0.5cm} \hyperref[appendix: Proof of thm:stability]{D.8 Vanishing Initial Dynamics} \dotfill \pageref{appendix: Proof of thm:stability}

\vspace{3pt}

\textbf{\hyperref[Appendix: Representation Geometry of Depth-reduced LLMs]{E Representation Geometry Analysis}} \dotfill \pageref{Appendix: Representation Geometry of Depth-reduced LLMs}

\vspace{3pt}

\textbf{\hyperref[Appendix:Training Hyperparameter]{F Experimental Settings}} \dotfill \pageref{Appendix:Training Hyperparameter}

\hspace{0.5cm} \hyperref[Appendix:Training Hyperparameter]{F.1 Training Hyperparameters} \dotfill \pageref{Appendix:Training Hyperparameter}

\hspace{0.5cm} \hyperref[Appendix:Datasets]{F.2 Datasets} \dotfill \pageref{Appendix:Datasets}

\hspace{0.5cm} \hyperref[Appendix:Model Configurations]{F.3 Model Configurations} \dotfill \pageref{Appendix:Model Configurations}

\hspace{0.5cm} \hyperref[Appendix:model checkpoint]{F.4 Model Checkpoints} \dotfill \pageref{Appendix:model checkpoint}

\hspace{0.5cm} \hyperref[appendix:benchmarks]{F.5 Benchmarks and Evaluation Protocols} \dotfill \pageref{appendix:benchmarks}

\vspace{3pt}

\textbf{\hyperref[Appendix:Training Dynamics]{G Extra Ablations and Analysis}} \dotfill \pageref{Appendix:Training Dynamics}

\hspace{0.5cm} \hyperref[Appendix:Training Dynamics]{G.1 Training Dynamics and Convergence} \dotfill \pageref{Appendix:Training Dynamics}

\hspace{0.5cm} \hyperref[Appendix:The Influence of Calibration Datasets]{G.2 Calibration Dataset Robustness} \dotfill \pageref{Appendix:The Influence of Calibration Datasets}

\hspace{0.5cm} \hyperref[Appendix:Using Base Model As the teacher]{G.3 Using Base Models as Teachers} \dotfill \pageref{Appendix:Using Base Model As the teacher}

\hspace{0.5cm} \hyperref[Appendix:Benchmarking Base vs. Instruct Models]{G.4 Base vs. Instruct Models} \dotfill \pageref{Appendix:Benchmarking Base vs. Instruct Models}

\hspace{0.5cm} \hyperref[Appendix:Irreversibility of Dimensional and Reasoning Collapse]{G.5 Irreversibility of eRank Collapse} \dotfill \pageref{Appendix:Irreversibility of Dimensional and Reasoning Collapse}

\hspace{0.5cm} \hyperref[Appendix: Effects of Different Importance Estimation Strategies]{G.6 Importance Estimation Strategies} \dotfill \pageref{Appendix: Effects of Different Importance Estimation Strategies}

\vspace{3pt}

\textbf{\hyperref[Appendix:Details of Other Important Score Calculation Strategies]{H Additional Methodological Details}} \dotfill \pageref{Appendix:Details of Other Important Score Calculation Strategies}

\hspace{0.5cm} \hyperref[subsec:qr_importance]{H.1 QR-based Importance Estimation} \dotfill \pageref{subsec:qr_importance}

\hspace{0.5cm} \hyperref[Appendix:Details of Other Important Score Calculation Strategies]{H.2 Mintron-style Activation Estimation} \dotfill \pageref{Appendix:Details of Other Important Score Calculation Strategies}

\vspace{3pt}

\textbf{\hyperref[Appendix:RMSNorm]{I Extra Definitions}} \dotfill \pageref{Appendix:RMSNorm}

\hspace{0.5cm} \hyperref[Appendix:RMSNorm]{I.1 RMSNorm Definition} \dotfill \pageref{Appendix:RMSNorm}

\hspace{0.5cm} \hyperref[appendix:token_entropy]{I.2 Token Entropy Definition} \dotfill \pageref{appendix:token_entropy}

\noindent\makebox[\textwidth]{\rule{\textwidth}{0.4pt}}

\vspace{5pt}

\section{Limitations}
\label{appendix:limitations}
Despite the promising results, our study has several limitations that warrant further exploration. First,  our distillation process primarily utilized general-purpose corpora; the specific impact of integrating specialized mathematical or code-heavy datasets on the reasoning-efficiency trade-off remains to be systematically quantified. Furthermore, while RED provides a robust \emph{base} for reasoning, we have not yet evaluated the synergistic effects of post-training techniques, such as Reinforcement Learning or instruction-tuning on specialized datasets, in further unlocking the ability of RED-distilled models.

\section{Distillation Objectives}
\label{appendix: Distillation Objectives}

In this section, we introduce common distillation objectives, which align the smaller student model with the teacher model. Let $X_{i}$ and $X_{j}^\prime \in \mathbb{R}^{L \times D}$ denote the hidden representations of the $i$-th layer in the teacher and the $j$-th layer in the student, respectively. Similarly, let $Z, Z^\prime \in \mathbb{R}^{L \times V}$ represent their output logits as defined in Eq.~\eqref{eq:unembedding} and $Z^l, (Z^l)^\prime \in \mathbb{R}^V$ denote the logit vectors of the $l$-th token for the teacher and student.

A primary objective is \emph{representation alignment}, which minimizes the discrepancy between intermediate hidden representations. A representative formulation utilizes the Mean Squared Error (MSE) with the Frobenius norm:
\begin{equation}
    \mathcal{L}_{\mathrm{REP}} = \frac{1}{L} \| X_{i} - X_{j}^\prime \|_F^2.
\end{equation}
Another fundamental objective is the \emph{language modeling loss} (i.e., next token prediction). Given the ground truth token sequence $y=\{y_1, \dots, y_L\}$, the student model is trained to minimize the standard negative log-likelihood:
\begin{equation}
    \mathcal{L}_{\mathrm{LM}} = - \frac{1}{L} \sum_{l=1}^{L} \log \left( \frac{\exp((Z^l)^\prime_{y_l})}{\sum_{k=1}^V \exp((Z^l)'_{k})} \right),
\end{equation}
where $(Z^l)^\prime_{y_l}$ denotes the logit value corresponding to the ground truth token $y_l$ at position $l$. 

Furthermore, \emph{logit distillation} is frequently adopted to transfer the detailed probability distributions (``dark knowledge'') from the teacher. This is formally expressed using the Kullback-Leibler (KL) divergence between temperature-scaled soft targets:
\begin{equation}
    \mathcal{L}_{\mathrm{KL}} = \frac{1}{L} \sum_{l=1}^{L} \mathrm{KL}\left( \mathrm{softmax}\left(\frac{Z^l}{\tau}\right) \Big\| \, \mathrm{softmax}\left(\frac{(Z^l)^\prime}{\tau}\right) \right),
\end{equation}
where $\tau$ is a temperature coefficient. Existing methods may employ these objectives individually or in a weighted combination.

\section{Implementation Details of RED}
\label{appendix:implementation_details}

In this section, we provide the specific implementation details of RED. Following the architectural formulation in Section~\ref{sec:preliminaries} and the width-reduced paradigm, we detail how the projection matrices are integrated and trained.

For each Transformer layer $i$ in the teacher LLM, we identify the key weight matrices that define the hidden state transformations. Specifically, for the Llama-style architecture, these include the attention query, key, value, and output matrices $\{W_{q,i}, W_{k,i}, W_{v,i}, W_{o,i}\}$ and the feed-forward network matrices $\{W_{gate,i}, W_{up,i}, W_{down,i}\}$.

To bridge the original residual dimension $D$ and the student's reduced dimension $D'$, we introduce a pair of learnable projection matrices for each weight matrix. Unlike LRC's random initialization, we employ the activation-aware initialization described in the main text.
During training, the teacher's original weights remain frozen. The student effectively inherits the teacher's knowledge by performing the forward pass through these projection-wrapped modules.  We then calculate the feature alignment loss $\mathcal{L}_{\mathrm{REP}}$ based on the collected intermediate features, and obtain $\mathcal{L}_{\mathrm{LM}}$ and $\mathcal{L}_{\mathrm{LM}}$ using the final logits.

After the training is complete, the projection matrices are merged with the frozen weights to produce a hardware-friendly student model with a native residual dimension of $D'$.

\section{Theoretical Proof}
\label{appendix:proof}

\subsection{Proof of Lemma \ref{lemma:grad_flow} [Evolution of Projection Matrices]}
\label{appendix: Proof of lemma:grad_flow}

To ensure the rigor of the derivation, we first summarize the matrix calculus identities used in the proof, which are standard results in matrix analysis~\cite{petersen2008matrix}.

\begin{tcolorbox}[colback=colorDistill, colframe=myblue!8, boxrule=0pt, sharp corners=all, boxsep=0pt, left=5pt, right=5pt, top=5pt, bottom=5pt, before skip=2pt, after skip=2pt]
\begin{lemma}[Matrix Calculus Identities]
\label{lemma:matrix_identities}
For any matrices $A, B, C$ with compatible dimensions, the following gradients hold:
\begin{enumerate}[label=(\roman*)]
    \item $\frac{\partial \mathrm{Tr}(AB)}{\partial B} = A^\top$ \label{id:linear}
    \item $\frac{\partial \mathrm{Tr}(B^\top A B)}{\partial B} = (A + A^\top) B$ \label{id:quadratic}
    \item $\frac{\partial \mathrm{Tr}(C B^\top A B)}{\partial B} = A^\top B C^\top + A B C$ \label{id:complex}
\end{enumerate}
\end{lemma}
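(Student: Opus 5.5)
We will prove all three identities with the differential (first-order perturbation) method, using one fixed convention throughout. Write $\langle G, H\rangle = \mathrm{Tr}(G^\top H)$ for the Frobenius inner product. For a scalar function $f(B)$, the gradient $\frac{\partial f}{\partial B}$ is the unique matrix $G$ with $df = \langle G, dB\rangle = \mathrm{Tr}(G^\top dB)$. The two tools are the cyclic invariance $\mathrm{Tr}(XYZ) = \mathrm{Tr}(ZXY)$ and the transpose invariance $\mathrm{Tr}(X) = \mathrm{Tr}(X^\top)$. With these, each identity reduces to rewriting $df$ in the form $\mathrm{Tr}(G^\top dB)$ and reading off $G$.

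For \ref{id:linear}, linearity gives $d\,\mathrm{Tr}(AB) = \mathrm{Tr}(A\,dB) = \mathrm{Tr}\bigl((A^\top)^\top dB\bigr)$, so the gradient is $A^\top$.

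For \ref{id:complex}, we treat a general (not necessarily symmetric) $C$. By the product rule,
\begin{equation*}
d\,\mathrm{Tr}(C B^\top A B) = \mathrm{Tr}(C\, dB^\top A B) + \mathrm{Tr}(C B^\top A\, dB).
\end{equation*}
The second term already has the required shape: $\mathrm{Tr}(C B^\top A\,dB) = \mathrm{Tr}\bigl((A^\top B C^\top)^\top dB\bigr)$, which contributes $A^\top B C^\top$. For the first term we cycle $dB^\top$ to the front, giving $\mathrm{Tr}(dB^\top A B C)$, and then transpose inside the trace to get $\mathrm{Tr}\bigl((ABC)^\top dB\bigr)$. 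This contributes $ABC$. Adding the two contributions yields $A^\top B C^\top + ABC$, exactly as stated, with no symmetry assumption on $A$ or $C$.

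Identity \ref{id:quadratic} is the special case $C = I$ of \ref{id:complex}, since $\mathrm{Tr}(B^\top A B) = \mathrm{Tr}(I B^\top A B)$. That substitution gives $A^\top B + AB = (A + A^\top)B$.

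The only step that needs care is the transposition in the first term of \ref{id:complex}. It is easy to attach $C^\top$ to the wrong summand, so we will verify the placement explicitly: after cycling, the expression is $\mathrm{Tr}(dB^\top M)$ with $M = ABC$, and this equals $\langle M, dB\rangle$ directly, with no transpose on $C$. As a sanity check, we will confirm that the result reduces to \ref{id:quadratic} when $C = I$. Dimensional compatibility of all products follows from $A$ being square and $C$ having the size of $B^\top B$.
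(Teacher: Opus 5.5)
Your derivation is correct. The paper gives no proof of this lemma: it lists the three identities as standard facts, cites Petersen and Pedersen's \emph{Matrix Cookbook}, and uses them directly in the gradient-flow lemma. Your route is therefore different only in that it is self-contained. The differential argument, with the convention $df=\mathrm{Tr}(G^\top dB)$, gives $A^\top BC^\top$ from the second term of $d\,\mathrm{Tr}(CB^\top AB)$ and $ABC$ from the first, and (ii) follows from (iii) at $C=I$.

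What your version buys over the citation is twofold. First, it states the layout convention explicitly: the gradient has the same shape as $B$, which is what makes $\dot O=-\nabla_O\mathcal{L}$ and $\dot Q=-\nabla_Q\mathcal{L}$ shape-consistent in the paper's use. Second, it shows that (iii) needs no symmetry of $A$ or $C$, and that (ii) is a corollary of (iii) rather than a separate fact. In the paper's applications ($A=Q^\top\Sigma Q$ for $\nabla_O\mathcal{L}$; $A=\Sigma$, $C=OO^\top$ for $\nabla_Q\mathcal{L}$) all of these matrices are symmetric. The two summands of (iii) then coincide, so that generality is not needed there, but it costs nothing.

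One small point about your last paragraph. The ``sanity check'' at $C=I$ is circular, since that is how you obtained (ii). It also cannot detect the error you warn about: the swapped formula $A^\top BC+ABC^\top$ likewise reduces to $(A+A^\top)B$ at $C=I$. The real verification is the explicit bookkeeping you already do, namely $\mathrm{Tr}(dB^\top M)=\langle M,dB\rangle$ with $M=ABC$. If you want an independent check, differentiate $\sum_{i,j,k,l}C_{ij}B_{kj}A_{kl}B_{li}$ with respect to $B_{pq}$, which gives $(ABC)_{pq}+(A^\top BC^\top)_{pq}$ directly.
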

\end{tcolorbox}

We now restate and prove the gradient flow dynamics for the projection matrices.

\begin{tcolorbox}[colback=colorDistill, colframe=myblue!8, boxrule=0pt, sharp corners=all, boxsep=0pt, left=5pt, right=5pt, top=5pt, bottom=5pt, before skip=2pt, after skip=2pt]
\begin{lemma}[Evolution of Projection Matrices]
Under the reconstruction loss $\mathcal{L}$, the gradient flow dynamics for the projection matrices $O$ and $Q$ are governed by the following system of Ordinary Differential Equations (ODEs):
\begin{align}
    \dot{O} &= - \nabla_{O} \mathcal{L} = - Q^\top \Sigma (QO - I), \label{eq:flow_O} \\
    \dot{Q} &= - \nabla_{Q} \mathcal{L} = - \Sigma (QO - I) O^\top. \label{eq:flow_Q}
\end{align}
where $\dot{O}$ and $\dot{Q}$ denote the time-derivatives $\frac{dO}{dt}$ and $\frac{dQ}{dt}$ respectively, and $I$ is the identity matrix of the corresponding dimensions.
\end{lemma}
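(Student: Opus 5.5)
The plan is a direct differential computation. I will rewrite the loss in trace form, take its first-order variation with respect to the product $M = QO$, and then use the chain rule through $dM = dQ\,O + Q\,dO$. Because the gradient flow is by definition $\dot O = -\nabla_O\mathcal{L}$ and $\dot Q = -\nabla_Q\mathcal{L}$, the lemma reduces to identifying these two gradients.

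\textbf{Step 1: trace form.} Write $X - XQO = X(I - M)$ and use $\|A\|_F^2 = \mathrm{Tr}(A^\top A)$. This gives
\begin{equation*}
\mathcal{L} = \tfrac{1}{2L}\mathrm{Tr}\bigl((I-M)^\top X^\top X (I-M)\bigr) = \tfrac12 \mathrm{Tr}\bigl((I-M)^\top \Sigma (I-M)\bigr),
\end{equation*}
with $\Sigma = \frac1L X^\top X$. I will note explicitly that $\Sigma$ is symmetric, since this is used in the next step.

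\textbf{Step 2: differential in $M$.} Set $B = I - M$, so $dB = -dM$. Apply Lemma~\ref{lemma:matrix_identities}\ref{id:quadratic} with $A = \Sigma$; since $\Sigma + \Sigma^\top = 2\Sigma$, the factor $\tfrac12$ cancels. Equivalently, expand $\mathrm{Tr}((B+dB)^\top\Sigma(B+dB))$ to first order. Either way,
\begin{equation*}
d\mathcal{L} = -\mathrm{Tr}\bigl((I-M)^\top\Sigma\, dM\bigr) = \mathrm{Tr}\bigl((M-I)^\top \Sigma\, dM\bigr),
\end{equation*}
which says $\nabla_M \mathcal{L} = \Sigma(M - I)$.

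\textbf{Step 3: chain rule through $M = QO$.} Substitute $dM = dQ\,O + Q\,dO$ and use cyclicity of the trace together with identity \ref{id:linear}:
\begin{itemize}
\item The $dO$ term is $\mathrm{Tr}\bigl((M-I)^\top\Sigma Q\, dO\bigr)$. Its gradient is the transpose of the left factor, $\nabla_O\mathcal{L} = Q^\top\Sigma(M-I)$.
\item The $dQ$ term is $\mathrm{Tr}\bigl(O(M-I)^\top\Sigma\, dQ\bigr)$. Its gradient is $\nabla_Q\mathcal{L} = \Sigma(M-I)O^\top$.
\end{itemize}
Negating both gradients gives exactly \eqref{eq:flow_O} and \eqref{eq:flow_Q}.

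There is no genuine obstacle here; the argument is mechanical. The only places needing care are bookkeeping:
\begin{itemize}
\item The sign, since $I - M$ versus $M - I$ absorbs the minus sign in the final expressions.
\item The transposes, where the symmetry of $\Sigma$ from Step 1 is what lets the $(I-M)^\top\Sigma$ factor be transposed cleanly into $\Sigma(M-I)$ in the gradients.
\item The dimensions: $I$ is $D\times D$, $Q^\top\Sigma(M-I)$ is $D'\times D$ (the shape of $O$), and $\Sigma(M-I)O^\top$ is $D\times D'$ (the shape of $Q$).
\end{itemize}
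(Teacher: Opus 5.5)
Your proof is correct, but it is organized differently from the paper's. The paper expands $\mathcal{L}$ into the three terms $\frac12\mathrm{Tr}(\Sigma) - \mathrm{Tr}(\Sigma QO) + \frac12\mathrm{Tr}(O^\top Q^\top\Sigma QO)$ and differentiates each term separately in $O$ and in $Q$. The quadratic term in $Q$ needs the less familiar identity (iii), applied after cycling the trace to $\mathrm{Tr}(OO^\top Q^\top\Sigma Q)$. You instead compute one gradient, $\nabla_M\mathcal{L} = \Sigma(M-I)$, with respect to the product. You then push it through $dM = dQ\,O + Q\,dO$, so both gradients come out in the factored form $\nabla_O\mathcal{L} = Q^\top\nabla_M\mathcal{L}$ and $\nabla_Q\mathcal{L} = \nabla_M\mathcal{L}\,O^\top$.

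Your route has three advantages. It uses only identities (i) and (ii), treats $O$ and $Q$ symmetrically, and carries over unchanged to any loss that depends only on $QO$, or to deeper factorizations. It also exposes the structure the paper relies on later in the proof of Theorem~\ref{thm:singular_dynamics}, where $\dot M = -\nabla_M\mathcal{L}\,(O^\top O) - (QQ^\top)\nabla_M\mathcal{L}$. The paper's expansion is more mechanical, but every step is a direct lookup in its identity list, with no differentials or chain rule.
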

\end{tcolorbox}

\begin{proof}
First, we expand the reconstruction loss $\mathcal{L} = \frac{1}{2L} \|X - XQO\|_F^2$ using the property $\|A\|_F^2 = \mathrm{Tr}(A^\top A)$:
\begin{align}
    \mathcal{L} &= \frac{1}{2L} \mathrm{Tr} \left( (X - XQO)^\top (X - XQO) \right) \nonumber \\
    &= \frac{1}{2L} \mathrm{Tr} \left( X^\top X - X^\top X QO - O^\top Q^\top X^\top X + O^\top Q^\top X^\top X QO \right) \nonumber \\
    &= \frac{1}{2} \mathrm{Tr}(\Sigma) - \mathrm{Tr}(\Sigma Q O) + \frac{1}{2} \mathrm{Tr}(O^\top Q^\top \Sigma Q O), \label{eq:L_expanded}
\end{align}
where we use $\Sigma = \frac{1}{L} X^\top X$, the property $\mathrm{Tr}(A) = \mathrm{Tr}(A^\top)$ to combine the cross terms $\frac{1}{2L}(\mathrm{Tr}(X^\top X QO) + \mathrm{Tr}(O^\top Q^\top X^\top X)) = \mathrm{Tr}(\Sigma Q O)$, and the cyclic property of the trace.

\textbf{Derivation of $\dot{O}$:}
Applying the gradient operator $\nabla_O$ to \eqref{eq:L_expanded}:
\begin{equation*}
    \nabla_O \mathcal{L} = \nabla_O \left[ - \mathrm{Tr}(\Sigma Q O) \right] + \nabla_O \left[ \frac{1}{2} \mathrm{Tr}(O^\top Q^\top \Sigma Q O) \right].
\end{equation*}
Using Lemma \ref{lemma:matrix_identities}\ref{id:linear} with $A = \Sigma Q$ and $B = O$, the first term becomes $-(\Sigma Q)^\top = -Q^\top \Sigma$. \\
Using Lemma \ref{lemma:matrix_identities}\ref{id:quadratic} with $B = O$ and $A = Q^\top \Sigma Q$, and noting that $A$ is symmetric since $\Sigma$ is symmetric, the second term becomes $\frac{1}{2}(A + A^\top) O = AO = Q^\top \Sigma Q O$. \\
Combining these results:
\begin{align*}
    \nabla_O \mathcal{L} &= - Q^\top \Sigma + Q^\top \Sigma Q O \\
    &= Q^\top \Sigma (QO - I).
\end{align*}
Thus, the flow is:
\begin{equation}
    \dot{O} = - \nabla_O \mathcal{L} = - Q^\top \Sigma (QO - I).
\end{equation}

\textbf{Derivation of $\dot{Q}$:}
Applying the gradient operator $\nabla_Q$ to \eqref{eq:L_expanded}:
\begin{equation*}
    \nabla_Q \mathcal{L} = \nabla_Q \left[ - \mathrm{Tr}(\Sigma Q O) \right] + \nabla_Q \left[ \frac{1}{2} \mathrm{Tr}(O^\top Q^\top \Sigma Q O) \right].
\end{equation*}
For the first term, using the cyclic property $\mathrm{Tr}(\Sigma Q O) = \mathrm{Tr}(O \Sigma Q)$, by Lemma \ref{lemma:matrix_identities}\ref{id:linear} with $A = O \Sigma$ and $B = Q$, we have $-\nabla_Q \mathrm{Tr}(O \Sigma Q) = -(O \Sigma)^\top = - \Sigma O^\top$. \\
For the second term, we use the cyclic property $\mathrm{Tr}(O^\top Q^\top \Sigma Q O) = \mathrm{Tr}(OO^\top Q^\top \Sigma Q)$. Applying Lemma \ref{lemma:matrix_identities}\ref{id:complex} with $B = Q, A = \Sigma, C = OO^\top$:
\begin{align*}
    \nabla_Q \left[ \frac{1}{2} \mathrm{Tr}(C Q^\top A Q) \right] &= \frac{1}{2} (A^\top Q C^\top + A Q C) \\
    &= \frac{1}{2} (\Sigma Q OO^\top + \Sigma Q OO^\top) = \Sigma Q OO^\top,
\end{align*}
where we use $\Sigma = \Sigma^\top$ and $(OO^\top) = (OO^\top)^\top$. Combining the terms:
\begin{align*}
    \nabla_Q \mathcal{L} &= - \Sigma O^\top + \Sigma Q OO^\top \\
    &= \Sigma (QO - I) O^\top.
\end{align*}
Thus, the gradient flow is:
\begin{equation}
    \dot{Q} = - \nabla_Q \mathcal{L} = - \Sigma (QO - I) O^\top.
\end{equation}
The proof is complete.
\end{proof}

\subsection{Proof of Lemma \ref{lemma:balancedness} [Balancedness Invariant]}
\label{appendix: Proof of lemma:balancedness}
To establish the balancedness invariant, we first state the fundamental matrix differentiation rules used in our derivation.

\begin{tcolorbox}[colback=colorDistill, colframe=myblue!8, boxrule=0pt, sharp corners=all, boxsep=0pt, left=5pt, right=5pt, top=5pt, bottom=5pt, before skip=2pt, after skip=2pt]
\begin{lemma}[Matrix Differentiation Rules]
\label{lemma:diff_rules}
For any differentiable matrix-valued functions $A(t)$ and $B(t)$, the following identities hold:
\begin{enumerate}[label=(\roman*)]
    \item \textbf{Product Rule:} $\frac{d}{dt} (A B) = \dot{A} B + A \dot{B}$ \label{id:product}
    \item \textbf{Transpose Rule:} $\frac{d}{dt} (A^\top) = (\dot{A})^\top$ \label{id:transpose}
\end{enumerate}
\end{lemma}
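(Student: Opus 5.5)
Both identities reduce to elementary facts about scalar functions once we fix what the derivative of a matrix-valued function means. I will take $\dot{A}(t)$ to be the matrix of entrywise derivatives, $(\dot{A})_{ij} = \frac{d}{dt}A_{ij}(t)$. In finite dimensions this agrees with the Fréchet derivative with respect to any matrix norm, including the Frobenius norm implicit in the gradient flow of Lemma~\ref{lemma:grad_flow}, because all norms on $\mathbb{R}^{m\times n}$ are equivalent. With that convention, the plan is to prove each rule one entry at a time.

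\textbf{Product Rule.} Let $A(t)\in\mathbb{R}^{m\times p}$ and $B(t)\in\mathbb{R}^{p\times n}$ be differentiable, and fix indices $(i,j)$. Then $(AB)_{ij} = \sum_{k=1}^{p} A_{ik}B_{kj}$ is a finite sum of products of differentiable scalar functions. Linearity of differentiation and the scalar product rule give
\begin{equation*}
\frac{d}{dt}(AB)_{ij} = \sum_{k=1}^{p} \bigl(\dot{A}_{ik}B_{kj} + A_{ik}\dot{B}_{kj}\bigr) = (\dot{A}B)_{ij} + (A\dot{B})_{ij}.
\end{equation*}
Since $(i,j)$ was arbitrary, this is exactly $\frac{d}{dt}(AB) = \dot{A}B + A\dot{B}$. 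The order of the factors must be kept, because matrices do not commute. This is why the later computation of $\frac{d}{dt}(Q^\top Q)$ and $\frac{d}{dt}(OO^\top)$ yields two separate terms, $\dot{Q}^\top Q + Q^\top\dot{Q}$ and $\dot{O}O^\top + O\dot{O}^\top$.

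\textbf{Transpose Rule.} For $A(t)\in\mathbb{R}^{m\times n}$ we have $(A^\top)_{ij} = A_{ji}$, so
\begin{equation*}
\frac{d}{dt}(A^\top)_{ij} = \dot{A}_{ji} = \bigl((\dot{A})^\top\bigr)_{ij},
\end{equation*}
which proves the identity entrywise. Equivalently, transposition is a fixed linear map on a finite-dimensional space, and differentiation commutes with bounded linear maps.

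There is no genuine obstacle here. The only point needing care is stating the entrywise convention explicitly, so that these rules are consistent with the gradient dynamics $\dot{O}$ and $\dot{Q}$ of Lemma~\ref{lemma:grad_flow} to which they will be applied in the balancedness argument.
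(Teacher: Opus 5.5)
Your entrywise argument is correct and complete: the scalar product rule applied to $(AB)_{ij}=\sum_k A_{ik}B_{kj}$, and the identity $(A^\top)_{ij}=A_{ji}$, give both rules directly. The paper gives no proof of this lemma; it cites these as standard differentiation rules before using them in the balancedness argument, and your verification is the standard justification behind them.
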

\end{tcolorbox}

Now we provide the step-by-step proof for the conservation law:
\begin{tcolorbox}[colback=colorDistill, colframe=myblue!8, boxrule=0pt, sharp corners=all, boxsep=0pt, left=5pt, right=5pt, top=5pt, bottom=5pt, before skip=2pt, after skip=2pt]
\begin{lemma}[Balancedness Invariant]
\label{lemma:balancedness}
Under the reconstruction loss $\mathcal{L}$, for the gradient flow dynamics of $Q$ and $O$, the following conservation law holds for all $t \ge 0$:
\begin{equation}
    \frac{d}{dt} \left( Q(t)^\top Q(t) - O(t)O(t)^\top \right) = 0.
\end{equation}
Consequently, if the matrices are initialized such that $Q(0)^\top Q(0) = O(0)O(0)^\top$, they remain ``balanced'' throughout the entire optimization process, i.e., $Q(t)^\top Q(t) = O(t)O(t)^\top$ for all $t$.
\end{lemma}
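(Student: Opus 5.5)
The plan is to differentiate $Q(t)^\top Q(t) - O(t)O(t)^\top$ directly. We use the product and transpose rules of Lemma~\ref{lemma:diff_rules} and substitute the flows from Lemma~\ref{lemma:grad_flow}. By the product rule,
\begin{equation*}
\frac{d}{dt}\bigl(Q^\top Q - OO^\top\bigr) = \dot{Q}^\top Q + Q^\top \dot{Q} - \dot{O}O^\top - O\dot{O}^\top .
\end{equation*}

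Substituting $\dot{Q} = -\Sigma(QO-I)O^\top$ gives $Q^\top\dot{Q} = -Q^\top\Sigma(QO-I)O^\top$. Substituting $\dot{O} = -Q^\top\Sigma(QO-I)$ gives $\dot{O}O^\top = -Q^\top\Sigma(QO-I)O^\top$. These two terms are identical, so $Q^\top\dot{Q} - \dot{O}O^\top = 0$.

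For the remaining pair, apply the transpose rule: $\dot{Q}^\top Q = (Q^\top\dot{Q})^\top$ and $O\dot{O}^\top = (\dot{O}O^\top)^\top$. Their difference is therefore the transpose of the difference just shown to vanish, so it is also zero. The total derivative vanishes identically, which is exactly the statement. No symmetry of $\Sigma$ or of $QO-I$ is needed; only the shared factor $Q^\top\Sigma(QO-I)O^\top$ matters.

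For the consequence, note that $Q^\top Q - OO^\top$ has zero derivative for all $t\ge 0$ along the (smooth, polynomial-vector-field) ODE solution. It is therefore constant, equal to its value at $t=0$. If $Q(0)^\top Q(0) = O(0)O(0)^\top$, the constant is the zero matrix, and balancedness holds for all $t$.

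There is no real obstacle here. The only care needed is bookkeeping: matching the two cross terms with the correct transposes, and noting that the right-multiplication by $O^\top$ in $\dot{Q}$ and the left-multiplication by $Q^\top$ in $\dot{O}$ produce the same product.
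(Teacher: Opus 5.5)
Your proof is correct and takes essentially the paper's route: differentiate with the product and transpose rules, substitute the gradient-flow equations, observe that the terms cancel, then conclude constancy and hence preserved balancedness. The only cosmetic difference is bookkeeping: the paper expands $\frac{d}{dt}(Q^\top Q)$ and $\frac{d}{dt}(OO^\top)$ separately and uses $\Sigma^\top=\Sigma$ to match the transposed terms, whereas your identity $\dot Q^\top Q - O\dot O^\top = (Q^\top\dot Q - \dot O O^\top)^\top$ makes that step unnecessary.
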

\end{tcolorbox}

\begin{proof}
We aim to show that the time derivative of the difference $Q(t)^\top Q(t) - O(t)O(t)^\top$ is zero. Let $\Sigma = \frac{1}{L} X^\top X$ be the symmetric covariance matrix of $X$ ($\Sigma = \Sigma^\top$). From Lemma \ref{lemma:grad_flow}, the gradient flow dynamics are given by:
\begin{align}
    \dot{O} &= - Q^\top \Sigma (QO - I), \label{eq:flow_O_proof} \\
    \dot{Q} &= - \Sigma (QO - I) O^\top. \label{eq:flow_Q_proof}
\end{align}

\textbf{Step 1: Compute the derivative of $Q^\top Q$.}
Applying the product rule \ref{id:product} and transpose rule \ref{id:transpose} from Lemma \ref{lemma:diff_rules}:
\begin{equation}
    \frac{d}{dt} (Q^\top Q) = \dot{Q}^\top Q + Q^\top \dot{Q}.
\end{equation}
Substituting the expression for $\dot{Q}$ from \eqref{eq:flow_Q_proof} and using the symmetry $\Sigma^\top = \Sigma$:
\begin{align}
    \dot{Q}^\top Q &= \left[ - \Sigma (QO - I) O^\top \right]^\top Q \notag \\
    &= - O (QO - I)^\top \Sigma^\top Q \notag \\
    &= - O (QO - I)^\top \Sigma Q. \label{eq:term1_Q}
\end{align}
The second term is directly obtained by substituting $\dot{Q}$:
\begin{equation}
    Q^\top \dot{Q} = - Q^\top \Sigma (QO - I) O^\top. \label{eq:term2_Q}
\end{equation}
Combining \eqref{eq:term1_Q} and \eqref{eq:term2_Q}, we have:
\begin{equation}
    \frac{d}{dt} (Q^\top Q) = - O (QO - I)^\top \Sigma Q - Q^\top \Sigma (QO - I) O^\top. \label{eq:deriv_QQ}
\end{equation}

\textbf{Step 2: Compute the derivative of $OO^\top$.}
Similarly, for the product $OO^\top$, we have:
\begin{equation}
    \frac{d}{dt} (OO^\top) = \dot{O} O^\top + O \dot{O}^\top.
\end{equation}
Substituting the expression for $\dot{O}$ from \eqref{eq:flow_O_proof} into the first term:
\begin{equation}
    \dot{O} O^\top = - Q^\top \Sigma (QO - I) O^\top. \label{eq:term1_O}
\end{equation}
For the second term, we take the transpose of $\dot{O}$ and again use $\Sigma^\top = \Sigma$:
\begin{align}
    O \dot{O}^\top &= O \left[ - Q^\top \Sigma (QO - I) \right]^\top \notag \\
    &= - O (QO - I)^\top \Sigma^\top Q \notag \\
    &= - O (QO - I)^\top \Sigma Q. \label{eq:term2_O}
\end{align}
Combining \ref{eq:term1_O} and \ref{eq:term2_O}, we obtain:
\begin{equation}
    \frac{d}{dt} (OO^\top) = - Q^\top \Sigma (QO - I) O^\top - O (QO - I)^\top \Sigma Q. \label{eq:deriv_OO}
\end{equation}

\textbf{Step 3: Conclusion.}
Comparing \eqref{eq:deriv_QQ} and \eqref{eq:deriv_OO}, it is evident that:
\begin{equation}
    \frac{d}{dt} (Q^\top Q) = \frac{d}{dt} (OO^\top).
\end{equation}
This implies that $\frac{d}{dt} (Q^\top Q - OO^\top) = 0$ for all $t \ge 0$. Integrating with respect to time, we find:
\begin{equation}
    Q(t)^\top Q(t) - O(t)O(t)^\top = Q(0)^\top Q(0) - O(0)O(0)^\top.
\end{equation}
Therefore, if the matrices are initialized such that $Q(0)^\top Q(0) = O(0)O(0)^\top$, the equality $Q(t)^\top Q(t) = O(t)O(t)^\top$ is preserved throughout the entire optimization trajectory.
\end{proof}

\subsection{Proof of Theorem \ref{thm:singular_dynamics} [Singular Value Dynamics]}
\label{appendix: Proof of thm:singular_dynamics}
We begin by stating two auxiliary lemmas that are essential for deriving the singular value dynamics.

\begin{tcolorbox}[colback=colorDistill, colframe=myblue!8, boxrule=0pt, sharp corners=all, boxsep=0pt, left=5pt, right=5pt, top=5pt, bottom=5pt, before skip=2pt, after skip=2pt]
\begin{lemma}[Singular Value Evolution~\cite{jing2021understanding}]
\label{lemma:svd_evo}
Let $W(t)$ be a time-varying matrix and $W = U \Sigma V^\top$ be its Singular Value Decomposition. The time derivative of the $r$-th singular value $\sigma^r$ is given by:
\begin{equation}
    \dot{\sigma}^r = (u^r)^\top \dot{W} v^r,
\end{equation}
where $u^r$ and $v^r$ are the $r$-th left and right singular vectors, respectively.
\end{lemma}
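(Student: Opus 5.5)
I would prove Lemma~\ref{lemma:svd_evo} by differentiating the SVD relation $W = U\Sigma V^\top$ in time and using the orthonormality of the singular vectors. The argument assumes the $r$-th singular value is simple (distinct from the others and, when $W$ is not square, nonzero). Then $\sigma^r(t)$, $u^r(t)$ and $v^r(t)$ can be chosen differentiable in a neighborhood of $t$; this is the standard analytic perturbation result for simple singular values. I would state this assumption explicitly, since at crossings only one-sided or generalized derivatives exist.

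First, write the scalar identity
\[
\sigma^r = (u^r)^\top W v^r,
\]
which follows from $W v^r = \sigma^r u^r$ and $(u^r)^\top u^r = 1$. Differentiating with the product rule of Lemma~\ref{lemma:diff_rules} gives
\[
\dot\sigma^r = (\dot u^r)^\top W v^r + (u^r)^\top \dot W v^r + (u^r)^\top W \dot v^r .
\]

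Second, I show that the two outer terms vanish. Using $W v^r = \sigma^r u^r$, the first term equals $\sigma^r (\dot u^r)^\top u^r$. Differentiating the normalization $(u^r)^\top u^r = 1$ gives $2(u^r)^\top \dot u^r = 0$, so this term is zero. For the third term, the transpose relation $W^\top u^r = \sigma^r v^r$ gives $(u^r)^\top W = \sigma^r (v^r)^\top$, so the term becomes $\sigma^r (v^r)^\top \dot v^r$. This is zero by the same normalization argument applied to $v^r$. What remains is $\dot\sigma^r = (u^r)^\top \dot W v^r$.

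The only delicate step is justifying differentiability of the singular vectors, and the sign and rotation ambiguity in choosing them. The sign ambiguity is harmless, because flipping $u^r$ and $v^r$ together leaves $(u^r)^\top \dot W v^r$ unchanged. Differentiability itself I would not reprove; I would cite it under the simple-spectrum assumption, as in \citet{jing2021understanding}. A degenerate $\sigma^r = 0$ with a square matrix needs care only if one insists on nonnegative singular values, and the paper's applications avoid that regime.
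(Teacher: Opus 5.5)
Your derivation is correct. Differentiating $\sigma^r=(u^r)^\top W v^r$ and discarding the two outer terms via $(u^r)^\top\dot u^r=(v^r)^\top\dot v^r=0$ is the standard argument. It is equivalent to reading off the diagonal of $U^\top\dot W V=U^\top\dot U\,E+\dot E+E\,\dot V^\top V$, where $U^\top\dot U$ and $\dot V^\top V$ are skew-symmetric. Adding a hypothesis is also right, since the unqualified statement fails for ordered nonnegative singular values at crossings.

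The paper gives no proof of its own and only cites \citet{jing2021understanding}. So the one substantive point is the scope of your hypothesis, and there your closing remark is inaccurate: the paper does \emph{not} avoid the degenerate regime. The proof of Theorem~\ref{thm:stability} applies Theorem~\ref{thm:singular_dynamics}, and hence this lemma, at $M(0)=HH^\top\in\mathbb{R}^{D\times D}$. The singular values there are $1$ with multiplicity $D'$ and $0$ with multiplicity $D-D'$, which is exactly where your simple-spectrum assumption fails.

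To cover that use, replace the simple-spectrum hypothesis with an analytic SVD along the trajectory. The gradient-flow vector field is polynomial in $(Q,O)$, so $M(t)$ is real-analytic. It therefore admits $M=UEV^\top$ with $U,V$ orthogonal and $E$ diagonal, all analytic in $t$, provided the diagonal entries may be signed and unordered. This is how \citet{arora2019implicit} handle singular-value dynamics of products under gradient flow. Your computation then goes through verbatim for every index, including repeated and zero singular values.

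At $t=0$ this yields exactly what Theorem~\ref{thm:stability} needs. Set $P=HH^\top$. For each $r$, both $u^r(0)$ and $v^r(0)$ lie in $\mathrm{range}(P)$ when $\sigma^r(0)=\pm1$, or both lie in $\ker(P)$ when $\sigma^r(0)=0$. Meanwhile $\dot M(0)=\Sigma(I-P)P+P\Sigma(I-P)$, with $\Sigma$ the covariance, satisfies $P\dot M(0)P=(I-P)\dot M(0)(I-P)=0$. Hence $(u^r)^\top\dot M v^r=0$ for all $r$.
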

\end{tcolorbox}

\begin{tcolorbox}[colback=colorDistill, colframe=myblue!8, boxrule=0pt, sharp corners=all, boxsep=0pt, left=5pt, right=5pt, top=5pt, bottom=5pt, before skip=2pt, after skip=2pt]
\begin{lemma}[Balance Property and SVD Identities]
\label{lemma:balance_sqrt}
Given the product matrix $M = QO$ with SVD $M = U_M E_M V_M^\top$ and assuming the balanced condition $Q^\top Q = OO^\top$, the symmetric components satisfy:
\begin{align}
    O^\top O &= (M^\top M)^{1/2} = V_M E_M V_M^\top, \label{id:balance_v} \\
    QQ^\top &= (M M^\top)^{1/2} = U_M E_M U_M^\top. \label{id:balance_u}
\end{align}
Furthermore, for any $r$-th singular vector pair $(u_M^r, v_M^r)$, the following fundamental SVD identities hold:
\begin{enumerate}[label=(\roman*)]
    \item \textbf{Vector Mapping:} $M v_M^r = \sigma_M^r u_M^r$ and $(u_M^r)^\top M = \sigma_M^r (v_M^r)^\top$. \label{id:Vector Mapping}
    \item \textbf{Spectral Projection:} $V_M E_M V_M^\top v_M^r = \sigma_M^r v_M^r$ and $(u_M^r)^\top U_M E_M U_M^\top = \sigma_M^r (u_M^r)^\top$. \label{id:Spectral Projection}
\end{enumerate}
\end{lemma}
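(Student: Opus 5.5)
The plan is to set $P := Q^\top Q = O O^\top \in \mathbb{R}^{D' \times D'}$, which is the common value given by the balanced condition. First we show that $M^\top M$ and $MM^\top$ are squares of the positive semidefinite matrices $O^\top O$ and $QQ^\top$. Then we use uniqueness of the PSD square root to identify these matrices with the spectral expressions $V_M E_M V_M^\top$ and $U_M E_M U_M^\top$. Throughout, we work with the full SVD of $M \in \mathbb{R}^{D\times D}$: $U_M, V_M$ are orthogonal, and $E_M$ is diagonal and nonnegative, with at most $D'$ nonzero entries since $\mathrm{rank}(M)\le D'$.

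For \eqref{id:balance_v}, we substitute the balanced condition into the middle factor:
\begin{equation*}
M^\top M = O^\top Q^\top Q O = O^\top (O O^\top) O = (O^\top O)^2 .
\end{equation*}
In the same way,
\begin{equation*}
M M^\top = Q O O^\top Q^\top = Q (Q^\top Q) Q^\top = (QQ^\top)^2 .
\end{equation*}
Both $O^\top O$ and $QQ^\top$ are symmetric PSD.

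On the other hand, $M^\top M = V_M E_M^2 V_M^\top$, and $V_M E_M V_M^\top$ is PSD with square $V_M E_M^2 V_M^\top$. A PSD matrix has a unique PSD square root. Hence $O^\top O = (M^\top M)^{1/2} = V_M E_M V_M^\top$, and analogously $QQ^\top = (MM^\top)^{1/2} = U_M E_M U_M^\top$. The one non-computational step is the uniqueness of the PSD square root, which we quote as a standard fact. If a self-contained argument is wanted, one notes that any PSD $S$ with $S^2 = A$ commutes with $A$, so it preserves each eigenspace of $A$. On each such eigenspace, $S$ restricts to a PSD operator whose square is $\lambda I$, which forces that restriction to be $\sqrt{\lambda} I$.

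The two SVD identities follow directly from orthonormality of the singular vectors, $V_M^\top v_M^r = e_r$ and $(u_M^r)^\top U_M = e_r^\top$.
\begin{itemize}
\item For \ref{id:Vector Mapping}: $M v_M^r = U_M E_M V_M^\top v_M^r = U_M E_M e_r = \sigma_M^r u_M^r$. Likewise, $(u_M^r)^\top M = e_r^\top E_M V_M^\top = \sigma_M^r (v_M^r)^\top$.
\item For \ref{id:Spectral Projection}: $V_M E_M V_M^\top v_M^r = V_M E_M e_r = \sigma_M^r v_M^r$. Likewise, $(u_M^r)^\top U_M E_M U_M^\top = \sigma_M^r (u_M^r)^\top$.
\end{itemize}
Combined with \eqref{id:balance_v} and \eqref{id:balance_u}, the second item says that $v_M^r$ is an eigenvector of $O^\top O$ and $u_M^r$ is an eigenvector of $QQ^\top$, each with eigenvalue $\sigma_M^r$. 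This is the form in which the identities will be used to simplify $(u_M^r)^\top \dot M v_M^r$ in Theorem~\ref{thm:singular_dynamics}.
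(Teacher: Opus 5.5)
Your proof is correct. The paper gives no proof of this lemma: it is listed as an auxiliary fact just before the proof of Theorem~\ref{thm:singular_dynamics}.

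The closest argument in the paper is the proof of Lemma~\ref{lemma:spectral_coupling}. It uses exactly your substitution $M^\top M = O^\top (OO^\top) O = (O^\top O)^2$, but concludes only an equality of eigenvalues, $\sigma_M^r = \lambda_r(OO^\top)$. That spectral-level reasoning would not give the matrix identities $O^\top O = V_M E_M V_M^\top$ and $QQ^\top = U_M E_M U_M^\top$, since two PSD matrices with the same spectrum need not coincide.

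Yet the proof of Theorem~\ref{thm:singular_dynamics} needs precisely the matrix identities, i.e., that $v_M^r$ and $u_M^r$ are eigenvectors of $O^\top O$ and $QQ^\top$, to collapse Term A and Term B. Uniqueness of the PSD square root, with your self-contained justification (commuting with $A$, preserving its eigenspaces, restricting to $\sqrt{\lambda}\, I$), is exactly the ingredient that supplies this.

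Your route has two further advantages:
\begin{itemize}
\item It shows that $V_M E_M V_M^\top = (M^\top M)^{1/2}$ does not depend on which SVD is chosen when singular values repeat. Hence the spectral-projection identities hold for every admissible choice of singular vectors.
\item It works equally with the thin SVD used in the main text, since $V_M^\top V_M = I_{D'}$ still gives $(V_M E_M V_M^\top)^2 = M^\top M$ and $V_M^\top v_M^r = e_r$.
\end{itemize}
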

\end{tcolorbox}

Now we provide the step-by-step proof for the following Theorem:

\begin{tcolorbox}[colback=myblue!8, colframe=myblue!8, boxrule=0pt, sharp corners=all, boxsep=0pt, left=5pt, right=5pt, top=5pt, bottom=5pt, before skip=2pt, after skip=2pt]
\begin{theorem}[Singular Value Dynamics]
Given the balanced condition $Q^\top Q = OO^\top$, the evolution of the $r$-th singular value $\sigma_M^r$ of the product matrix $M=QO$ follows:
\begin{equation}
    \dot{\sigma}_M^r = 2 \sigma_M^r (u_M^r)^\top \Sigma \left( v_M^r - \sigma_M^r u_M^r \right).
\end{equation}
where $u_M^r$ and $v_M^r$ are the $r$-th left and right singular vectors of $M$, respectively.
\end{theorem}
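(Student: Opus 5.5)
The plan is to differentiate the product $M = QO$ along the gradient flow of Lemma~\ref{lemma:grad_flow}, project the resulting matrix derivative onto the $r$-th singular pair using Lemma~\ref{lemma:svd_evo}, and then collapse the resulting factors $O^\top O$ and $QQ^\top$ using the balance identities of Lemma~\ref{lemma:balance_sqrt}.

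\textbf{Step 1: the ODE for $M$.} By the product rule (Lemma~\ref{lemma:diff_rules}), $\dot M = \dot Q O + Q \dot O$. Substituting \eqref{eq:flow_O} and \eqref{eq:flow_Q}, and writing $QO - I = M - I$, gives
\begin{equation*}
\dot M = -\Sigma (M - I) O^\top O \;-\; Q Q^\top \Sigma (M - I).
\end{equation*}

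\textbf{Step 2: project onto the singular pair.} Lemma~\ref{lemma:svd_evo} gives $\dot\sigma_M^r = (u_M^r)^\top \dot M v_M^r$, which splits into two terms. Since the balanced condition $Q^\top Q = OO^\top$ holds, Lemma~\ref{lemma:balance_sqrt} yields $O^\top O = V_M E_M V_M^\top$. By the spectral projection identity~\ref{id:Spectral Projection}, $O^\top O\, v_M^r = \sigma_M^r v_M^r$. Symmetrically, $(u_M^r)^\top QQ^\top = \sigma_M^r (u_M^r)^\top$. Both terms therefore reduce to the same scalar, and
\begin{equation*}
\dot\sigma_M^r = -2\sigma_M^r\, (u_M^r)^\top \Sigma (M - I) v_M^r .
\end{equation*}

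\textbf{Step 3: use the vector mapping.} By identity~\ref{id:Vector Mapping}, $M v_M^r = \sigma_M^r u_M^r$, so $(M - I)v_M^r = \sigma_M^r u_M^r - v_M^r$. Substituting this yields exactly the claimed formula $\dot\sigma_M^r = 2\sigma_M^r (u_M^r)^\top \Sigma (v_M^r - \sigma_M^r u_M^r)$.

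\textbf{Main obstacle.} The only substantive input is the balance identity $O^\top O = (M^\top M)^{1/2}$ and $QQ^\top = (MM^\top)^{1/2}$, together with the fact that balancedness persists in time, which is Lemma~\ref{lemma:balancedness}. If Lemma~\ref{lemma:balance_sqrt} needed justification, I would argue as follows. From $Q^\top Q = OO^\top$ one gets $M^\top M = O^\top Q^\top Q O = (O^\top O)^2$. Taking the unique positive semidefinite square root then gives the first identity, and the analogous computation gives the second. A secondary caveat is that Lemma~\ref{lemma:svd_evo} presumes differentiable singular values, for instance distinct nonzero ones. I would state this as a standing genericity assumption rather than handle degenerate crossings.
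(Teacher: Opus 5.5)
Your proposal is correct and takes essentially the same route as the paper's proof. Like the paper, you differentiate $M=QO$ along the flow, substitute $O^\top O = V_M E_M V_M^\top$ and $QQ^\top = U_M E_M U_M^\top$ from Lemma~\ref{lemma:balance_sqrt}, project with Lemma~\ref{lemma:svd_evo}, and finish with $Mv_M^r=\sigma_M^r u_M^r$. Your additional justification of the square-root identities via $M^\top M=(O^\top O)^2$ is sound, and so is your caveat about differentiability at repeated singular values; the paper states the lemma without proof and leaves that issue unaddressed.
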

\end{tcolorbox}

\begin{proof}
We aim to derive the dynamics of $\sigma_M^r$. First, we compute the time derivative of $M = QO$. By applying the product rule and substituting the gradient flow dynamics:
\begin{align}
    \dot{M} &= \dot{Q}O + Q\dot{O} \nonumber \\
    &= \left[ -\Sigma (QO - I) O^\top \right] O + Q \left[ -Q^\top \Sigma (QO - I) \right] \nonumber \\
    &= -\Sigma (M - I) (O^\top O) - (QQ^\top) \Sigma (M - I).
\end{align}
Using the identities \eqref{id:balance_v} and \eqref{id:balance_u} from Lemma \ref{lemma:balance_sqrt}, we substitute the $O^\top O$ and $QQ^\top$ with their SVD formulations:
\begin{equation}
    \dot{M} = -\Sigma (M - I) V_M E_M V_M^\top - U_M E_M U_M^\top \Sigma (M - I).
\end{equation}
According to Lemma \ref{lemma:svd_evo}, the evolution of $\sigma_M^r$ is obtained by projecting $\dot{M}$ onto the left and right singular vectors:
\begin{align}
    \dot{\sigma}_M^r &= (u_M^r)^\top \dot{M} v_M^r \nonumber \\
    &= (u_M^r)^\top \left[ -\Sigma (M - I) V_M E_M V_M^\top - U_M E_M U_M^\top \Sigma (M - I) \right] v_M^r \nonumber \\
    &= - (u_M^r)^\top \Sigma (M - I) \underbrace{(V_M E_M V_M^\top v_M^r)}_{\mathrm{Term A}} - \underbrace{((u_M^r)^\top U_M E_M U_M^\top)}_{\mathrm{Term B}} \Sigma (M - I) v_M^r.
\end{align}
By applying the Spectral Projection property \ref{id:Spectral Projection} from Lemma \ref{lemma:balance_sqrt}, where Term A simplifies to $\sigma_M^r v_M^r$ and Term B simplifies to $\sigma_M^r (u_M^r)^\top$, we have:
\begin{align}
    \dot{\sigma}_M^r &= - (u_M^r)^\top \Sigma (M - I) (\sigma_M^r v_M^r) - (\sigma_M^r u_M^r)^\top \Sigma (M - I) v_M^r \nonumber \\
    &= -\sigma_M^r \left[ (u_M^r)^\top \Sigma (M - I) v_M^r \right] - \sigma_M^r \left[ (u_M^r)^\top \Sigma (M - I) v_M^r \right] \nonumber \\
    &= -2 \sigma_M^r \left[ (u_M^r)^\top \Sigma (M - I) v_M^r \right]. \label{eq:mid_step}
\end{align}
To further simplify the bracketed term, we expand $(M-I)$:
\begin{equation}
    (u_M^r)^\top \Sigma (M - I) v_M^r = (u_M^r)^\top \Sigma M v_M^r - (u_M^r)^\top \Sigma v_M^r.
\end{equation}
Applying the Vector Mapping property \ref{id:Vector Mapping} from Lemma \ref{lemma:balance_sqrt} ($M v_M^r = \sigma_M^r u_M^r$), we obtain:
\begin{align}
    (u_M^r)^\top \Sigma (M - I) v_M^r &= (u_M^r)^\top \Sigma (\sigma_M^r u_M^r) - (u_M^r)^\top \Sigma v_M^r \nonumber \\
    &= \sigma_M^r (u_M^r)^\top \Sigma u_M^r - (u_M^r)^\top \Sigma v_M^r.
\end{align}
Substituting this expansion back into equation \eqref{eq:mid_step}:
\begin{align}
    \dot{\sigma}_M^r &= -2 \sigma_M^r \left[ \sigma_M^r (u_M^r)^\top \Sigma u_M^r - (u_M^r)^\top \Sigma v_M^r \right] \nonumber \\
    &= 2 \sigma_M^r \left[ (u_M^r)^\top \Sigma v_M^r - \sigma_M^r (u_M^r)^\top \Sigma u_M^r \right] \\
    &= 2 \sigma_M^r (u_M^r)^\top \Sigma \left( v_M^r - \sigma_M^r u_M^r \right).
\end{align}
This establishes the specific dynamics of the singular values under the balanced gradient flow.
\end{proof}

\subsection{Proof of Lemma \ref{lemma:spectral_coupling} [Spectral Coupling]}
\label{appendix: Proof of lemma:spectral_coupling}
To prove the spectral coupling property under the balancedness condition, we first state two fundamental results from matrix analysis that characterize the relationship between singular values and eigenvalues of matrix products.

\begin{tcolorbox}[colback=colorDistill, colframe=myblue!8, boxrule=0pt, sharp corners=all, boxsep=0pt, left=5pt, right=5pt, top=5pt, bottom=5pt, before skip=2pt, after skip=2pt]
\begin{lemma}[Singular Value-Eigenvalue Relation~\cite{horn2012matrix}]
\label{supp_lemma:svd_eigen}
For any matrix $A \in \mathbb{R}^{m \times n}$, let $\sigma_i(A)$ denote its $r$-th largest singular value. Then:
\begin{equation}
    \sigma_r(A) = \sqrt{\lambda_r(A^\top A)} = \sqrt{\lambda_r(AA^\top)},
\end{equation}
where $\lambda_r(\cdot)$ denotes the $i$-th largest eigenvalue of a symmetric positive semi-definite matrix.
\end{lemma}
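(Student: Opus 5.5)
The plan is to reduce everything to the spectral theorem for the two positive semi-definite Gram matrices $A^\top A \in \mathbb{R}^{n\times n}$ and $AA^\top \in \mathbb{R}^{m\times m}$, and then match their nonzero spectra through a thin SVD of $A$. Singular values are indexed in decreasing order and padded with zeros up to $\min(m,n)$ (or up to the relevant dimension). The equality with $\lambda_r(AA^\top)$ is then read for $r \le \min(m,n)$, since beyond that index the larger Gram matrix carries only zero eigenvalues.

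\textbf{Step 1: existence of an SVD.} Write $A = U E V^\top$ with $U\in\mathbb{R}^{m\times m}$ and $V\in\mathbb{R}^{n\times n}$ orthogonal and $E$ rectangular diagonal with entries $\sigma_1\ge\cdots\ge\sigma_{\min(m,n)}\ge 0$. If one prefers not to assume the SVD, construct it as follows.
\begin{itemize}
\item Diagonalize the symmetric PSD matrix $A^\top A = V\Lambda V^\top$. Its eigenvalues are nonnegative because $x^\top A^\top A x = \|Ax\|^2 \ge 0$.
\item Set $\sigma_r = \sqrt{\lambda_r}$.
\item For $\sigma_r>0$, define $u_r = A v_r/\sigma_r$. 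These vectors are orthonormal, since $u_r^\top u_s = v_r^\top A^\top A v_s/(\sigma_r\sigma_s) = \delta_{rs}$.
\item Complete $\{u_r\}$ to an orthonormal basis of $\mathbb{R}^m$.
\end{itemize}
Checking $AV = UE$ column by column then gives the decomposition.

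\textbf{Step 2: compute both Gram matrices.} From the SVD,
\begin{align}
A^\top A = V (E^\top E) V^\top, \qquad AA^\top = U (E E^\top) U^\top.
\end{align}
Here $E^\top E$ and $EE^\top$ are square diagonal matrices. Both have diagonal entries $\sigma_1^2,\dots,\sigma_{\min(m,n)}^2$, followed by zeros filling the remaining size. Because $U$ and $V$ are orthogonal, these are eigendecompositions. Hence the $r$-th largest eigenvalue of each Gram matrix is $\sigma_r^2$, and taking nonnegative square roots gives the claim. The square root is unambiguous precisely because both the $\sigma_r$ and the eigenvalues are nonnegative.

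\textbf{Main obstacle.} The only delicate point is bookkeeping, not mathematics. $A^\top A$ and $AA^\top$ have different sizes, so their spectra agree only up to the extra zero eigenvalues. The ordering convention must also place those zeros last, so that the index $r$ refers to the same value on both sides.

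I would handle this with the standard fact that $A^\top A$ and $AA^\top$ share their nonzero eigenvalues with multiplicity. This follows because $x\mapsto Ax$ maps each $\lambda$-eigenspace of $A^\top A$ (with $\lambda\ne0$) injectively into the $\lambda$-eigenspace of $AA^\top$, and symmetrically in the other direction. Stating the index range $r\le \min(m,n)$ explicitly then removes any ambiguity. In the application to Lemma~\ref{lemma:spectral_coupling}, this is used with $A=Q$ and $A=O$, where $\min = D'$.
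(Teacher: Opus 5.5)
Your proof is correct. Building the SVD from the spectral decomposition of $A^\top A$, then reading $A^\top A = V(E^\top E)V^\top$ and $AA^\top = U(EE^\top)U^\top$ as eigendecompositions, is the standard argument for this fact; the paper does not prove the lemma itself but simply cites it from Horn and Johnson. Restricting explicitly to $r \le \min(m,n)$, with the extra zero eigenvalues of the larger Gram matrix placed last, usefully tidies the statement's loose indexing and covers every use in the spectral-coupling lemma ($A = Q$ and $A = O$ with $\min = D'$, plus the square $M = QO$).
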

\end{tcolorbox}

\begin{tcolorbox}[colback=colorDistill, colframe=myblue!8, boxrule=0pt, sharp corners=all, boxsep=0pt, left=5pt, right=5pt, top=5pt, bottom=5pt, before skip=2pt, after skip=2pt]
\begin{lemma}[Commutation of Eigenvalues~\cite{horn2012matrix}]
\label{supp_lemma:commute}
For any two matrices $B \in \mathbb{R}^{m \times n}$ and $C \in \mathbb{R}^{n \times m}$, the non-zero eigenvalues of the product $BC$ are identical to the non-zero eigenvalues of the product $CB$. That is:
\begin{equation}
    \lambda_r(BC) = \lambda_r(CB), \quad \forall \lambda_r \neq 0.
\end{equation}
\end{lemma}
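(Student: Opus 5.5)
The plan is to prove the statement at the level of characteristic polynomials, since this gives equality of the nonzero eigenvalues \emph{with algebraic multiplicity}. That is the form needed when the eigenvalues are ordered and indexed as $\lambda_r$, for instance in the later application with $B=Q$, $C=Q^\top$ or $B=O^\top$, $C=O$.

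\textbf{Warm-up (set equality).} Suppose $BCx=\lambda x$ with $x\neq 0$ and $\lambda\neq 0$. Then $Cx\neq 0$, since otherwise $\lambda x = B(Cx)=0$. Moreover
\[
CB(Cx)=C(BCx)=\lambda\, Cx,
\]
so $\lambda$ is an eigenvalue of $CB$ with eigenvector $Cx$. The symmetric argument, swapping $B$ and $C$, shows that the nonzero spectra of $BC$ and $CB$ coincide as sets.

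\textbf{Multiplicities.} To upgrade this, I will establish the identity
\[
\mu^{n}\det(\mu I_m - BC)=\mu^{m}\det(\mu I_n - CB)
\]
for every scalar $\mu$. I will use the two block factorizations
\[
\begin{pmatrix} I_m & -B\\ 0 & \mu I_n\end{pmatrix}
\begin{pmatrix} \mu I_m & B\\ C & I_n\end{pmatrix}
=\begin{pmatrix} \mu I_m - BC & 0\\ \mu C & \mu I_n\end{pmatrix},
\qquad
\begin{pmatrix} I_m & 0\\ -C & \mu I_n\end{pmatrix}
\begin{pmatrix} \mu I_m & B\\ C & I_n\end{pmatrix}
=\begin{pmatrix} \mu I_m & B\\ 0 & \mu I_n - CB\end{pmatrix}.
\]
The left factors have determinants $\mu^{n}$ and $\mu^{n}$ respectively, and the right-hand sides are block triangular. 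Taking determinants gives
\[
\mu^{n}\det\begin{pmatrix}\mu I_m & B\\ C & I_n\end{pmatrix}=\mu^{n}\det(\mu I_m-BC),
\qquad
\mu^{n}\det\begin{pmatrix}\mu I_m & B\\ C & I_n\end{pmatrix}=\mu^{m}\det(\mu I_n-CB).
\]
Equating these two expressions yields $\mu^{n}\det(\mu I_m-BC)=\mu^{m}\det(\mu I_n-CB)$.

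The identity holds for all $\mu$, so it is an identity of polynomials. The characteristic polynomials of $BC$ and $CB$ therefore differ only by a factor $\mu^{|m-n|}$. Hence every nonzero root appears with the same algebraic multiplicity in both, and sorting the nonzero eigenvalues gives $\lambda_r(BC)=\lambda_r(CB)$ for all $\lambda_r\neq 0$.

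The only delicate step is the multiplicity bookkeeping, in particular the factors $\mu^{n}$ versus $\mu^{m}$. The block determinant computation handles this cleanly. An alternative would be a density argument using the invertible case $\det(I-BC)=\det(I-CB)$, but the direct polynomial identity avoids any limiting argument.
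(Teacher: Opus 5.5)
Your proof is correct and complete. Both block products check out, each left factor has determinant $\mu^{n}$, and $\mu^{n}\det(\mu I_m-BC)=\mu^{m}\det(\mu I_n-CB)$ is a genuine polynomial identity, so the nonzero roots agree with algebraic multiplicity.

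The paper gives no argument of its own for this lemma; it only cites Horn and Johnson. The standard proof behind that citation uses the same idea, packaged as a block similarity:
$\begin{pmatrix} BC & 0\\ C & 0_n\end{pmatrix}\begin{pmatrix} I_m & B\\ 0 & I_n\end{pmatrix}=\begin{pmatrix} I_m & B\\ 0 & I_n\end{pmatrix}\begin{pmatrix} 0_m & 0\\ C & CB\end{pmatrix}$.
The two block-triangular matrices are therefore similar. Writing $p_{BC}(\mu)=\det(\mu I_m-BC)$, their characteristic polynomials are $\mu^{n}p_{BC}(\mu)$ and $\mu^{m}p_{CB}(\mu)$.

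Your determinant factorization is an equally short, self-contained substitute. You are also right that the multiplicity statement, not the set equality of your warm-up, is what the indexwise claim $\lambda_r(BC)=\lambda_r(CB)$ needs. That claim is used in Lemma~\ref{lemma:spectral_coupling} with $B=O^\top$, $C=O$.

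The only caveat concerns the statement, not your proof. Ordering by ``$r$-th largest'' presupposes a real spectrum. With negative eigenvalues, the extra zeros of the larger product would shift the indices. Your reading, sorting only the nonzero eigenvalues in a common order, is the precise one. In the paper's application both products are positive semidefinite, so the $|m-n|$ extra zeros sit at the end and $\lambda_r(O^\top O)=\lambda_r(OO^\top)$ holds for all $r\le D'$.
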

\end{tcolorbox}

We now provide the formal proof of the following Lemma:
\begin{tcolorbox}[colback=colorDistill, colframe=myblue!8, boxrule=0pt, sharp corners=all, boxsep=0pt, left=5pt, right=5pt, top=5pt, bottom=5pt, before skip=2pt, after skip=2pt]
\begin{lemma}[Spectral Coupling]
\label{lemma:spectral_coupling}
Under the balancedness condition $Q^\top Q = OO^\top$, let the singular values of $Q, O$, and $M=QO$ be $\{\sigma_Q^r\}, \{\sigma_O^r\}$, and $\{\sigma_M^r\}$ respectively. It holds that:
\begin{equation}
    \sigma_Q^r = \sigma_O^r = \sqrt{\sigma_M^r}, \quad \mathrm{for } r = 1, \dots, D^\prime.
\end{equation}
\end{lemma}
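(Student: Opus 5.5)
The plan is to compute $M^\top M$ and $MM^\top$ directly and use the balancedness condition to rewrite them as squares of the Gram matrices of $O$ and $Q$. Write $P := Q^\top Q = OO^\top \in \mathbb{R}^{D'\times D'}$, which is symmetric positive semi-definite. Then
\begin{equation*}
M^\top M = O^\top Q^\top Q O = O^\top (OO^\top) O = (O^\top O)^2, \qquad MM^\top = Q O O^\top Q^\top = Q (Q^\top Q) Q^\top = (QQ^\top)^2 .
\end{equation*}
Since $O^\top O$ and $QQ^\top$ are positive semi-definite, their eigenvalues are nonnegative. Squaring therefore preserves the ordering of eigenvalues, so $\lambda_r(M^\top M) = \lambda_r(O^\top O)^2$.

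The next step is to translate this into singular values using Lemma~\ref{supp_lemma:svd_eigen}. It gives $(\sigma_M^r)^2 = \lambda_r(M^\top M) = \lambda_r(O^\top O)^2$, and also $\lambda_r(O^\top O) = (\sigma_O^r)^2$. Hence $\sigma_M^r = (\sigma_O^r)^2$, that is, $\sigma_O^r = \sqrt{\sigma_M^r}$. The same argument applied to $MM^\top = (QQ^\top)^2$ yields $\sigma_Q^r = \sqrt{\sigma_M^r}$.

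One can also check directly that $\sigma_Q^r = \sigma_O^r$, which also settles the indexing. The nonzero eigenvalues of $QQ^\top$ agree with those of $Q^\top Q = P$ by Lemma~\ref{supp_lemma:commute}. Likewise, the nonzero eigenvalues of $O^\top O$ agree with those of $OO^\top = P$. So both $Q$ and $O$ have squared singular values equal to the eigenvalues of $P$, for $r = 1,\dots,D'$.

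The main obstacle is bookkeeping rather than any deep point. The matrices $O^\top O$ and $M^\top M$ are $D\times D$, while only $D'$ singular values are indexed, and zero eigenvalues must be handled separately from Lemma~\ref{supp_lemma:commute}, which only speaks about nonzero ones. I would handle this by padding with zeros: the $D - D'$ extra eigenvalues of the $D\times D$ matrices are all zero, because $\operatorname{rank}(O^\top O) \le D'$. Any remaining zero singular values among the first $D'$ match trivially, since $\sqrt{0} = 0$. Thus the identity holds for all $r \le D'$, including degenerate ones.

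A secondary subtlety is that the eigenvalues of $(O^\top O)^2$ are exactly the squares of the eigenvalues of $O^\top O$ in the same order. This follows from simultaneous diagonalization, $O^\top O = V\Lambda V^\top$, which gives $(O^\top O)^2 = V\Lambda^2 V^\top$. Monotonicity of $x\mapsto x^2$ on $[0,\infty)$ then preserves the ordering. The same computation also recovers the identities $O^\top O = (M^\top M)^{1/2}$ and $QQ^\top = (MM^\top)^{1/2}$ used in Lemma~\ref{lemma:balance_sqrt}.
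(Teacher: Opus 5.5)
Your proposal is correct and follows essentially the paper's route: use balancedness to write $M^\top M = (O^\top O)^2$, note that squaring preserves the order of the nonnegative eigenvalues, and identify $\lambda_r(O^\top O)$ with $(\sigma_O^r)^2$, while $\sigma_Q^r=\sigma_O^r$ comes from $Q^\top Q = OO^\top$. Your symmetric computation $MM^\top=(QQ^\top)^2$ and the zero-padding bookkeeping for $r\le D'$ are minor refinements; they make explicit the zero eigenvalues that the paper's nonzero-eigenvalue commutation lemma does not address.
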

\end{tcolorbox}

\begin{proof}
Recall the balancedness condition $Q^\top Q = OO^\top$, where $Q \in \mathbb{R}^{D \times D^\prime}$ and $O \in \mathbb{R}^{D^\prime \times D}$. 

\textbf{Step 1: Establishing $\sigma_Q^r = \sigma_O^r$.} 
By applying Lemma \ref{supp_lemma:svd_eigen} to the matrices $Q$ and $O$, the squares of their singular values are given by the eigenvalues of $Q^\top Q \in \mathbb{R}^{D^\prime \times D^\prime}$ and $OO^\top \in \mathbb{R}^{D^\prime \times D^\prime}$, respectively:
\begin{equation}
    (\sigma_Q^r)^2 = \lambda_r(Q^\top Q),
\end{equation}
\begin{equation}
    (\sigma_O^r)^2 = \lambda_r(OO^\top).
\end{equation}
Substituting the balancedness condition $Q^\top Q = OO^\top$ into the above, we have $\lambda_r(Q^\top Q) = \lambda_r(OO^\top)$. Since singular values are by definition non-negative, it immediately follows that:
\begin{equation}
\label{eq:sigma_q_o_equal}
    \sigma_Q^r = \sigma_O^r, \quad \mathrm{for} \quad r = 1, \dots, D^\prime.
\end{equation}

\textbf{Step 2: Relating $\sigma_M^r$ to the balanced components.} 
Now consider the product matrix $M = QO \in \mathbb{R}^{D \times D}$. By Lemma \ref{supp_lemma:svd_eigen}, the squared singular values $(\sigma_M^r)^2$ are the eigenvalues of $M^\top M$:
\begin{equation}
    (\sigma_M^r)^2 = \lambda_r(M^\top M) = \lambda_r\left( (QO)^\top (QO) \right) = \lambda_r(O^\top Q^\top Q O).
\end{equation}
Substituting the balancedness condition $Q^\top Q = OO^\top$ into the expression above:
\begin{equation}
    (\sigma_M^r)^2 = \lambda_r(O^\top (OO^\top) O) = \lambda_r(O^\top O O^\top O) = \lambda_r\left( (O^\top O)^2 \right).
\end{equation}
For any symmetric matrix, the eigenvalues of its square are the squares of its eigenvalues. Thus:
\begin{equation}
\label{eq:sigma_m_square}
    (\sigma_M^r)^2 = \left( \lambda_r(O^\top O) \right)^2.
\end{equation}
By invoking Lemma \ref{supp_lemma:commute}, the non-zero eigenvalues of $O^\top O \in \mathbb{R}^{D \times D}$ are identical to those of $OO^\top \in \mathbb{R}^{D^\prime \times D^\prime}$. Therefore, we have $\lambda_r(O^\top O) = \lambda_r(OO^\top)$. Substituting this into Eq. \eqref{eq:sigma_m_square} yields:
\begin{equation}
    (\sigma_M^r)^2 = \left( \lambda_r(OO^\top) \right)^2.
\end{equation}
Taking the square root of both sides (noting that $\sigma_M^r \ge 0$ and $\lambda_r(OO^\top) \ge 0$ as $OO^\top$ is positive semi-definite):
\begin{equation}
\label{eq:sigma_m_lambda_o}
    \sigma_M^r = \lambda_r(OO^\top).
\end{equation}

\textbf{Conclusion.} 
Finally, we link the singular values of $M$ to those of $Q$ and $O$. From Step 1, we know $(\sigma_O^r)^2 = \lambda_r(OO^\top)$ and $(\sigma_Q^r)^2 = \lambda_r(Q^\top Q)$. Combining this with Eq. \eqref{eq:sigma_m_lambda_o}, we obtain:
\begin{equation}
    \sigma_M^r = (\sigma_O^r)^2 = (\sigma_Q^r)^2,
\end{equation}
which is equivalent to $\sigma_Q^r = \sigma_O^r = \sqrt{\sigma_M^r}$ for all $r=1, \dots, D^\prime$. This completes the proof.
\end{proof}

\subsection{Proof of Theorem \ref{thm:rep_distinguishability} [Representation Distinguishability Bound]}
\label{appendix: Proof of thm:rep_distinguishability}
Assume the representation matrix $X \in \mathbb{R}^{L \times D}$ is row-normalized and zero-centered. Recall that $\Sigma = \frac{1}{L} X^\top X$ denote the covariance matrix and $\{\lambda_j\}_{j=1}^{D}$ are the eigenvalues of $X^\top X$. 
The eRank of representations $X$ is defined as:
\begin{equation}
\mathrm{eRank}(X) = \exp \left( -\sum_{j=1}^{D} p_j \log p_j \right), \label{id:erank}
\end{equation}
where $p_j = \frac{\lambda_j}{\sum_{k=1}^{D} \lambda_k}$.

To prove the lower bound of the maximum absolute cosine similarity, we first establish two lemmas based on standard inequalities and matrix properties.

\begin{tcolorbox}[colback=colorDistill, colframe=myblue!8, boxrule=0pt, sharp corners=all, boxsep=0pt, left=5pt, right=5pt, top=5pt, bottom=5pt, before skip=2pt, after skip=2pt]
\begin{lemma}[Lower Bound of Collision Probability via Jensen's Inequality]
\label{lemma:jensen_erank}
For the probability distribution $\{p_j\}_{j=1}^D$ defined in Eq. \ref{id:erank}, where $\sum_{j=1}^D p_j = 1$, the following inequality holds:
\begin{equation}
    \sum_{j=1}^D p_j^2 \ge \frac{1}{\mathrm{eRank}(X)}.
\end{equation}
\end{lemma}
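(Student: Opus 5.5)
The plan is to compare the Shannon entropy $H(p) = -\sum_j p_j \log p_j$ with the Rényi entropy of order two, $H_2(p) = -\log \sum_j p_j^2$. The inequality $\sum_j p_j^2 \ge 1/\mathrm{eRank}(X)$ is equivalent to $H_2(p) \le H(p)$. To see this, take logarithms: $\log \sum_j p_j^2 \ge -H(p)$, using $\log \mathrm{eRank}(X) = H(p)$ by \eqref{id:erank}. So it suffices to prove
\begin{equation*}
\log\Bigl(\sum_{j} p_j^2\Bigr) \ge \sum_j p_j \log p_j .
\end{equation*}

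The key step is Jensen's inequality applied to the concave function $\log$. View $\sum_j p_j^2 = \sum_j p_j \cdot p_j$ as the expectation of the random variable $Y$ taking value $p_j$ with probability $p_j$. Concavity of $\log$ then gives $\log \mathbb{E}[Y] \ge \mathbb{E}[\log Y]$, that is, $\log \sum_j p_j \cdot p_j \ge \sum_j p_j \log p_j$, which is exactly the desired inequality. Exponentiating both sides, with $\exp$ monotone, yields $\sum_j p_j^2 \ge \exp(-H(p)) = 1/\mathrm{eRank}(X)$.

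The only delicate point is indices with $p_j = 0$. Eigenvalues of the PSD matrix $\Sigma$ are nonnegative, so the $p_j$ are well defined and nonnegative as long as $\mathrm{Tr}(\Sigma) > 0$. This holds because the rows are unit-norm, so $\mathrm{Tr}(\Sigma)=1$. Indices with $p_j=0$ contribute zero to $\sum_j p_j^2$, and under the convention $0\log 0 = 0$ they also contribute zero to $H(p)$. I would therefore restrict both sums to the support $\{j : p_j>0\}$ before applying Jensen. The random variable $Y$ is then strictly positive and $\log Y$ is finite.

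There is no substantial obstacle here; the main care is in stating the zero-probability convention correctly. Equality holds iff $p$ is uniform on its support, which is a useful remark for the subsequent distinguishability bounds.
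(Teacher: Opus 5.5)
Your proposal is correct and matches the paper's proof: the paper applies Jensen to the convex function $-\log$, with $Y$ taking value $p_j$ with probability $p_j$, and this is the same inequality $\log\sum_j p_j^2 \ge \sum_j p_j \log p_j$ that you get from concavity of $\log$, followed by exponentiation. Your explicit restriction to the support $\{j : p_j > 0\}$ and your check that $\mathrm{Tr}(\Sigma) = 1$ add care that the paper leaves implicit.
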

\end{tcolorbox}
\begin{proof}
Consider the convex function $\phi(a) = -\log a$. By Jensen's Inequality~\cite{mcshane1937jensen}, we have $\phi(\mathbb{E}[Y]) \le \mathbb{E}[\phi(Y)]$. Let $Y$ be a random variable that takes the value $p_j$ with probability $p_j$. Then:
\begin{equation}
    -\log \left( \sum_{j=1}^D p_j^2 \right) \le \sum_{j=1}^D p_j (-\log p_j) = \log(\mathrm{eRank}(X)).
\end{equation}
Taking the exponential and inverting terms, we obtain $\sum_{j=1}^D p_j^2 \ge \frac{1}{\mathrm{eRank}(X)}$.
\end{proof}

\begin{tcolorbox}[colback=colorDistill, colframe=myblue!8, boxrule=0pt, sharp corners=all, boxsep=0pt, left=5pt, right=5pt, top=5pt, bottom=5pt, before skip=2pt, after skip=2pt]
\begin{lemma}[Spectral Properties of $X$]~\cite{petersen2008matrix,horn2012matrix}
\label{lemma:matrix_trace}
For a row-normalized matrix $X \in \mathbb{R}^{L \times D}$ where $\|X^l\|_2 = 1$, the Gram matrix $XX^\top \in \mathbb{R}^{L \times L}$ satisfies:
\begin{equation}
    \mathrm{Tr}(XX^\top) = \sum_{l=1}^L \|X^l\|_2^2 = L, \quad \mathrm{and} \quad \|XX^\top\|_F^2 = \sum_{j=1}^{D} \lambda_j^2,
\end{equation}
where $\{\lambda_j\}_{j=1}^D$ are the eigenvalues of $X^\top X$.
\end{lemma}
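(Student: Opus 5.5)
The two identities follow from elementary linear algebra, so the proof is two short computations.

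\emph{Trace identity.} I expand $\mathrm{Tr}(XX^\top)=\sum_{l=1}^L (XX^\top)_{ll}$. The $(l,l)$ entry of $XX^\top$ is the inner product of the $l$-th row with itself, namely $X^l (X^l)^\top=\|X^l\|_2^2$. Summing over $l$ gives $\sum_l \|X^l\|_2^2$. Row normalization makes each term equal to $1$, so the sum is $L$.

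\emph{Frobenius identity.}
\begin{enumerate}[label=(\roman*)]
\item Since $XX^\top$ is symmetric, $\|XX^\top\|_F^2=\mathrm{Tr}\big((XX^\top)^\top XX^\top\big)=\mathrm{Tr}(XX^\top XX^\top)$.
\item By the cyclic property of the trace, this equals $\mathrm{Tr}(X^\top X X^\top X)=\mathrm{Tr}\big((X^\top X)^2\big)$.
\item The matrix $X^\top X\in\mathbb{R}^{D\times D}$ is symmetric positive semidefinite, so it diagonalizes orthogonally as $V\Lambda V^\top$ with $\Lambda=\mathrm{diag}(\lambda_1,\dots,\lambda_D)$.
\item Then $(X^\top X)^2=V\Lambda^2V^\top$, and its trace is $\sum_{j=1}^D\lambda_j^2$.
\end{enumerate}

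An alternative route for the second identity uses Lemma~\ref{supp_lemma:commute}: the nonzero eigenvalues of $XX^\top$ and $X^\top X$ coincide. The Frobenius norm squared of the symmetric matrix $XX^\top$ equals the sum of its squared eigenvalues, and zero eigenvalues contribute nothing, so the two sums agree.

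The only point needing care is a notational convention. In this appendix, $\lambda_j$ denotes the eigenvalues of $X^\top X$, not of $\Sigma=\frac1L X^\top X$. The identity must be stated with that convention; otherwise a factor $L^2$ appears. Since the proportions $p_j$ are scale invariant, this choice does not affect eRank. I will state the convention explicitly so that it matches its later use with Lemma~\ref{lemma:jensen_erank}.
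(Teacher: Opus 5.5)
Your proof is correct: the trace identity is the diagonal-entry computation, and $\|XX^\top\|_F^2=\mathrm{Tr}\big((X^\top X)^2\big)=\sum_{j=1}^D\lambda_j^2$ follows from the cyclic property of the trace and the orthogonal diagonalization of $X^\top X$. The paper gives no proof of its own and only cites standard matrix-analysis references, so your argument supplies the standard fact; your convention remark (eigenvalues of $X^\top X$, not of $\Sigma$) matches how the paper uses the lemma in Theorem~\ref{thm:rep_distinguishability}, where it writes $\sum_k\lambda_k=\mathrm{Tr}(XX^\top)=L$.
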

\end{tcolorbox}

Now we begin to prove the following Theorem:
\begin{tcolorbox}[colback=myblue!8, colframe=myblue!8, boxrule=0pt, sharp corners=all, boxsep=0pt, left=5pt, right=5pt, top=5pt, bottom=5pt, before skip=2pt, after skip=2pt]
\begin{theorem}[Representation Distinguishability Bound]
\label{thm:rep_distinguishability}
Assume the representation matrix $X \in \mathbb{R}^{L \times D}$ is row-normalized and zero-centered. Let $\rho_{l_1,l_2} = |X^{l_1} {X^{l_2}}^\top|$ denote the absolute cosine similarity between features of ${l_1}$-th and ${l_2}$-th tokens. It holds that:
\begin{equation}
    \max_{l_1 \neq l_2} \rho^{l1,l2} \ge \sqrt{\frac{1}{L-1} \left( \frac{L}{\mathrm{eRank}(X)} - 1 \right)}.
\end{equation}
\end{theorem}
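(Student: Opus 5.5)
We bound $\|XX^\top\|_F^2$ from both sides: from below using Lemma~\ref{lemma:jensen_erank}, and from above using the largest off-diagonal entry of the Gram matrix $G = XX^\top$.

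For the lower bound, the eigenvalues $\{\lambda_j\}$ of $X^\top X$ sum to $\mathrm{Tr}(X^\top X)=\mathrm{Tr}(XX^\top)=L$ by Lemma~\ref{lemma:matrix_trace}. Hence $p_j=\lambda_j/L$, and Lemma~\ref{lemma:matrix_trace} also gives
\begin{equation*}
\|XX^\top\|_F^2=\sum_{j}\lambda_j^2 = L^2\sum_j p_j^2 \ge \frac{L^2}{\mathrm{eRank}(X)},
\end{equation*}
where the last step is Lemma~\ref{lemma:jensen_erank}. The eigenvalues of $\Sigma$ are those of $X^\top X$ scaled by $1/L$, so the normalized $p_j$ are the same whichever matrix we use. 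This matches the definition of eRank in Eq.~\eqref{def:erank}.

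For the upper bound, row normalization gives $G_{ll}=\|X^l\|_2^2=1$. Every off-diagonal entry satisfies $|G_{l_1l_2}|=\rho_{l_1,l_2}\le \rho_{\max}:=\max_{l_1\neq l_2}\rho_{l_1,l_2}$. Therefore
\begin{equation*}
\|XX^\top\|_F^2=\sum_{l}G_{ll}^2+\sum_{l_1\neq l_2}G_{l_1l_2}^2\le L + L(L-1)\rho_{\max}^2 .
\end{equation*}

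Combining the two bounds gives $L+L(L-1)\rho_{\max}^2\ge L^2/\mathrm{eRank}(X)$. Dividing by $L(L-1)$ yields
\begin{equation*}
\rho_{\max}^2\ge \frac{1}{L-1}\left(\frac{L}{\mathrm{eRank}(X)}-1\right).
\end{equation*}
The right-hand side is nonnegative because $\mathrm{eRank}(X)\le \mathrm{rank}(X)\le L$, and for $L\ge 2$ taking square roots gives the claim.

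The zero-centering hypothesis is not needed for this inequality; it only fixes the setting in which eRank is computed. The one point to check is the identification $\sum_j p_j^2=\sum_j\lambda_j^2/L^2$. This relies on the trace identity and on Lemma~\ref{lemma:jensen_erank} holding for $p$ computed from either $\Sigma$ or $X^\top X$. Both hold, so I expect no real obstacle; the argument is a Frobenius-norm sandwich.
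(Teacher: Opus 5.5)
Your proposal is correct and follows essentially the same route as the paper. Both arguments write $\|XX^\top\|_F^2 = L^2\sum_j p_j^2 \ge L^2/\mathrm{eRank}(X)$ via the Jensen lemma, and compare this with the decomposition $\|XX^\top\|_F^2 = L + \sum_{l_1\neq l_2}\rho_{l_1,l_2}^2 \le L + L(L-1)\rho_{\max}^2$; the paper phrases the latter step as ``the maximum is at least the average.'' Your extra remarks are also correct and slightly tighten the paper's presentation: the right-hand side is nonnegative because $\mathrm{eRank}(X)\le\mathrm{rank}(X)\le L$, and zero-centering is never actually used.
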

\end{tcolorbox}
\begin{proof}
Based on the definition, $\rho_{l_1,l_2} = |X^{l_1} {X^{l_2}}^\top|$. The squared Frobenius norm of $XX^\top$ can be decomposed into diagonal and off-diagonal terms:
\begin{equation}
    \|XX^\top\|_F^2 = \sum_{l=1}^L (X^l {X^l}^\top)^2 + \sum_{l_1 \neq l_2} (X^{l_1} {X^{l_2}}^\top)^2.
\end{equation}
Since $X$ is row-normalized, $X^l {X^l}^\top = 1$, thus the diagonal sum is $L$. It follows that:
\begin{equation}
    \sum_{l_1 \neq l_2} \rho_{l_1, l_2}^2 = \|XX^\top\|_F^2 - L. \label{eq:off_diag_sum}
\end{equation}
The number of off-diagonal terms in the $L \times L$ matrix is $L^2 - L$. Since the maximum value in a set is at least as large as the average value, we have:
\begin{equation}
    \max_{l_1 \neq l_2} \rho_{l_1, l_2}^2 \ge \frac{1}{L^2 - L} \sum_{l_1 \neq l_2} \rho_{l_1, l_2}^2 = \frac{1}{L^2 - L} \left( \|XX^\top\|_F^2 - L \right). \label{eq:max_bound_avg}
\end{equation}
Next, we relate $\|XX^\top\|_F^2$ to eRank. From Lemma \ref{lemma:matrix_trace}, $\|XX^\top\|_F^2 = \sum_{j=1}^D \lambda_j^2$. Noting that $\sum_{k=1}^D \lambda_k = \mathrm{Tr}(XX^\top) = L$ and $p_j = \frac{\lambda_j}{\sum_{k=1}^{D} \lambda_k}$, we have $\lambda_j = L p_j$. Thus:
\begin{equation}
    \|XX^\top\|_F^2 = \sum_{j=1}^D (L p_j)^2 = L^2 \sum_{j=1}^D p_j^2.
\end{equation}
Applying Lemma \ref{lemma:jensen_erank}, $\sum_{j=1}^D p_j^2 \ge \frac{1}{\mathrm{eRank}(X)}$, so $\|XX^\top\|_F^2 \ge \frac{L^2}{\mathrm{eRank}(X)}$. Substituting this into Eq. \ref{eq:max_bound_avg}:
\begin{align}
    \max_{l_1 \neq l_2} \rho_{l_1, l_2}^2 &\ge \frac{1}{L^2 - L} \left( \frac{L^2}{\mathrm{eRank}(X)} - L \right) \\
    &= \frac{L}{L(L - 1)} \left( \frac{L}{\mathrm{eRank}(X)} - 1 \right) \\
    &= \frac{1}{L-1} \left( \frac{L}{\mathrm{eRank}(X)} - 1 \right).
\end{align}
Taking the square root of both sides, we reach the final conclusion:
\begin{equation}
    \max_{l_1 \neq l_2} \rho_{l_1, l_2} \ge \sqrt{\frac{1}{L-1} \left( \frac{L}{\mathrm{eRank}(X)} - 1 \right)}.
\end{equation}
\end{proof}

\subsection{Proof of Theorem \ref{thm:logit_distinguishability} [Probability Distinguishability Bound]}
\label{appendix: Proof of thm:logit_distinguishability}

\begin{tcolorbox}[colback=colorDistill, colframe=myblue!8, boxrule=0pt, sharp corners=all, boxsep=0pt, left=5pt, right=5pt, top=5pt, bottom=5pt, before skip=2pt, after skip=2pt]
\begin{lemma}[Norm Inequalities and Softmax Lipschitzness]\cite{horn2012matrix,gao2017properties}
\label{lemma:Norm Inequalities}
Let $\mathbf{z}_1, \mathbf{z}_2 \in \mathbb{R}^n$ be vectors and $A \in \mathbb{R}^{n \times m}$ be a matrix. The following properties hold:
\begin{enumerate}[label=(\roman*)]
    \item \textbf{Norm Equivalence:} For any vector $a \in \mathbb{R}^n$, the $L_1$ and $L_2$ norms satisfy $\|a\|_1 \le \sqrt{n} \|a\|_2$. \label{id:Norm Inequalities_1}
    \item \textbf{Matrix Lipschitzness:} For the vector $L_2$ norm, $\|a A\|_2 \le \|a\|_2 \|A\|_2$, where $\|A\|_2$ is the \textbf{spectral norm} (the largest singular value of $A$). \label{id:Norm Inequalities_2}
    \item \textbf{Softmax Lipschitzness:} The $\mathrm{Softmax}$ function, defined as $\mathrm{Softmax}(\mathbf{z})_i = \frac{e^{z_i}}{\sum_j e^{z_j}}$, is 1-Lipschitz continuous with respect to the $L_2$ norm, i.e., $\|\mathrm{Softmax}(\mathbf{z}_1) - \mathrm{Softmax}(\mathbf{z}_2)\|_2 \le \|\mathbf{z}_1 - \mathbf{z}_2\|_2$. \label{id:Norm Inequalities_softmax}
    \item \textbf{Cosine Distance Identity:} For any two unit vectors $a, b$ (where $\|a\|_2 = \|b\|_2 = 1$) that are \textbf{co-directional} (i.e., $\langle a, b \rangle \ge 0$), their Euclidean distance and absolute cosine similarity $\rho = |\langle a, b \rangle|$ are related by $\|a - b\|_2 = \sqrt{2 - 2\rho}$. \label{id:Norm Inequalities_3}
\end{enumerate}
\end{lemma}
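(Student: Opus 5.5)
Items \ref{id:Norm Inequalities_1}, \ref{id:Norm Inequalities_2} and \ref{id:Norm Inequalities_3} are elementary, so I would dispatch them quickly and spend most of the effort on the Softmax Lipschitz bound \ref{id:Norm Inequalities_softmax}.

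\textbf{Items (i), (ii), (iv).} For \ref{id:Norm Inequalities_1}, write $\|a\|_1 = \sum_i |a_i| \cdot 1$ and apply Cauchy--Schwarz against the all-ones vector $\mathbf{1}\in\mathbb{R}^n$. This gives $\|a\|_1 \le \|a\|_2\,\|\mathbf{1}\|_2 = \sqrt{n}\,\|a\|_2$.

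For \ref{id:Norm Inequalities_2}, note that $\|aA\|_2 = \|A^\top a^\top\|_2$. Use the variational definition $\|A^\top\|_2 = \sup_{\|x\|_2=1}\|A^\top x\|_2$, together with the fact that $A$ and $A^\top$ have the same largest singular value. Then scale by $\|a\|_2$ (the case $a=0$ is trivial).

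For \ref{id:Norm Inequalities_3}, expand
\[
\|a-b\|_2^2 = \|a\|_2^2 + \|b\|_2^2 - 2\langle a,b\rangle = 2 - 2\langle a,b\rangle .
\]
The co-directional assumption $\langle a,b\rangle \ge 0$ gives $\langle a,b\rangle = |\langle a,b\rangle| = \rho$. Taking the square root yields the identity.

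\textbf{Item (iii), Softmax Lipschitzness.} Let $s(\mathbf{z}) = \mathrm{Softmax}(\mathbf{z})$ and $p = s(\mathbf{z})$.

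First, compute the Jacobian. Differentiating $p_i = e^{z_i}/\sum_j e^{z_j}$ gives
\[
\partial p_i/\partial z_k = p_i(\delta_{ik} - p_k), \qquad\text{i.e.}\qquad J(\mathbf{z}) = \mathrm{diag}(p) - pp^\top .
\]

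Second, reduce to a Jacobian bound. Apply the fundamental theorem of calculus along the segment $\mathbf{z}(t) = \mathbf{z}_2 + t(\mathbf{z}_1-\mathbf{z}_2)$:
\[
s(\mathbf{z}_1)-s(\mathbf{z}_2) = \int_0^1 J(\mathbf{z}(t))\,dt\,(\mathbf{z}_1-\mathbf{z}_2).
\]
Hence $\|s(\mathbf{z}_1)-s(\mathbf{z}_2)\|_2 \le \sup_{\mathbf{z}}\|J(\mathbf{z})\|_2\,\|\mathbf{z}_1-\mathbf{z}_2\|_2$, and it suffices to show $\|J\|_2\le 1$ uniformly in $\mathbf{z}$.

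Third, bound the Jacobian. $J$ is symmetric and positive semidefinite, since
\[
x^\top J x = \sum_i p_i x_i^2 - \Big(\sum_i p_i x_i\Big)^2 \ge 0 ,
\]
which is a variance under the distribution $p$ (nonnegative by Jensen). Its spectral norm therefore equals its largest eigenvalue. Because $pp^\top \succeq 0$, we have $J \preceq \mathrm{diag}(p)$, so
\[
\lambda_{\max}(J) \le \max_i p_i \le 1 .
\]

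\textbf{Main obstacle.} The only step needing care is this last one, the uniform spectral bound on $J$. The plan handles it via the positive semidefinite sandwich $0 \preceq J \preceq \mathrm{diag}(p)$. As a fallback, Gershgorin's theorem applied to the rows of $J$ gives the same conclusion: each row has diagonal entry $p_i(1-p_i)$ and absolute off-diagonal sum $p_i(1-p_i)$, so every eigenvalue is at most $2p_i(1-p_i) \le 1/2$.
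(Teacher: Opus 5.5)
Your argument is correct in all four parts, but the paper never proves this lemma. It defers (i), (ii) and (iv) to Horn--Johnson and (iii) to Gao--Pavel. Your treatment of (i), (ii), (iv) is the standard textbook one: Cauchy--Schwarz against $\mathbf{1}$, the identity $\|A\|_2=\|A^\top\|_2$, and expanding $\|a-b\|_2^2$. So the only real comparison is for (iii), where your route differs from the cited one.

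Gao--Pavel obtain softmax Lipschitzness by convex duality. $\mathrm{Softmax}$ is the gradient of the log-sum-exp function, which is the Fenchel conjugate of the negative entropy restricted to the simplex. Strong convexity of the entropy then transfers to Lipschitz continuity, in fact co-coercivity, of the conjugate's gradient.

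You instead compute the Jacobian $J=\mathrm{diag}(p)-pp^\top$, integrate along the segment, and bound $\|J\|_2$ through the Loewner sandwich $0\preceq J\preceq \mathrm{diag}(p)$. Your approach is self-contained and needs only calculus and basic spectral facts. Your Gershgorin fallback is actually stronger than your main line: it gives $\|J\|_2\le 1/2$. The value $1/2$ is reached at $n=2$, $p=(\tfrac12,\tfrac12)$, so it is the sharp Lipschitz constant. Using it would improve the prefactor in Theorem~\ref{thm:logit_distinguishability} from $\tfrac12\sqrt{\mathrm{Voc}}$ to $\tfrac14\sqrt{\mathrm{Voc}}$.

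The duality route, in exchange, yields the stronger co-coercivity inequality $\langle \mathrm{Softmax}(\mathbf{z}_1)-\mathrm{Softmax}(\mathbf{z}_2),\mathbf{z}_1-\mathbf{z}_2\rangle\ge\|\mathrm{Softmax}(\mathbf{z}_1)-\mathrm{Softmax}(\mathbf{z}_2)\|_2^2$. It also gives a transparent dependence on the temperature parameter. The paper needs neither of these.
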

\end{tcolorbox}

We then prove the following bound:

\begin{tcolorbox}[colback=myblue!8, colframe=myblue!8, boxrule=0pt, sharp corners=all, boxsep=0pt, left=5pt, right=5pt, top=5pt, bottom=5pt, before skip=2pt, after skip=2pt]
\begin{theorem}[Probability Distinguishability Bound]
Let $X^{l_1}, X^{l_2} \in \mathbb{R}^D$ be the features of the ${l_1}$-th and ${l_2}$-th tokens from a row-normalized and zero-centered representation matrix $X \in \mathbb{R}^{L \times D}$.  $P^{l} \in \mathbb{R}^{\mathrm{Voc}}$ be the output probability distribution defined as $P^{l} = \mathrm{Softmax}(Z^l)$, where $Z^l = \mathrm{RMSNorm}_{\mathrm{final}}(X^l) W_u$ are the logits, $W_u \in \mathbb{R}^{D \times \mathrm{Voc}}$ is the unembedding weight, and $g_{\mathrm{final}}$ is the learnable scaling parameter of $\mathrm{RMSNorm}_{\mathrm{final}}$. The minimum TV distance between any two different token predictions is upper-bounded as:
\begin{equation}
\begin{split}
    \min_{l_1 \neq l_2} d_{TV}(P^{l_1}, P^{l_2}) &\le \frac{1}{2} \sqrt{\mathrm{Voc}} \| \mathrm{diag}(g_{\mathrm{final}}) W_{u} \|_2 \\
    & \cdot \sqrt{2 - 2 \sqrt{\frac{1}{L-1} \left( \frac{L}{\mathrm{eRank}(X)} - 1 \right)}},
\end{split}
\end{equation}
where $\| \mathrm{diag}(g_{\mathrm{final}}) W_{u} \|_2$ denotes the spectral norm of the scaled unembedding weight matrix.
\end{theorem}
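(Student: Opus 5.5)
The plan is to chain Lipschitz estimates from the output probabilities back to the hidden features, and then control the feature distance using Theorem~\ref{thm:rep_distinguishability}. Fix the pair $(l_1^\star, l_2^\star)$ that attains $\max_{l_1\neq l_2}\rho_{l_1,l_2}$. Since the minimum TV distance is at most the TV distance at this particular pair, it suffices to bound $d_{TV}(P^{l_1^\star},P^{l_2^\star})$.

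First we write $d_{TV}=\frac12\|P^{l_1}-P^{l_2}\|_1$. Lemma~\ref{lemma:Norm Inequalities}\ref{id:Norm Inequalities_1} then gives $\frac12\|\cdot\|_1\le \frac{\sqrt{\mathrm{Voc}}}{2}\|\cdot\|_2$. By the Softmax Lipschitzness in Lemma~\ref{lemma:Norm Inequalities}\ref{id:Norm Inequalities_softmax}, the $L_2$ distance between the probabilities is at most $\|Z^{l_1}-Z^{l_2}\|_2$.

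Next we unpack the RMSNorm. Since $\|X^l\|_2=1$, we have $\mathrm{RMSNorm}(X^l)=\sqrt{D}\,X^l\,\mathrm{diag}(g_{\mathrm{final}})$ (ignoring $\epsilon$), so the norm acts linearly on the unit-norm rows: $Z^{l_1}-Z^{l_2}=c\,(X^{l_1}-X^{l_2})\,\mathrm{diag}(g_{\mathrm{final}})W_u$. Here $c$ is the RMS rescaling constant, which is common to all rows. I expect to either absorb $c$ into $g_{\mathrm{final}}$ or adopt the convention that the normalization maps unit rows to themselves up to $g$, consistent with the statement, which has no $\sqrt D$. Lemma~\ref{lemma:Norm Inequalities}\ref{id:Norm Inequalities_2} then bounds $\|Z^{l_1}-Z^{l_2}\|_2$ by $\|X^{l_1}-X^{l_2}\|_2\,\|\mathrm{diag}(g_{\mathrm{final}})W_u\|_2$.

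The remaining step, which I expect to be the main obstacle, is converting the cosine bound into a distance bound. Lemma~\ref{lemma:Norm Inequalities}\ref{id:Norm Inequalities_3} requires the two unit vectors to be co-directional. If $\langle X^{l_1^\star},X^{l_2^\star}\rangle\ge0$, then $\|X^{l_1^\star}-X^{l_2^\star}\|_2=\sqrt{2-2\rho^\star}$, and the lower bound on $\rho^\star$ from Theorem~\ref{thm:rep_distinguishability} gives the claimed factor. If instead the inner product is negative, the Euclidean distance is $\sqrt{2+2\rho^\star}$, and this case must be handled separately. I would first try to argue that one can select a pair with large nonnegative inner product. Using zero-centering, $\sum_{l_1\neq l_2}\langle X^{l_1},X^{l_2}\rangle=-L$, so negative correlations are on average small. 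Alternatively, one could rerun the averaging argument of Theorem~\ref{thm:rep_distinguishability} restricted to positive entries. Failing that, I would state the result under the co-directionality assumption implicit in Lemma~\ref{lemma:Norm Inequalities}\ref{id:Norm Inequalities_3}, as the paper's lemma does. Multiplying the constants together then yields the bound in the statement.
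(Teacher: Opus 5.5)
Your chain of estimates is the same as the paper's: $d_{TV}=\frac12\|P^{l_1}-P^{l_2}\|_1\le\frac{\sqrt{\mathrm{Voc}}}{2}\|P^{l_1}-P^{l_2}\|_2$, then softmax $1$-Lipschitzness, then the spectral-norm bound through $\mathrm{diag}(g_{\mathrm{final}})W_u$, then $\|a-b\|_2=\sqrt{2-2\rho}$, then Theorem~\ref{thm:rep_distinguishability} applied to the most similar pair. The paper does not resolve the sign problem you flag. It simply assumes the most similar pair is co-directional ``due to the empirical cone effect'', which is your fallback.

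Your two ideas for removing that assumption cannot work, because the statement is false without it. Take $L=2$. Zero-centering and row-normalization force $X^2=-X^1$, so $\mathrm{eRank}(X)=1$ and the inner square root equals $\sqrt{\tfrac{1}{1}(2-1)}=1$. The right-hand side is therefore $0$. Yet $Z^2=-Z^1$ because RMSNorm is odd, and $\mathrm{Softmax}(Z^1)=\mathrm{Softmax}(-Z^1)$ holds only if $Z^1$ is a constant vector. So for generic $W_u$ the left-hand side is strictly positive.

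The identity $\sum_{l_1\neq l_2}\langle X^{l_1},X^{l_2}\rangle=-L$ you derive only says the off-diagonal inner products average $-1/(L-1)$. That is compatible with none of them being positive. Consider the $L=2m$ tokens $\pm e_1,\dots,\pm e_m$:
\begin{itemize}
\item one gets $\mathrm{eRank}(X)=m$;
\item every off-diagonal inner product is $0$ or $-1$, so the averaging argument restricted to positive entries yields nothing;
\item for $m\ge2$ the closest distinct tokens are at distance $\sqrt2>\sqrt{2-2/\sqrt{L-1}}$, so even the intermediate feature-space inequality fails.
\end{itemize}
Co-directionality of the maximizing pair, or a lower bound on the largest \emph{signed} inner product, must therefore be added as a hypothesis. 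It cannot be proved from zero-centering and row-normalization.

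Your $\sqrt{D}$ observation is also correct, and it is a second point the paper glosses over. With the RMSNorm of Appendix~\ref{Appendix:RMSNorm}, a unit row is mapped to $(1/D+\epsilon)^{-1/2}X^l\odot g_{\mathrm{final}}$, while the paper's proof replaces this by $X^l\odot g_{\mathrm{final}}$. Absorbing the constant into $g_{\mathrm{final}}$ would change the quantity $\|\mathrm{diag}(g_{\mathrm{final}})W_u\|_2$ that appears in the statement. What the argument actually establishes, under the co-directionality hypothesis, is the stated bound multiplied by $(1/D+\epsilon)^{-1/2}\approx\sqrt{D}$. The stated form holds only under the paper's implicit convention that the final normalization carries no $\sqrt{D}$ factor.
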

\end{tcolorbox}

\begin{proof}
By the definition of Total Variation (TV) distance for discrete distributions:
\begin{equation}
    d_{TV}(P^{l_1}, P^{l_2}) = \frac{1}{2} \sum_{j=1}^{\mathrm{Voc}} |P^{l_1}_j - P^{l_2}_j| = \frac{1}{2} \| P^{l_1} - P^{l_2} \|_1.
\end{equation}
Using the norm equivalence property (Lemma \ref{lemma:Norm Inequalities} \ref{id:Norm Inequalities_1}):
\begin{equation}
\label{eq:tv_to_l2}
    d_{TV}(P^{l_1}, P^{l_2}) \le \frac{1}{2} \sqrt{\mathrm{Voc}} \| P^{l_1} - P^{l_2} \|_2.
\end{equation}
Substituting $P^l = \mathrm{Softmax}(Z^l)$ and applying the Softmax Lipschitzness property (Lemma \ref{lemma:Norm Inequalities} \ref{id:Norm Inequalities_softmax}), we can upper-bound the $L_2$ distance between distributions by the $L_2$ distance between their corresponding logits:
\begin{equation}
\label{eq:softmax_to_logits}
    \| P^{l_1} - P^{l_2} \|_2 = \| \mathrm{Softmax}(Z^{l_1}) - \mathrm{Softmax}(Z^{l_2}) \|_2 \le \| Z^{l_1} - Z^{l_2} \|_2.
\end{equation}
In modern Transformer architectures, the logits are typically computed as $Z^l = \mathrm{RMSNorm}_{\mathrm{final}}(X^l) W_u$. Since $X^l$ is already row-normalized, the normalization layer simplifies to an element-wise scaling by $g_{\mathrm{final}}$, written as $Z^l = X^l \mathrm{diag}(g_{\mathrm{final}}) W_u$. Applying the Matrix Lipschitzness property (Lemma \ref{lemma:Norm Inequalities} \ref{id:Norm Inequalities_2}):
\begin{equation}
\label{eq:logit_to_feature}
\begin{split}
    \| Z^{l_1} - Z^{l_2} \|_2 &= \| (X^{l_1} - X^{l_2}) \mathrm{diag}(g_{\mathrm{final}}) W_u \|_2 \\
    &\le \| X^{l_1} - X^{l_2} \|_2 \cdot \| \mathrm{diag}(g) W_u \|_2.
\end{split}
\end{equation}
Combining Eq. \eqref{eq:tv_to_l2}, \eqref{eq:softmax_to_logits}, and \eqref{eq:logit_to_feature}, we have:
\begin{equation}
    d_{TV}(P^{l_1}, P^{l_2}) \le \frac{1}{2} \sqrt{\mathrm{Voc}} \| \mathrm{diag}(g_{\mathrm{final}}) W_u \|_2 \cdot \| X^{l_1} - X^{l_2} \|_2.
\end{equation}
To find the bound for the \emph{minimum} TV distance, we consider the most similar pair of tokens. Assuming they are co-directional due to the empirical ``cone effect'' in LLMs, we apply Lemma \ref{lemma:Norm Inequalities} \ref{id:Norm Inequalities_3}:
\begin{equation}
    \| X^{l_1} - X^{l_2} \|_2 = \sqrt{2 - 2 \rho_{l_1, l_2}},
\end{equation}
where $\rho_{l_1, l_2} = |X^{l_1} {X^{l_2}}^\top|$ is the absolute cosine similarity. Then:
\begin{equation}
    \min_{l_1 \neq l_2} d_{TV}(P^{l_1}, P^{l_2}) \le \frac{1}{2} \sqrt{\mathrm{Voc}} \| \mathrm{diag}(g_{\mathrm{final}}) W_u \|_2 \cdot \sqrt{2 - 2 \max_{l_1 \neq l_2} \rho_{l_1, l_2}}.
\end{equation}
Finally, invoking Theorem \ref{thm:rep_distinguishability} to bound the maximum cosine similarity via $\mathrm{eRank}(X)$:
\begin{equation}
    \max_{l_1 \neq l_2} \rho_{l_1, l_2} \ge \sqrt{\frac{1}{L-1} \left( \frac{L}{\mathrm{eRank}(X)} - 1 \right)}.
\end{equation}
Substituting this into the inequality, we obtain the final bound:
\begin{equation}
\begin{split}
    \min_{l_1 \neq l_2} d_{TV}(P^{l_1}, P^{l_2}) &\le \frac{1}{2} \sqrt{\mathrm{Voc}} \| \mathrm{diag}(g_{\mathrm{final}}) W_{u} \|_2 \\
    & \cdot \sqrt{2 - 2 \sqrt{\frac{1}{L-1} \left( \frac{L}{\mathrm{eRank}(X)} - 1 \right)}}.
\end{split}
\end{equation}
This completes the proof.
\end{proof}

\subsection{Proof of Theorem \ref{thm:limit} [Binary State Collapse]}
\label{appendix: Proof of thm:limit}
We provide a formal proof of the Binary State Collapse by analyzing the geometric constraints on the rows of a rank-1 matrix.

\begin{tcolorbox}[colback=colorDistill, colframe=myblue!8, boxrule=0pt, sharp corners=all, boxsep=0pt, left=5pt, right=5pt, top=5pt, bottom=5pt, before skip=2pt, after skip=2pt]
\begin{lemma}[Properties of Shannon Entropy~\cite{2006Elements}]
\label{lemma:entropy_min}
For a probability distribution $p = (p_1, \dots, p_D)$, the Shannon entropy $H(p) = -\sum_{j=1}^D p_j \log p_j$ is a measure of uncertainty that satisfies: $H(p) = 0$ if and only if $p$ is a one-hot distribution, where one element $p_k = 1$ and all other $p_j = 0$ for $j \neq k$.
\end{lemma}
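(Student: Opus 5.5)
We prove the two directions separately, with the convention $0\log 0 = 0$ (the continuous extension of $a\mapsto a\log a$ at $a=0$). The idea is to show that $H(p)$ is a sum of nonnegative terms, each vanishing only at the endpoints of $[0,1]$.

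\emph{Sufficiency.} Suppose $p$ is one-hot with $p_k=1$. The $k$-th term is $-1\cdot\log 1 = 0$. Every other term is $-0\log 0 = 0$ by the convention. Hence $H(p)=0$.

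\emph{Necessity.} Define $h(a) = -a\log a$ on $[0,1]$.
\begin{enumerate}[label=(\roman*)]
    \item For $a\in(0,1)$ we have $\log a<0$ and $a>0$, so $h(a)>0$.
    \item At the endpoints, $h(0)=0$ (by the convention) and $h(1)=0$.
\end{enumerate}
Therefore $h\ge 0$ on $[0,1]$, with $h(a)=0$ if and only if $a\in\{0,1\}$.

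Now assume $H(p)=\sum_{j=1}^D h(p_j) = 0$. This is a sum of nonnegative terms, so every term must vanish, and hence $p_j\in\{0,1\}$ for all $j$. Since $\sum_j p_j = 1$, exactly one index $k$ has $p_k=1$ and all others are $0$. So $p$ is one-hot.

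The only delicate point is the boundary value $p_j=0$, where $\log p_j$ is undefined. The convention resolves it, and it is justified by $\lim_{a\to 0^+} a\log a = 0$ (e.g., by L'H\^opital's rule applied to $\log a/(1/a)$). The rest of the argument is the sign analysis of $h$ above.
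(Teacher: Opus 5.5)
Your argument is correct. Each term $-p_j\log p_j$ is nonnegative on $[0,1]$ and vanishes exactly when $p_j\in\{0,1\}$ (with $0\log 0=0$), so $H(p)=0$ forces every $p_j\in\{0,1\}$, and $\sum_j p_j=1$ then leaves exactly one index equal to $1$. The paper gives no proof of its own and just cites Cover and Thomas, whose standard proof uses this same termwise nonnegativity argument, so your approach matches.
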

\end{tcolorbox}

\begin{tcolorbox}[colback=myblue!8, colframe=myblue!8, boxrule=0pt, sharp corners=all, boxsep=0pt, left=5pt, right=5pt, top=5pt, bottom=5pt, before skip=2pt, after skip=2pt]
\begin{theorem}[Binary State Collapse]
Assume the representation matrix $X \in \mathbb{R}^{L \times D}$ is zero-centeredand row-normalized. As $\mathrm{eRank}(X) \to 1$, the rank of $X$ becomes 1. For any $l$-th token, its representation $X^l$ satisfies $X^l = X^1$ or $X^l = -X^1$. Consequently, the corresponding logits satisfy $Z^l = Z^1$ or $Z^l = \mathrm{Norm}_{\mathrm{final}}(-X^1) W_{\mathrm{u}}$.
\end{theorem}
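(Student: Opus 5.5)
The argument goes through the spectrum of $\Sigma=\frac1L X^\top X$. If $\mathrm{eRank}(X)=1$, then the Shannon entropy $-\sum_j p_j\log p_j$ of the normalized eigenvalue distribution is $0$. By Lemma~\ref{lemma:entropy_min}, $p$ is then one-hot: exactly one eigenvalue $\lambda_k$ is nonzero and all others vanish. The eigenvalues are well defined as a distribution because row-normalization gives $\sum_j\lambda_j=\mathrm{Tr}(\Sigma)=\frac1L\mathrm{Tr}(XX^\top)=1>0$, using Lemma~\ref{lemma:matrix_trace}. Since $\mathrm{rank}(X)=\mathrm{rank}(X^\top X)$ equals the number of nonzero eigenvalues of the PSD matrix $X^\top X$, we get $\mathrm{rank}(X)=1$. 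For the ``$\mathrm{eRank}\to1$'' phrasing in the appendix version, I would add that the entropy is continuous, so the conclusion holds in the limit, and argue the exact case $\mathrm{eRank}=1$.

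Next, I would write the rank-one matrix as $X=a w^\top$ with $a\in\mathbb{R}^L$ and $w\in\mathbb{R}^D$, taking $\|w\|_2=1$. Each row is then $X^l=a_l w^\top$, and row-normalization forces $|a_l|=1$, so $a_l\in\{\pm1\}$. Consequently every row equals $\pm w^\top$, and in particular $X^l=\pm X^1$ because $X^1=a_1w^\top$ with $a_1=\pm1$. Zero-centering then says $\sum_l a_l=0$, so both signs actually occur (this forces $L$ even). That is a consistency check rather than something needed for the claim.

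For the logits, $Z^l=\mathrm{Norm}_{\mathrm{final}}(X^l)W_u$. If $X^l=X^1$ we immediately get $Z^l=Z^1$. If $X^l=-X^1$, substitution gives $Z^l=\mathrm{Norm}_{\mathrm{final}}(-X^1)W_u$. I would also remark that RMSNorm is odd, $\mathrm{RMSNorm}(-x)=-\mathrm{RMSNorm}(x)$, so in fact $Z^l=-Z^1$; the statement only asks for the substitution form, which needs nothing further.

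The main obstacle is the first step, the careful passage from $\mathrm{eRank}=1$ to rank one. This requires the entropy-zero characterization in Lemma~\ref{lemma:entropy_min} under the convention $0\log0=0$, together with the observation that the number of nonzero eigenvalues of $X^\top X$ equals $\mathrm{rank}(X)$. Everything after that is elementary.
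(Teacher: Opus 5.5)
Your proposal is correct and follows essentially the same route as the paper's proof: zero entropy forces a one-hot eigenvalue distribution, so $\mathrm{rank}(X)=\mathrm{rank}(X^\top X)=1$. Row-normalization then forces every row to equal $\pm X^1$, and the logit claim follows by direct substitution. Your extra remarks are correct refinements that the paper does not spell out: the trace normalization $\mathrm{Tr}(\Sigma)=1$, zero-centering forcing both signs to appear (hence $L$ even), and the oddness of RMSNorm giving $Z^l=-Z^1$.
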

\end{tcolorbox}

\begin{proof}
\textbf{Step 1: Spectral Convergence to Rank-1.}
From Definition \ref{def:erank}, $\mathrm{eRank}(X) = \exp(H(p))$, where $p$ is the normalized eigenvalue distribution of the covariance matrix $\Sigma = \frac{1}{L} X^\top X$. 
As $\mathrm{eRank}(X) \to 1$, we have $H(p) \to 0$. By \textbf{Lemma \ref{lemma:entropy_min}}, this means only the first eigenvalue $\lambda_1$ is non-zero ($\lambda_1 > 0$ and $\lambda_j = 0$ for $j > 1$). 
Thus, $\mathrm{rank}(\Sigma) = 1$. Since $\mathrm{rank}(X) = \mathrm{rank}(X^\top X) = \mathrm{rank}(\Sigma)$~\cite{petersen2008matrix}, it follows that $\mathrm{rank}(X) = 1$.

\textbf{Step 2: Binary Geometric Collapse.}
In any matrix $X$ where $\mathrm{rank}(X) = 1$, all non-zero rows must be linearly dependent. This means any $l$-th row $X^l$ can be expressed as a scalar multiple of the first row $X^1$:
\begin{equation}
    X^l = c_l X^1, \quad \mathrm{for some scalar } c_l \in \mathbb{R}.
\end{equation}
By the row-normalization assumption, both $X^l$ and $X^1$ must have a unit norm ($\|X^l\|_2 = 1$ and $\|X^1\|_2 = 1$). Substituting the scalar relationship into the norm equation gives:
\begin{equation}
    \|c_l X^1\|_2 = |c_l| \cdot \|X^1\|_2 = |c_l| \cdot 1 = 1.
\end{equation}
This directly implies $|c_l| = 1$, which means $c_l \in \{1, -1\}$. 
Therefore, every token representation $X^l$ in the sequence must satisfy:
\begin{equation}
    X^l = X^1 \quad \mathrm{or} \quad X^l = -X^1.
\end{equation}

\textbf{Step 3: Binarization of Output Logits.}
The logits for each token are computed through the unembedding layer $Z^l = \mathrm{Norm}_{\mathrm{final}}(X^l) W_{\mathrm{u}}$ (Eq. \ref{eq:unembedding}).
Given the binary states of $X^l$ derived in Step 2:
\begin{enumerate}
    \item If $X^l = X^1$, then $Z^l = \mathrm{Norm}_{\mathrm{final}}(X^1) W_{\mathrm{u}} = Z^1$.
    \item If $X^l = -X^1$, then $Z^l = \mathrm{Norm}_{\mathrm{final}}(-X^1) W_{\mathrm{u}}$.
\end{enumerate}
Thus, the sequence of logits $Z$ collapses into at most two distinct vectors across all $L$ positions. This binary restriction mitigates the model from assigning unique, context-dependent probabilities to tokens, leading to the loss of reasoning ability.
\end{proof}

\subsection{Proofs of Theorem \ref{thm:stability} [Vanishing Initial Dynamics]}
\label{appendix: Proof of thm:stability}
We begin by establishing the algebraic properties of the selection matrix $H$ and the resulting product matrix $M(0) = Q(0)O(0) =HH^\prime$ at initialization.

\begin{tcolorbox}[colback=colorDistill, colframe=myblue!8, boxrule=0pt, sharp corners=all, boxsep=0pt, left=5pt, right=5pt, top=5pt, bottom=5pt, before skip=2pt, after skip=2pt]
\begin{lemma}[Properties of Selection Matrices] \label{lemma:selection}
Let $H \in \mathbb{R}^{D \times D^\prime}$ be a selection matrix defined by an index set $G = \{G_1, \dots, G_{D^\prime}\} \subset \{1, \dots, D\}$ , where $H_{ij} = 1$ if $i = G_j $. Then:
\begin{enumerate}[label=(\roman*)]
    \item $H^\top H = I_{D^\prime}$, where $I_{D^\prime}$ is the $D^\prime \times D^\prime$ identity matrix. \label{id:projection_1}
    \item $M(0) = HH^\top \in \mathbb{R}^{D \times D}$ is a symmetric projection matrix. \label{id:projection_2}
    \item The singular values $\sigma_{HH^\top}^r$ of $HH^\top$ satisfy $\sigma_{HH^\top}^r \in \{0, 1\}$ for all $r \in \{1, \dots, D\}$. Specifically, there are $D^\prime$ singular values equal to $1$ and $D - D^\prime$ singular values equal to $0$. \label{id:projection_3}
\end{enumerate}
\end{lemma}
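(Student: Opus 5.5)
The three claims follow one another, so I will prove them in order. For \ref{id:projection_1}, I compute $(H^\top H)_{jk} = \sum_{i=1}^D H_{ij}H_{ik}$. Column $j$ of $H$ is the standard basis vector $e_{G_j}\in\mathbb{R}^D$. The indices $G_1,\dots,G_{D'}$ are distinct, since $G$ is a set of $D'$ elements. Hence $e_{G_j}^\top e_{G_k} = \delta_{jk}$, which gives $H^\top H = I_{D'}$. The one point to state explicitly is this distinctness of the $G_j$; without it the columns could coincide and the identity would fail.

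For \ref{id:projection_2}, symmetry is immediate: $(HH^\top)^\top = HH^\top$. Idempotence follows from \ref{id:projection_1}, because $(HH^\top)(HH^\top) = H(H^\top H)H^\top = HI_{D'}H^\top = HH^\top$. I will also record the explicit form $HH^\top = \sum_{j=1}^{D'} e_{G_j}e_{G_j}^\top$. This is the diagonal matrix with ones exactly at the positions $(G_j,G_j)$ and zeros elsewhere, which makes the later claims transparent.

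For \ref{id:projection_3}, let $P = HH^\top$. Since $P$ is symmetric positive semidefinite, its singular values coincide with its eigenvalues; equivalently, by Lemma~\ref{supp_lemma:svd_eigen}, $\sigma_r(P) = \sqrt{\lambda_r(P^\top P)} = \sqrt{\lambda_r(P)}$ because $P^\top P = P^2 = P$. If $Pv = \lambda v$ with $v\neq 0$, then $\lambda v = P^2 v = \lambda^2 v$, so $\lambda\in\{0,1\}$. To count multiplicities, I will use $\mathrm{Tr}(P) = \mathrm{Tr}(H^\top H) = D'$, or simply read them off the diagonal form above. Either way there are exactly $D'$ eigenvalues equal to $1$ and $D-D'$ equal to $0$.

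None of these steps is a real obstacle. The only care needed is the passage from eigenvalues to singular values, which holds because $P$ is positive semidefinite, and the use of distinct indices in \ref{id:projection_1}.
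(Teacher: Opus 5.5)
Your proof is correct and follows essentially the same route as the paper. You use orthonormality of the distinct columns $e_{G_j}$ for (i), symmetry together with idempotence $H(H^\top H)H^\top = HH^\top$ for (ii), and for (iii) eigenvalues in $\{0,1\}$ that coincide with the singular values because $HH^\top$ is positive semidefinite. Your trace (or diagonal-form) count of the multiplicities $D'$ and $D-D'$ is a small improvement, since the paper's proof asserts that part of (iii) without justifying it.
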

\end{tcolorbox}

\begin{proof}
(i) Since each column of $H$ is a distinct standard basis vector $e_{G_j}$, the columns are orthonormal, hence $H^\top H = I$. (ii) Symmetry follows from $(HH^\top)^\top = (H^\top)^\top H^\top = HH^\top$. (iii) For a symmetric projection matrix, since the projection property follows from $(HH^\top)^2 = H(H^\top H)H^\top = HIH^\top = HH^\top$, the eigenvalues are restricted to $\{0, 1\}$ \citep{petersen2008matrix}. Given $HH^\top$ is symmetric positive semi-definite (PSD), its singular values are identical to its eigenvalues, thus $\sigma_M^r \in \{0, 1\}$.
\end{proof}

\begin{tcolorbox}[colback=colorDistill, colframe=myblue!8, boxrule=0pt, sharp corners=all, boxsep=0pt, left=5pt, right=5pt, top=5pt, bottom=5pt, before skip=2pt, after skip=2pt]
\begin{lemma}[SVD of Symmetric PSD Matrices~\cite{horn2012matrix}] \label{lemma:svd_symmetric}
For any symmetric PSD matrix $A$, its singular value decomposition (SVD) $A = U E V^\top$ can be chosen such that $U = V$. In this case, for any singular value $\sigma^r$, the corresponding left and right singular vectors satisfy $u^r = v^r$.
\end{lemma}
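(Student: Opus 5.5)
The plan is to build the SVD directly from the spectral decomposition of $A$, rather than starting from an arbitrary SVD and modifying it. By the spectral theorem for real symmetric matrices, there is an orthogonal $V$ and a real diagonal $\Lambda = \mathrm{diag}(\lambda_1,\dots,\lambda_n)$ with $A = V\Lambda V^\top$. Positive semi-definiteness gives $\lambda_j = v_j^\top A v_j \ge 0$ for every eigenvector column $v_j$. Permuting the columns of $V$ together with the diagonal entries of $\Lambda$, which keeps $V$ orthogonal, we may assume $\lambda_1 \ge \dots \ge \lambda_n \ge 0$.

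Next we check that $A = V\Lambda V^\top$ is a valid SVD in the sense used in Lemma \ref{lemma:svd_evo}. It is an orthogonal matrix, times a diagonal matrix with nonnegative entries in non-increasing order, times the transpose of an orthogonal matrix. So we may set $U = V$ and $E = \Lambda$. To confirm that these diagonal entries really are the singular values of $A$, we invoke Lemma \ref{supp_lemma:svd_eigen}: $A^\top A = A^2 = V\Lambda^2 V^\top$, so $\sigma_r(A) = \sqrt{\lambda_r^2} = \lambda_r$, using $\lambda_r \ge 0$. Reading off the columns then gives $u^r = v^r$ for every $r$.

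The only delicate point, and the reason the statement says \emph{can be chosen}, is non-uniqueness. When singular values repeat, or equal zero, the singular vectors are determined only up to an orthogonal rotation within each singular subspace. A generic SVD routine might therefore return some $U \neq V$, for example with sign flips on columns belonging to zero singular values. I would handle this by stating explicitly that we fix the eigen-based choice above.

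I would also remark that this choice is legitimate for the downstream use in Theorem \ref{thm:stability}. The identity $\dot\sigma^r = (u^r)^\top \dot W v^r$ from Lemma \ref{lemma:svd_evo} must be applied with a consistent singular pair. For $M(0) = HH^\top$, whose singular values lie in $\{0,1\}$ with multiplicities by Lemma \ref{lemma:selection}, this consistency is exactly what the construction provides. I expect this bookkeeping about multiplicities to be the main subtlety, whereas the algebra itself is routine.
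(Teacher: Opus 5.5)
Your proof is correct and is the standard argument. The paper gives no proof of this lemma and only cites Horn and Johnson, and building the SVD directly from the spectral decomposition $A = V\Lambda V^\top$ is exactly the textbook route behind that citation: the eigenvalues are nonnegative by positive semi-definiteness, they are sorted, and they are identified as the singular values through $A^\top A = V\Lambda^2 V^\top$.

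One caution on your closing remark, which lies outside the lemma and does not affect its proof. At a repeated positive singular value, Lemma~\ref{lemma:svd_evo} gives the branch derivatives only for a consistently paired basis that diagonalizes $\tfrac12\bigl(U_1^\top \dot M V_1 + V_1^\top \dot M^\top U_1\bigr)$, where $U_1, V_1$ are orthonormal bases of the left and right singular subspaces. So the choice $u^r = v^r$ alone does not justify its use in Theorem~\ref{thm:stability}. What actually makes that step valid is that $\dot M(0) = HH^\top \Sigma (I - HH^\top)$ has zero compression to both the range and the kernel of $HH^\top$.
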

\end{tcolorbox}

\begin{tcolorbox}[colback=myblue!8, colframe=myblue!8, boxrule=0pt, sharp corners=all, boxsep=0pt, left=5pt, right=5pt, top=5pt, bottom=5pt, before skip=2pt, after skip=2pt]
\begin{theorem}[Vanishing Initial Dynamics] 
Under our activation-preserved initialization, at $t=0$, the singular values of $M$ satisfy $\sigma_M^r \in \{0, 1\}$. Furthermore, their initial derivatives vanish, i.e., $\dot{\sigma}_M^r = 0$ for all $r \in \{1, \dots, D\}$.
\end{theorem}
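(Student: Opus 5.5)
The plan is to check that the activation-aware initialization satisfies the hypotheses of Theorem~\ref{thm:singular_dynamics}, and then evaluate its right-hand side at $t=0$. By construction $Q(0)=H$ and $O(0)=H^\top$. By Lemma~\ref{lemma:selection}\ref{id:projection_1}, $Q(0)^\top Q(0)=H^\top H=I_{D'}=O(0)O(0)^\top$, so the balanced condition holds at $t=0$. By Lemma~\ref{lemma:balancedness} it then holds along the whole trajectory, which justifies using Theorem~\ref{thm:singular_dynamics} and Lemma~\ref{lemma:balance_sqrt}. The first claim, $\sigma_M^r\in\{0,1\}$, is exactly Lemma~\ref{lemma:selection}\ref{id:projection_3} applied to $M(0)=HH^\top=:P$.

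For the derivatives, $P$ is symmetric PSD, so by Lemma~\ref{lemma:svd_symmetric} we can choose $u_M^r=v_M^r$. Theorem~\ref{thm:singular_dynamics} then gives
\begin{equation*}
\dot\sigma_M^r = 2\sigma_M^r (u_M^r)^\top\Sigma\bigl(u_M^r-\sigma_M^r u_M^r\bigr) = 2\sigma_M^r(1-\sigma_M^r)\,(u_M^r)^\top\Sigma u_M^r ,
\end{equation*}
which vanishes because $\sigma_M^r(1-\sigma_M^r)=0$ whenever $\sigma_M^r\in\{0,1\}$. This is the core computation. It also explains the mechanism: both the ``$\sigma\approx 0$'' growth factor and the ``$\sigma\approx 1$'' saturation factor are simultaneously at their fixed points.

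The main obstacle is that Lemma~\ref{lemma:svd_evo} presupposes simple singular values with well-defined singular vectors. Here, however, $P$ has the value $1$ with multiplicity $D'$ and the value $0$ with multiplicity $D-D'$, so the singular vectors are only determined up to rotations within these two eigenspaces. To make the argument robust, I would supplement it with a direct computation of $\dot M(0)$. Substituting $O^\top O=QQ^\top=P$ into $\dot M=-\Sigma(M-I)O^\top O-QQ^\top\Sigma(M-I)$ gives
\begin{equation*}
\dot M(0)=-\Sigma(P-I)P-P\Sigma(P-I)=P\Sigma(I-P),
\end{equation*}
using $P^2=P$. I would then invoke the standard first-order perturbation result for a repeated singular value: its derivatives are the singular values of the compressed block $U_1^\top\dot M V_1$, where $U_1,V_1$ span the corresponding singular subspaces. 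For the value $1$ the block is $P\dot M(0)P=P\Sigma(I-P)P=0$; for the value $0$ it is $(I-P)\dot M(0)(I-P)=0$. Hence every branch of singular values has zero initial derivative, independently of how the basis is chosen.

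Note that $\dot M(0)$ itself is generally nonzero, since it is an off-diagonal coupling between the selected and discarded subspaces. The claim is therefore only about the spectrum: at first order the off-diagonal coupling rotates the singular subspaces but does not change the singular values.
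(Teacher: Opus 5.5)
Your argument is correct, and its core is the paper's own proof. You get $\sigma_M^r(0)\in\{0,1\}$ from Lemma~\ref{lemma:selection}, then apply Theorem~\ref{thm:singular_dynamics} with $u_M^r=v_M^r$ (Lemma~\ref{lemma:svd_symmetric}), so the factor $\sigma_M^r(1-\sigma_M^r)$ vanishes.

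Two of your additions supply rigor the paper skips. First, your check that $H^\top H=I_{D'}$ makes the initialization balanced. Second, your direct computation $\dot M(0)=P\Sigma(I-P)$ with vanishing compressed blocks handles the degenerate spectrum, where the paper applies a formula meant for simple singular values without comment.

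One small correction, which does not affect the conclusion: for a repeated \emph{positive} singular value, the first-order derivatives are the eigenvalues of the symmetric part of $U_1^\top\dot M V_1$, not its singular values (the singular-values description holds only for the zero cluster). Since both blocks are zero here, every branch still has zero initial derivative.
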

\end{tcolorbox}

\begin{proof}
At initialization $t=0$, the product matrix is given by $M(0) = Q(0)O(0) = HH^\top$. By Lemma \ref{lemma:selection} \ref{id:projection_3}, it follows immediately that:
\begin{equation}
    \sigma_M^r(0) \in \{0, 1\}, \quad \forall r \in \{1, \dots, D\}.
\end{equation}
This proves the first part of the theorem. 

To prove $\dot{\sigma}_M^r = 0$, we substitute the initial state into the singular value dynamics from Theorem \ref{thm:singular_dynamics}:
\begin{equation} \label{eq:dynamics_ref_final}
    \dot{\sigma}_M^r = 2 \sigma_M^r (u_M^r)^\top \Sigma \left( v_M^r - \sigma_M^r u_M^r \right).
\end{equation}
We analyze the dynamics based on the two possible values of $\sigma_M^r(0)$:

\textbf{Case 1:} $\sigma_M^r(0) = 0$.
The leading factor $\sigma_M^r$ in Equation \eqref{eq:dynamics_ref_final} is zero, which directly yields $\dot{\sigma}_M^r = 0$.

\textbf{Case 2:} $\sigma_M^r(0) = 1$.
From Lemma \ref{lemma:selection} \ref{id:projection_2}, $M(0)$ is a symmetric PSD matrix. Applying Lemma \ref{lemma:svd_symmetric}, the singular vectors satisfy $u_M^r = v_M^r$. Substituting $\sigma_M^r = 1$ and $v_M^r = u_M^r$ into Equation \eqref{eq:dynamics_ref_final}:
\begin{align}
    \dot{\sigma}_M^r &= 2(1) \cdot (u_M^r)^\top \Sigma \left( u_M^r - (1)u_M^r \right) \\
    &= 2 (u_M^r)^\top \Sigma \left( 0 \right) \\
    &= 0.
\end{align}

Combining both cases, we conclude $\dot{\sigma}_M^r = 0$ for all $r$ at $t=0$, completing the proof.
\end{proof}

\section{Representation Geometry of Depth-reduced  EDistilled LLMs}
\label{Appendix: Representation Geometry of Depth-reduced LLMs}

\begin{figure}[h]
    \centering
    \includegraphics[width=0.75\linewidth]{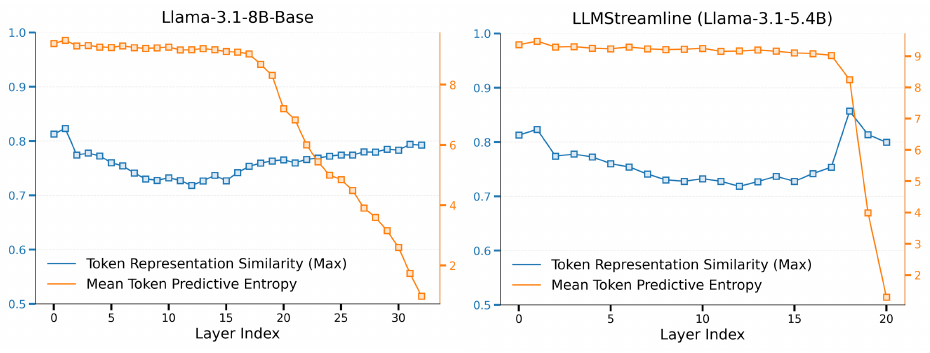} 
    \caption{\textbf{Comparative analysis of representation geometry and predictive confidence} between the teacher (Llama-3.1-8B-Base) and the depth-reduced student (5.4B). \textbf{Top Row:} Layer-wise sequence-averaged maximum absolute cosine similarity (excluding self), indicating representation separability. \textbf{Bottom Row:} Sequence-averaged per-token entropy. While the teacher exhibits a gradual transition, the student manifests a sharp \emph{entropy cliff} synchronized with a \emph{similarity spike}, signifying a structural collapse of the reasoning trajectory.}
    % \caption{Co-evolution of token predictive entropy (orange) and representation similarity (blue) across layers for Llama-3.1-8B-Base (left) and a depth-reduced student (LLMStreamline, 5.4B, right). Samples sequences from Nemotron-Pretraining-Dataset.}
    \label{fig:reasoning_collapse of depth}
\end{figure}

We further analyze the co-evolution of the maximum token representation similarity and predictive uncertainty (Entropy defined in Appendix \ref{appendix:token_entropy}) across layers. As illustrated in Figure~\ref{fig:reasoning_collapse of depth}, there is a notable divergence in how the teacher and depth-reduced student process information.

\textbf{Entropy Cliff.}
In the teacher model (Llama-3.1-8B-Base), the token entropy (bottom-left) exhibits a smooth, multi-stage decay, suggesting a gradual refinement process where the model progressively resolves ambiguity~\cite{gupta2025llms,tan2025bottom}. In contrast, the depth-reduced student (bottom-right) maintains a high-entropy ``exploration'' state for over 80\% of its depth, followed by a precipitous \emph{entropy cliff} in the final three layers. This indicates that the model is forced to compress the entire transition from exploration to commitment into a dangerously narrow computational window. Such shallow Transformer architectures are known to favor shortcut-style solutions that defer decisive computation rather than executing stepwise reasoning~\cite{Liu2022TransformersLS}, and tend to rely on strong statistical co-occurrence relationships rather than combining input prompts for combinatorial reasoning~\cite{Bhatia2025DistributionalST}.

\textbf{Contextual Confusion.} 
Crucially, this entropy drop in the student is perfectly synchronized with a sharp spike in similarity in the representation space (top-right). While the teacher's token similarity increases moderately and smoothly, the student's representations become markedly less separable precisely at the point of decision-making.  This confusion often makes LLMs prone to hallucinations during reasoning ~\cite {Bhatia2025DistributionalST,fu2026attention}.

% \clearpage

\section{Experiment Settings}

\subsection{Training Hyperparameter Settings}
\label{Appendix:Training Hyperparameter}
In Table \ref{tab:hyperparameters}, we present the hyperparameters we employ. Except for the temperature coefficient in the logits loss being set to 1, the rest remain entirely consistent with LRC.

\subsection{Datasets}
\label{Appendix:Datasets}
Following the methodology of LRC, we curate three distinct data mixtures for our training process: Mixed-1.1 (10B) for RED-1.5B-Base, Mixed-1.1 (20B) for RED-1.7B-Base, and Mixed-2.0 (18B) for RED-4B-Base. The detailed composition and token counts are summarized in Table~\ref{tab:dataset_composition}. All datasets utilized are open-sourced.  It is worth noting that the training process does not necessarily exhaust the entire shuffled dataset. For instance, RED-4B-Base is trained on 18B tokens, even though the Mixed-2.0 pool contains a total of 21.5B tokens.

For calibration and internal representation analysis (including token entropy, representation eRank, similarity, and logits), we utilize samples from three subsets of the Nemotron-Pretraining-Dataset: 'Nemotron-CC-Diverse-QA', 'Nemotron-SFT-Code', and 'Nemotron-SFT-General'. We select 5 samples per subset for calibration and 1 sample per subset for representation analysis. Furthermore, 5 samples from SlimOrca are extracted for the linear autoencoder experiments. The full list of Hugging Face ID for these datasets is provided in Table~\ref{tab:dataset_ids}.

\begin{table}[h]
\centering
\caption{Training hyperparameters for our experiments.}
\label{tab:hyperparameters}
\vspace{-5pt}

\renewcommand{\arraystretch}{1}
\resizebox{0.75\textwidth}{!}{
\begin{tabular}{lccc}
\toprule
\textbf{Model} & \textbf{RED-1.5B-Base} & \textbf{RED-1.7B-Base} & \textbf{RED-4B-Base} \\
\midrule
Teacher                     & Llama-3.2-3B-Instruct & Qwen2.5-3B-Instruct & Qwen2.5-7B-Instruct \\
Trained Tokens              & 10B                   & 20B                 & 18B                 \\
Pre-train Dataset           & Mixed 1.1             & Mixed 1.1           & Mixed 2.0           \\
Pre-trained Tokens          & 10B                   & 20B                 & 18B                 \\
\midrule
Teacher Hidden Size         & 3,072                 & 2,048               & 3,584               \\
Student Hidden Size         & 1,536                 & 1,200               & 2,048               \\
Sequence Length             & 2,048                 & 2,048               & 2,048               \\
Batch Size (tokens)         & 49,152                & 32,768              & 32,768              \\
Clone Loss Weight ($\alpha$)& 0.2                   & 0.5                 & 0.5                 \\
Learning Rate (Pre-train)   & $1.0 \times 10^{-4}$  & $6.7 \times 10^{-5}$ & $1.0 \times 10^{-4}$ \\
LR Scheduler                & Linear                & Linear              & Linear              \\
Warm-up Ratio               & 0.005                 & 0.005               & 0.005               \\
Optimizer                   & Adam                  & Adam                & Adam                \\
Adam $\beta_1$              & 0.9                   & 0.9                 & 0.9                 \\
Adam $\beta_2$              & 0.999                 & 0.999               & 0.999               \\
Temperature for $\mathcal{L}_{\mathrm{KL}}$ & 1       & 1                   & 1                   \\
RMSNorm $\epsilon$          & $1.0 \times 10^{-5}$  & $1.0 \times 10^{-5}$ & $1.0 \times 10^{-5}$ \\
GPUs                        & $8 \times \mathrm{H800}$ & $8 \times \mathrm{H800}$ & $8 \times \mathrm{H800}$ \\
\midrule
Training Time               & 30 Hours              & 79 Hours            & 140 Hours           \\
\bottomrule
\end{tabular}
}
\vspace{-5pt}
\end{table}

% --- Table 1: Dataset Composition ---
\begin{table}[h]
\centering
\caption{Training dataset composition (number of exact training tokens).}
\vspace{-5pt}
\label{tab:dataset_composition}
\renewcommand{\arraystretch}{0.95}
\resizebox{0.6\textwidth}{!}{
\begin{tabular}{lccc}
\toprule
\textbf{Training Dataset} & \textbf{Mixed-1.1 (10B)} & \textbf{Mixed-1.1 (20B)} & \textbf{Mixed-2.0 (18B)} \\
\midrule
Fineweb-Edu       & 10B     & 20B     & 18B    \\
DCLM              & 0       & 0       & 2B     \\
Cosmopedia V2     & 0       & 0       & 1B     \\
OpenHermes 2.5    & 450M    & 450M    & 450M   \\
\midrule
\textbf{Total}    & 10.5B   & 20.5B   & 21.5B  \\
\bottomrule
\end{tabular}
}
\end{table}

% --- Table 2: Dataset IDs ---
\begin{table}[h!]
\centering
\caption{Open-source datasets used in our experiments.}
\vspace{-5pt}

\label{tab:dataset_ids}
% 使用 tabularx 自动调整 ID 列的宽度，避免路径过长溢出
\resizebox{0.7\textwidth}{!}{
% \begin{tabularx}{\textwidth}{lX}
\begin{tabularx}{\textwidth}{>{\hsize=0.3\hsize}X >{\hsize=0.7\hsize}X}
\toprule
\textbf{Dataset} & \textbf{Huggingface Data ID} \\
\midrule
Fineweb-Edu             & \texttt{HuggingFaceTB/smollm-corpus/fineweb-edu-dedup} \\
Cosmopedia V2           & \texttt{HuggingFaceTB/smollm-corpus/cosmopedia-v2} \\
DCLM                    & \texttt{mlfoundations/dclm-baseline-1.0} \\
OpenHermes 2.5          & \texttt{teknium/OpenHermes-2.5} \\
Open-Orca/SlimOrca      & \texttt{Open-Orca/SlimOrca} \\
Nemotron-Pretraining    & \texttt{nvidia/Nemotron-Pretraining-Dataset-sample} \\
\bottomrule
\end{tabularx}
}
\vspace{-5pt}
\end{table}

\subsection{Model Configurations}
\label{Appendix:Model Configurations}

\vspace{-5pt}
The architecture of our RED student models follows the design of the LRC series. Table~\ref{tab:model_config} presents a comprehensive comparison of the architectural configurations between the student models and their respective teachers. Specifically, each student model inherits the structural hyperparameters of its teacher (e.g., number of layers, attention heads, and FFN dimensions), with the primary modification being a reduction in the hidden size to approximately half that of the teacher.

\begin{table}[h!]
\centering
\caption{Architectural configurations of student models (RED series) and their respective teacher models.}
\label{tab:model_config}
\small
\setlength{\tabcolsep}{4pt} % 略微压缩列间距以适应页面宽度
\resizebox{\textwidth}{!}{
\begin{tabular}{l cc cc cc}
\toprule
\multirow{2}{*}{\textbf{Configuration}} & \multicolumn{2}{c}{\textbf{1.5B}} & \multicolumn{2}{c}{\textbf{1.7B}} & \multicolumn{2}{c}{\textbf{4B}} \\
\cmidrule(lr){2-3} \cmidrule(lr){4-5} \cmidrule(lr){6-7}
& \textbf{RED-1.5B-Base} & \textbf{Llama3.2-3B-Ins.} & \textbf{RED-1.7B-Base} & \textbf{Qwen2.5-3B-Ins.} & \textbf{RED-4B-Base} & \textbf{Qwen2.5-7B-Ins.} \\
\midrule
Layers                & 28                    & 28                    & 36                    & 36                    & 28                    & 28                    \\
Attn Q Heads          & 24                    & 24                    & 16                    & 16                    & 28                    & 28                    \\
Attn KV Heads         & 8                     & 8                     & 2                     & 2                     & 4                     & 4                     \\
Head Dim                & 128                   & 128                   & 128                   & 128                   & 128                   & 128                   \\
Hidden Size             & 1,536                 & 3,072                 & 1,200                 & 2,048                 & 2,048                 & 3,584                 \\
FFN Intermediate Size   & 8,192                 & 8,192                 & 11,008                & 11,008                & 18,944                & 18,944                \\
Vocab Size              & 128,256               & 128,256               & 151,936               & 151,936               & 152,064               & 152,064               \\
Tie Word Embeddings     & True                  & True                  & True                  & True                  & False                 & False                 \\
\bottomrule
\end{tabular}
}
\end{table}

% \clearpage

\subsection{Model Checkpoints}
\label{Appendix:model checkpoint}
Table \ref{tab:checkpoints} summarizes the teacher and baseline model checkpoints utilized in our study. If the models in our experiments have weights available on Hugging Face, these will be prioritised for use. If not, the model will be trained according to the official code.

\begin{table}[ht]
\centering
\caption{Summary of model checkpoints and their Hugging Face ID.}
\label{tab:checkpoints}
\resizebox{0.7\columnwidth}{!}{ % 开始缩放
\begin{tabular}{ll}
\toprule
\textbf{Model Name} & \textbf{Hugging Face Model ID} \\
\midrule
Llama3.2-3B-Instruct & \texttt{meta-llama/Llama-3.2-3B-Instruct} \\
Qwen2.5-3B-Instruct & \texttt{Qwen/Qwen2.5-3B-Instruct} \\
Qwen2.5-7B-Instruct & \texttt{Qwen/Qwen2.5-7B-Instruct} \\
Llama3.1-8B-Instruct & \texttt{meta-llama/Llama-3.1-8B-Instruct} \\
Llama3.1-8B-Base     & \texttt{meta-llama/Llama-3.1-8B} \\
LLMStreamline-5.4B        & \texttt{XiaodongChen/Llama-3.1-5.4B} \\
LLMStreamline-4.7B        & \texttt{XiaodongChen/Llama-2-4.7B} \\
LRC-1.5B        & \texttt{JitaiHao/LRC-1.5B-Base} \\
LRC-1.7B          & \texttt{JitaiHao/LRC-1.7B-Base} \\
LRC-1.5B-SFT        & \texttt{JitaiHao/LRC-1.5B-SFT} \\
LRC-1.7B-SFT          & \texttt{JitaiHao/LRC-1.7B-SFT} \\
Gemma3-1B            & \texttt{google/gemma-3-1b-pt} \\
Gemma3-1B-Instruct            & \texttt{google/gemma-3-1b-it} \\
InternLM2-1.8B       & \texttt{internlm/internlm2-1\_8b} \\
InternLM2-1.8B-Instruct       & \texttt{internlm/internlm2-chat-1\_8b} \\ % 此处修复了下划线
SmolLM2-1.7B         & \texttt{HuggingFaceTB/SmolLM2-1.7B} \\
SmolLM2-1.7B-Instruct         & \texttt{HuggingFaceTB/SmolLM2-1.7B-Instruct} \\
MiniCPM-1.2B         & \texttt{openbmb/MiniCPM-1B-sft-bf16} \\
Llama-3.2-1B         & \texttt{meta-llama/Llama-3.2-1B} \\
Llama-3.2-1B-Instruct         & \texttt{meta-llama/Llama-3.2-1B-Instruct} \\
Minitron-4B          & \texttt{nvidia/Minitron-4B-Base} \\
Minitron-4B-Instruct          & \texttt{nvidia/Nemotron-Mini-4B-Instruct} \\
Gemma3-4B            & \texttt{google/gemma-3-4b-pt} \\
Gemma3-4B-Instruct            & \texttt{google/gemma-3-4b-it} \\
Llama-2-7b                   & \texttt{meta-llama/Llama-2-7b-hf} \\
Llama-2-7b-chat                 & \texttt{meta-llama/Llama-2-7b-chat-hf} \\
Llama-3.2-3B         & \texttt{meta-llama/Llama-3.2-3B} \\
LRC-4B          & \texttt{JitaiHao/LRC-4B-Base} \\
LRC-4B-SFT          & \texttt{JitaiHao/LRC-4B-SFT} \\
\bottomrule
\end{tabular}
} % 结束缩放
\end{table}

\subsection{Detailed Benchmark Descriptions and Evaluation Settings}
\label{appendix:benchmarks}

All evaluations are conducted using the lm-evaluation-harness with transformers as the inference backend. To ensure reproducibility, we maintain the following default configurations: GSM8K is evaluated in a 5-shot setting, MBPP in a 3-shot setting, and all other benchmarks are conducted under 0-shot conditions. Below, we provide detailed descriptions of the benchmarks categorized by the core ability they assess and the reported metrics. It is important to distinguish the evaluation paradigms: tasks under general ability are formulated as discriminative multiple-choice questions, whereas multi-step reasoning tasks require the model to generate \emph{generative} intermediate reasoning chains alongside the final answer.

% \CTODO{下面的格式要调整一下，太 GenAI 了现在 lol}

% \CTODO{我调整了一下，霖爷看看有啥问题不}

\begin{table}[t]
\centering
\caption{Evaluation Benchmarks and Metrics}
\label{tab:benchmarks}
\resizebox{\textwidth}{!}{
\begin{tabular}{lllc}
\toprule
\textbf{Category} & \textbf{Subcategory} & \textbf{Benchmark} & \textbf{Metric} \\
\midrule

\multirow{7}{*}{\textbf{General Ability}} 
& \multirow{2}{*}{Scientific Understanding and Reading Comprehension}
& ARC-E/C~\cite{clark2018think} & Acc Norm \\
& 
& BoolQ~\cite{clark2019boolq} & Acc Norm \\

\cmidrule(lr){2-4}
& \multirow{3}{*}{Commonsense Understanding}
& PIQA~\cite{bisk2020piqa} & Acc Norm \\
&
& WinoGrande~\cite{sakaguchi2021winogrande} & Acc \\
&
& HellaSwag~\cite{zellers2019hellaswag} & Acc Norm \\

\cmidrule(lr){2-4}
& \multirow{2}{*}{World Knowledge \& Truthfulness}
& MMLU~\cite{hendrycks2020measuring} & Acc \\
&
& TruthfulQA~\cite{lin2022truthfulqa} & MC2 \\

\midrule

\multirow{3}{*}{\textbf{Multi-step Reasoning}}
& Mathematical Reasoning
& GSM8K~\cite{cobbe2021training} & Exact Match \\

\cmidrule(lr){2-4}
& \multirow{2}{*}{Code Generation}
& HumanEval~\cite{chen2021evaluating} & Pass@1 \\
&
& MBPP~\cite{austin2021program} & Pass@1 \\

\bottomrule
\end{tabular}
}
\end{table}

\paragraph{General Ability.}
This category assesses the model’s foundational language understanding, including factual knowledge acquisition, commonsense reasoning, and question answering over daily and scientific contexts. 
An overview of the benchmarks and evaluation metrics is summarized in Table~\ref{tab:benchmarks}.

\textbf{Scientific Understanding and Reading Comprehension.}
These benchmarks probe the model’s foundational ability to interpret scientific concepts and extract factual information from structured contexts.
\begin{itemize}[leftmargin=*]
    \item ARC-E/C~\cite{clark2018think}: The AI2 Reasoning Challenge consists of elementary and challenge-level science questions. We report Acc Norm.
    \item BoolQ~\cite{clark2019boolq}: A reading comprehension dataset for yes/no questions, evaluating fundamental language understanding. We report Acc Norm.
\end{itemize}

\textbf{Commonsense Understanding.}
These benchmarks evaluate the model’s grasp of everyday situations, physical interactions, and implicit human knowledge, which are essential for generating coherent and natural responses.
\begin{itemize}[leftmargin=*]
    \item PIQA~\cite{bisk2020piqa}: Evaluates physical commonsense reasoning. We report Acc Norm.
    \item WinoGrande~\cite{sakaguchi2021winogrande}: A benchmark for commonsense reasoning with pronoun resolution. We report Acc.
    \item HellaSwag~\cite{zellers2019hellaswag}: Challenges models to predict the most plausible continuation of a sentence. We report Acc Norm.
\end{itemize}

\textbf{World Knowledge and Truthfulness.}
These datasets measure the breadth, depth, and reliability of the model’s factual knowledge across diverse domains, with an emphasis on resisting common misconceptions and hallucinated content.
\begin{itemize}[leftmargin=*]
    \item MMLU~\cite{hendrycks2020measuring}: A massive multitask language understanding benchmark covering 57 subjects. We report Acc.
    \item TruthfulQA~\cite{lin2022truthfulqa}: Designed to evaluate the truthfulness of model responses. We adopt the MC2 metric.
\end{itemize}

\paragraph{Multi-step Reasoning.}
This category evaluates the model’s ability to perform multi-step logical reasoning and to synthesize structured and functional outputs. 
The evaluated tasks span mathematical problem solving and code generation, as detailed in Table~\ref{tab:benchmarks}.

\textbf{Mathematical Reasoning.}
This benchmark assesses numerical reasoning over natural language descriptions, where correct solutions require multiple intermediate reasoning steps.
\begin{itemize}[leftmargin=*]
    \item GSM8K~\cite{cobbe2021training}: A benchmark of high-quality grade-school math word problems. We use the Exact Match metric, with final numerical answers parsed using flexible-extract.
\end{itemize}

\textbf{Code Generation.}
These benchmarks evaluate the model’s ability to synthesize correct and executable programs from natural language specifications.
\begin{itemize}[leftmargin=*]
    \item HumanEval~\cite{chen2021evaluating} and MBPP~\cite{austin2021program}: Python programming benchmarks evaluated using Pass@1.
\end{itemize}

\section{Extra Ablations}

\subsection{Training Dynamics and Convergence Analysis}
\label{Appendix:Training Dynamics}

To investigate the stability and efficiency of our distillation process, we analyze the performance evolution of LRC-1.5B-Base and RED-1.5B-Base across the entire training trajectory. Both models are trained on a total budget of 10 billion tokens. We evaluate checkpoints at four representative stages to monitor the acquisition of general knowledge (MMLU) and the preservation of multi-step reasoning (GSM8K).

The detailed evaluation results are presented in Table \ref{tab:training_dynamics}. While both models exhibit a similar and steady upward trend on MMLU, eventually converging to a score of 0.50, the reasoning trajectory of LRC-1.5B-Base is significantly lagged and suppressed. In contrast, RED-1.5B-Base maintains a robust and monotonic improvement in both metrics, suggesting that our method avoids the reasoning collapse. Notably, at only 2.88B tokens, RED-1.5B-Base achieves a GSM8K score of 0.27, which already outperforms the final converged performance of LRC-1.5B-Base (0.04 at 10B tokens). It is worth noting that the LRC-1.5B-Base and RED models reported here are trained on the same codebase environment and data. KL temperature of LRC is set as $40$, which is consistent with its original setting. However, their GSM8K results are lower than those of the official checkpoint (0.21 → 0.04). The results presented in the main text correspond to the official checkpoint.

\begin{table}[htbp]
\centering
\caption{Detailed performance of LRC-1.5B-Base vs. RED-1.5B-Base over 10B training tokens. Metrics reported are MMLU and GSM8K. Our model (\textbf{RED}) consistently exhibits higher reasoning throughout the training duration.}
\label{tab:training_dynamics}
% \begin{small}
\begin{tabular}{lcccc>{\columncolor{colorFull}}c>{\columncolor{colorFull}}c}
\toprule
\multirow{2}{*}{\textbf{Training Tokens}} & \multirow{2}{*}{\textbf{Steps}} & \multicolumn{2}{c}{\textbf{LRC-1.5B-Base}} & & \multicolumn{2}{>{\columncolor{colorFull}}c}{\textbf{RED-1.5B-Base (Ours)}} \\ 
\cmidrule{3-4} \cmidrule{6-7}
& & MMLU & GSM8K & & MMLU & GSM8K \\ 
\midrule
2.88 Billion & 60k & 0.44 & 0.05 & & 0.44 & \textbf{0.27} \\
6.73 Billion & 140k & 0.49 & 0.05 & & 0.49 & \textbf{0.39} \\
8.65 Billion & 180k & 0.49 & 0.13 & & 0.49 & \textbf{0.42} \\
10.00 Billion & 208k & 0.50 & 0.04 & & \textbf{0.50} & \textbf{0.44} \\ 
\bottomrule
\end{tabular}
% \end{small}
\end{table}

\subsection{Robustness Analysis of Calibration Datasets}
\label{Appendix:The Influence of Calibration Datasets}

In this section, we investigate the sensitivity of our channel-selection mechanism to the choice of calibration datasets and their respective sample sizes. Our empirical findings demonstrate two key properties: (i) the subset of identified ``important'' channels exhibits a high degree of overlap across different data distributions and sample scales, and (ii) the selection process is remarkably sample-efficient.

Critically, unlike traditional pruning-and-retraining methods that statically discard weights from the teacher model, our approach utilizes the selected channel indices solely to initialize the projection matrices. Since these matrices remain learnable and are updated during the subsequent distillation process, the performance of the student model is less coupled to the initial calibration set.

\begin{table}[h]
\centering
\caption{Overlap analysis of selected channels for RED-1.5B under various calibration settings. The baseline is highlighted (Nemotron, $N=15$). The high overlap ratios across different datasets and sizes demonstrate the robustness of our selection method.}
\label{tab:calibration_overlap}
\vspace{2mm}
\begin{tabular}{lccc}
\toprule
\textbf{Calibration Dataset} & \textbf{Sample Size ($N$)} & \textbf{Overlap Count} & \textbf{Overlap Ratio (\%)} \\ 
\midrule
\multirow{4}{*}{Nemotron-Pretraining} & 3  & 1,295 & 84.31\% \\
                                      & 9  & 1,429 & 93.03\% \\
                                      & \textbf{15 (Baseline)} & \textbf{1,536} & \textbf{100.00\%} \\
                                      & 30 & 1,452 & 94.53\% \\
\midrule
\multirow{4}{*}{SlimOrca}             & 3  & 1,212 & 78.91\% \\
                                      & 9  & 1,251 & 81.45\% \\
                                      & 15 & 1,268 & 82.55\% \\
                                      & 30 & 1,298 & 84.51\% \\
\bottomrule
\end{tabular}
\end{table}

Table~\ref{tab:calibration_overlap} presents the overlap of selected channels for the RED-1.5B model, where we select the top $1,536$ dimensions out of a $3,072$ hidden dimension space. Taking our default setting (15 samples from the Nemotron-Pretraining dataset) as the baseline, we compare the Jaccard similarity when varying the dataset to SlimOrca and scaling the number of samples from $1$ to $30$. The results indicate that even with as few as 3 samples, the overlap ratio remains above 80\%, and the core features identified are remarkably consistent across different data distributions, justifying the robustness of our initialization strategy.

\subsection{Using Base Model As the teacher}
\label{Appendix:Using Base Model As the teacher}
We compare the performance of LRC, RED and LLMStreamline when all utilise Llama3.1-8B-Base as the teacher model, as shown in Table \ref{tab:comparison_same_base_model}. Compared to LRC, RED still exhibits a significant recovery in multi-step reasoning capability, while demonstrating a strong performance advantage over LLMStreamline, which is a depth-reduced Distill method.

\begin{table*}[htbp]
\centering
\caption{Performance comparison with the same Base model as the teacher}
\label{tab:comparison_same_base_model}
\scriptsize
\setlength{\tabcolsep}{8pt}
\renewcommand{\arraystretch}{1.2}
\resizebox{.75\textwidth}{!}{
% \begin{tabular}{l | cc >{\columncolor[gray]{0.95}}c}
\begin{tabular}{l | cc >{\columncolor{colorFull}}c}
\toprule
\textbf{Paradigm} & \multicolumn{3}{c}{\cellcolor{colorDistill}\textbf{EDistill}} \\
\textbf{Model} & \textbf{LLMStr.} & \textbf{LRC} & \textbf{RED (Ours)} \\
\textbf{(Size)} & \textbf{(4.0B)} & \textbf{(4.0B)} & \textbf{(4.0B)} \\
\midrule
Teacher   & Llama3.1-8B-Base & Llama3.1-8B-Base & Llama3.1-8B-Base \\
\# Tokens & 1.3B             & 18B              & 18B              \\
Dataset   & SlimPajama       & Mixed-2.0        & Mixed-2.0        \\
\midrule
\rowcolor[HTML]{F9F9F9} \multicolumn{4}{l}{\emph{General Ability}} \\
MMLU       & 0.23 & 0.50 & \textbf{0.52} \\
PIQA       & 0.71 & 0.75 & \textbf{0.78} \\
ARC-E      & 0.52 & 0.72 & \textbf{0.75} \\
ARC-C      & 0.28 & 0.44 & \textbf{0.47} \\
BoolQ      & 0.48 & 0.74 & \textbf{0.75} \\
WinoGrande & 0.51 & 0.65 & \textbf{0.68} \\
HellaSwag  & 0.46 & 0.68 & \textbf{0.72} \\
TruthfulQA & 0.40 & 0.43 & 0.43 \\
\cmidrule(lr){1-4}
\textbf{Gen. Avg.} & 0.45 & 0.61 & \textbf{0.64} \\
\midrule
\rowcolor[HTML]{F9F9F9} \multicolumn{4}{l}{\emph{Multi-step Reasoning (Math \& Code)}} \\
GSM8K      & 0.01 & 0.15 & \textbf{0.27} \\
MBPP       & 0.00 & 0.13 & \textbf{0.21} \\
HumanEval  & 0.01 & 0.05 & \textbf{0.16} \\
\cmidrule(lr){1-4}
\textbf{Reas. Avg.} & 0.01 & 0.11 & \textbf{0.21} \\
\bottomrule
\end{tabular}
}
\end{table*}

\subsection{Benchmarking Base vs. Instruct Models}
\label{Appendix:Benchmarking Base vs. Instruct Models}
In our primary experiments, we prioritize the comparison of Base models over their Instruct counterparts. This is because the performance of modern Instruct models is increasingly driven by specialized data mixtures. A common industry practice is to incorporate substantial amounts of domain-specific data, particularly \textbf{mathematical reasoning and programming code}, during the SFT or RLHF stages to improve benchmark scores. Moreover, reinforcement learning (RL) is often employed when training Instruct models, which further enhances multi-step reasoning ability. A prominent example is Gemma3-4B, which sees its GSM8K performance nearly double (0.38 → 0.76) after instruction tuning, likely due to such targeted data enrichment.

By focusing on \emph{Base} models, we ensure a more controlled assessment of the architectural and distillation-induced strengths. Although our RED-Base models are distilled from a high-capacity Instruct teacher, they are evaluated here as foundational initializations without undergoing any further supervised fine-tuning (SFT) or RL. As shown in Table \ref{tab:base_vs_instruct}, our \textbf{RED-Base} models (1.5B and 4B) already outperform several Instruct models in both general knowledge (MMLU) and reasoning (GSM8K). 

% --- Table: Base vs Instruct Comparison ---
\begin{table}[htbp]
\centering
\caption{Comparison of Base and Instruct versions. Scores are reported for MMLU and GSM8K. The variance in $\Delta$ GSM highlights how instruction tuning recipes (often involving heavy math/code data) can lead to inconsistent trends across model series.}
\label{tab:base_vs_instruct}
% \begin{small}
\begin{tabular}{lcccc}
\toprule
\textbf{Model Family} & \textbf{Base (MMLU/GSM8K)} & \textbf{Instruct (MMLU/GSM8K)} & \textbf{$\Delta$ MMLU} & \textbf{$\Delta$ GSM8K} \\ \midrule
InternLM2-1.8B         & 0.41 / 0.23             & 0.43 / 0.34                 & +0.02                  & +0.11                \\
Gemma3-1B            & 0.24 / 0.03             & 0.39 / 0.25                 & +0.15                 & +0.22                \\
Gemma3-4B            & 0.58 / 0.38             & 0.58 / 0.76                 & 0.00                  & +0.38                \\
Minitron-4B          & 0.56 / 0.25             & 0.57 / 0.31                 & +0.01                 & +0.06                \\ \midrule
\rowcolor{colorFull} \textbf{RED-1.5B (Ours)} & \textbf{0.51 / 0.44} & -                           & -                     & -                    \\
\rowcolor{colorFull} \textbf{RED-4B (Ours)}   & \textbf{0.62 / 0.49} & -                           & -                     & -                    \\ \bottomrule
\end{tabular}
% \end{small}
\end{table}

\subsection{Irreversibility of eRank and Reasoning Collapse}
\label{Appendix:Irreversibility of Dimensional and Reasoning Collapse}
We investigate whether standard instruction tuning can mitigate the \emph{eRank collapse} and \emph{reasoning collapse} observed in specific distilled models, such as the LRC series. 

\begin{figure}[htbp]
    \centering
    \includegraphics[width=\linewidth]{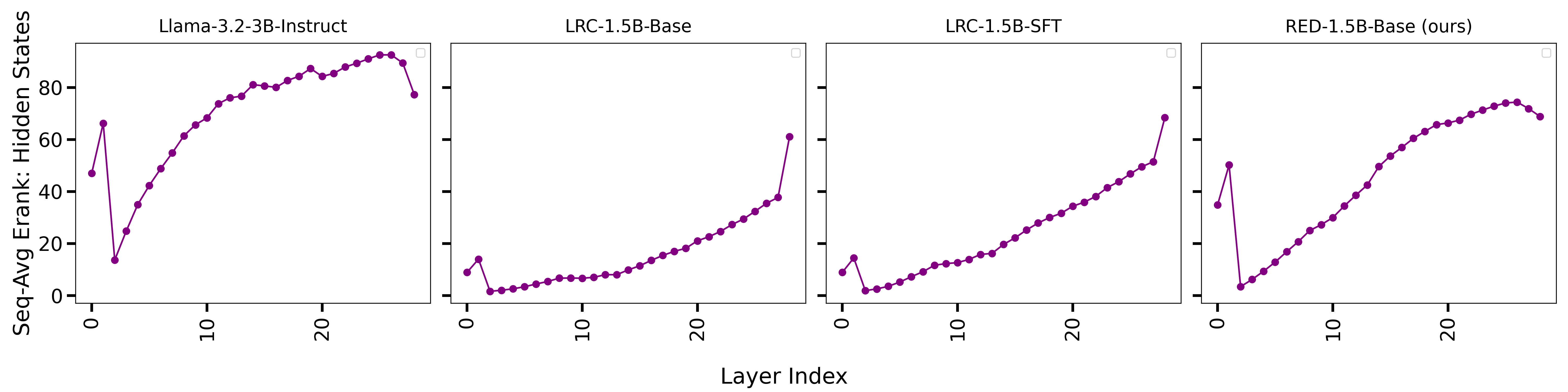} 
    \caption{Comparison of eRank across different layers.}
    \label{fig:erank_comparison}
\end{figure}

As a case study, we evaluate the LRC-1.5B-Base model before and after SFT on 0.2B tokens of the Ultra-chat dataset. As illustrated in Figure~\ref{fig:erank_comparison}, the SFT process fails to recover the model's representation quality; the eRank of the SFT version remains significantly lower than that of the teacher model. Quantitatively, we observe a degradation in mathematical reasoning: the GSM8K accuracy drops from 0.21 (Base) to 0.18 (SFT). This suggests that general-purpose conversational SFT is insufficient to resolve underlying \emph{eRank collapse} and \emph{reasoning collapse}. In contrast, our \textbf{RED-1.5B-Base} model, through optimized initialization, inherently maintains a high eRank and achieves a GSM8K score of 0.44 without specialized RL or mathematical fine-tuning. This evidence supports the hypothesis that the reasoning collapse of LRC is deeply rooted in the base distillation phase and may be irrecoverable if lost at that stage.

% --- 图片加载代码 ---

\subsection{Effects of Different Importance Estimation Strategies.} 
\label{Appendix: Effects of Different Importance Estimation Strategies}

We further investigate the sensitivity of our RED framework to various importance estimation strategies. Our default strategy is based on the mean absolute magnitude of hidden representations. We compare this with two alternative strategies: (i) the Minitron-style approach, which calculates magnitudes at the output of each normalization layer, and (ii) a QR-based~\cite{antil2018note} strategy that selects principal components from hidden representations (see Appendix~\ref{Appendix:Details of Other Important Score Calculation Strategies} for technical details). As summarized in Figure~\ref{fig:importance_strategies}, the performance of RED-1.5B with all three strategies is remarkably consistent, with MMLU scores ranging from 0.50 to 0.51 and GSM8K scores from 0.44 to 0.45. This marginal variance demonstrates the robustness of our RED method, indicating that its efficacy in width reduction is not strictly tied to a specific importance metric.

\begin{figure}[h]
\centering
\includegraphics[width=0.7\linewidth]{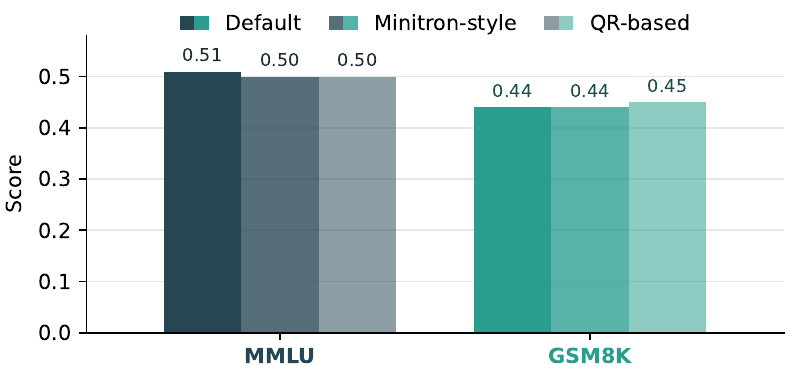}
\caption{Comparison of Importance Estimation Strategies.}
\label{fig:importance_strategies}
\end{figure}

\section{Details of Other Importance Estimation Strategies}
\label{Appendix:Details of Other Important Score Calculation Strategies}

To further validate our approach, we consider two alternative importance estimation strategies. Both methods utilize the same calibration dataset $\mathcal{D}_{\mathrm{pre}}$.

\subsection{QR-Decomposition-Based Importance Estimation}
\label{subsec:qr_importance}

This strategy treats the importance estimation as a column subset selection problem, aiming to identify channels that are most linearly independent and capture the maximum variance of the activation space.

For each layer $i$, we first concatenate the hidden representations $X_i^{(k)}$ for all sequences $k=1, \dots, |\mathcal{D}_{\mathrm{pre}}|$ along the sequence dimension to form a large activation matrix $\mathbf{A}_i \in \mathbb{R}^{(|\mathcal{D}_{\mathrm{pre}}| \cdot L) \times D}$, where each column represents the activations of a specific channel across all tokens in the calibration set.

We then perform QR decomposition with column pivoting on $\mathbf{A}_i$. This algorithm iteratively selects the column that has the largest residual norm relative to the subspace spanned by the previously selected columns. The physical meaning of this process is captured by the resulting upper triangular matrix $\mathbf{R}_i$: the absolute value of its $j$-th diagonal element, $|\mathbf{R}_i^{jj}|$, represents the ``unique contribution'' of the $j$-th selected channel. A larger value indicates that the channel contains significant information that cannot be linearly represented by other already-selected channels.

After the decomposition, we assign these diagonal absolute values as the importance scores for their corresponding original channel indices, yielding the layer-wise scores $S_{i,j}$. These scores are then aggregated across all $N$ layers to compute the global importance $\bar{s}_j$ and determine the final index set $G$, following the same aggregation and top-indices selection logic described in the main text.

\subsection{Mintron-Style Activation Estimation}
The second alternative follows the Mintron-style approach. The mathematical formulation of this strategy is identical to the \emph{Activation-Based Importance Estimation} described in the main text, with the sole distinction being the location of the activation.
Specifically, while our default method monitors the input and output of each Transformer layer, the Mintron-style method computes the importance scores based on the outputs of each normalization layer (e.g., RMSNorm) within the Transformer layer. This strategy aims to capture the importance of features after they have been re-centered and re-scaled by the normalization statistics.

% \section{Effects of Teacher Model}
% \label{Appendix:Effects of Teacher Model}

% \section{Impact of Instruction Tuning}
% \label{Appendix:Effects of Instruct-tuning}

% In this section, we provide a detailed analysis of the effects of Supervised Fine-Tuning (SFT) on model ability, specifically focusing on the persistence of dimensional collapse and the challenges of benchmarking instruct-tuned models.

\section{Extra Definition}

\subsection{RMSNorm}
\label{Appendix:RMSNorm}
The RMSNorm function for a generic input vector $x \in \mathbb{R}^D$ at a specific layer component $k$ is defined as:
\begin{equation}
    \mathrm{RMSNorm}_{k}(x) = \frac{x}{\sqrt{\frac{1}{D} \|x\|_2^2 + \epsilon}} \odot g_{k}.
\end{equation}
where $g_{k} \in \mathbb{R}^D$ is the learnable scaling parameter unique to that normalization layer, and $\epsilon$ is a small constant that ensures numerical stability.

\subsection{Calculation of Token Entropy}

\label{appendix:token_entropy}

To measure the predictive uncertainty across different tokens, we define the \textbf{Token Entropy} based on the output probability distributions. Recall that $X \in \mathbb{R}^{L \times D}$ is the representation matrix for a sequence of length $L$. For the $l$-th token ($l \in \{1, \dots, L\}$), its feature vector is denoted as $X^l \in \mathbb{R}^D$. The corresponding probability distribution $P^l \in \mathbb{R}^{\mathrm{Voc}}$ is generated via the unembedding layer:
\begin{equation}
    P^{l} = \mathrm{softmax} \left( \mathrm{Norm}_{\mathrm{final}}(X^l) W_{u} \right),
\end{equation}
where $W_u \in \mathbb{R}^{D \times \mathrm{Voc}}$ is the unembedding weight matrix. 

Specifically, the Shannon entropy for the $l$-th token, denoted as $H(P^l)$, is calculated as:
\begin{equation}
    H(P^l) = - \sum_{j=1}^{\mathrm{Voc}} P^l_j \log P^l_j,
\end{equation}
where $P^l_j$ represents the probability of the $j$-th position in the vocabulary. For a single sequence, we compute the average entropy across all $L$ tokens:
\begin{equation}
    \bar{H} = \frac{1}{L} \sum_{l=1}^{L} H(P^l).
\end{equation}
In our experiments, we sample a set of sequences. The final reported Token Entropy is defined as the mean value across all sampled sequences. This metric reflects the average uncertainty of the model's predictions.

\end{document}